\DeclareMathOperator*{\argmax}{arg\,max}
\newcommand{\cmark}{\ding{51}}
\newcommand{\xmark}{\ding{55}}
\newcommand\blfootnote[1]{
  \begingroup
  \renewcommand\thefootnote{}\footnote{#1}
  \addtocounter{footnote}{-1}
  \endgroup
}
\theoremstyle{plain}
\newtheorem{theorem}{Theorem}[section]
\newtheorem{lemma}{Lemma}[section]
\newtheorem{corollary}{Corollary}[theorem]
\newtheorem{proposition}{Proposition}[section]
\theoremstyle{definition}
\newtheorem{assumption}{Assumption}[section]
\theoremstyle{remark}
\begin{document}

\begin{frontmatter}
\title{Finite-Sample Analysis of Off-Policy Natural Actor-Critic Algorithm
}

\begin{aug}
\author[A]{\fnms{Sajad} \snm{Khodadadian}\ead[label=e2,mark]{zchen458@gatech.edu}},
\author[A]{\fnms{Zaiwei} \snm{Chen}\ead[label=e1,mark]{skhodadadian3@gatech.edu}},
\and
\author[A]{\fnms{Siva Theja} \snm{Maguluri}\ead[label=e3,mark]{siva.theja@gatech.edu}}

\address[A]{
Geogia Institute of Technology,
\printead{e1,e2,e3}}

\blfootnote{Equal contribution between Zaiwei Chen and Sajad Khodadadian}

\end{aug}

\begin{abstract}
In this paper, we provide finite-sample convergence guarantees for an off-policy variant of the natural actor-critic (NAC) algorithm based on Importance Sampling. In particular, we show that the algorithm converges to a global optimal policy with a sample complexity of $\mathcal{O}(\epsilon^{-3}\log^2(1/\epsilon))$ under an appropriate choice of stepsizes. In order to overcome the issue of large variance due to Importance Sampling, we propose the $Q$-trace algorithm for the critic, which is inspired by the V-trace algorithm \cite{espeholt2018impala}. This enables us to explicitly control the bias and variance, and characterize the trade-off between them. 
As an advantage of off-policy sampling, a major feature of our result is that we do not need any additional assumptions, beyond the ergodicity of the Markov chain induced by the behavior policy. 
\end{abstract}

\end{frontmatter}

\section{Introduction}\label{sec:intro}

Reinforcement Learning (RL) is a paradigm where an agent aims at maximizing its cumulative reward by searching for an optimal policy, in an environment modeled as a Markov Decision Process (MDP) \cite{sutton2018reinforcement}. RL algorithms have achieved tremendous successes in a wide range of applications such as self-driving cars with Deep Deterministic Policy Gradient (DDPG) \cite{lillicrap2015continuous}, and AlphaGo in the game of Go \cite{silver2016mastering}. The algorithms in RL can be categorized into value space methods, such as $Q$-learning \cite{watkins1992q}, TD-learning \cite{sutton1988learning}, and policy space methods, such as actor-critic (AC) \cite{konda2000actor}. Despite great empirical successes \cite{wang2016sample, bahdanau2016actor}, the finite-sample convergence of AC type of algorithms are not completely characterized theoretically.

An AC algorithm can be thought as a generalized policy iteration \cite{puterman1995markov}, and consists of two phases, namely actor and critic. The objective  of the actor is to improve the policy, while the critic aims at evaluating the performance of a specific policy. A step of the actor can be thought as a step of Stochastic Gradient Ascent \cite{bottou2018optimization} with preconditioning. An identity pre-conditioner corresponds to regular AC, while a pre-conditioning with fisher information results in natural actor-critic (NAC) \cite{peters2008natural}. As for the critic, to perform a policy evaluation step, it usually uses the TD-learning method and its variants, such as TD$(0)$, or more generally, $n$-step TD \cite{sutton1988learning}. Moreover, such learning process can be done in an on or off-policy manner \cite{degris2012off}.

\textit{Off-policy Actor-Critic.} 
In on-policy AC, the data samples are generated in an online manner, always sampling based on the current policy at hand. In contrast, in this paper, we focus on the off-policy AC, where the algorithm updates the policy based on the data collected (possibly in the past) by a fixed policy, called the \textit{behavior policy}. Off-policy learning is inevitable in high-stakes applications such as healthcare \cite{dann2019policy}, education \cite{mandel2014offline}, robotics \cite{gu2017deep} and clinical trials \cite{liu2018representation, gottesman2020interpretable}. The agent there may not have direct access to the environment in order to perform online sampling, and one has to work with limited historical data that is collected under a fixed behavior policy. Moreover, off-policy AC enables off-line learning by decoupling data collection from learning, and is observed to extract the maximum possible utility out of limited available data  \cite{levine2020offline}.

To account for the difference between the behavior policy and the target policy \cite{geweke1989bayesian} in off-policy algorithms, a popular approach is to use Importance Sampling (IS). The IS ratio, however, can be large in some cases, which might result in high variance \cite{glynn1989importance,precup2000eligibility}. This phenomenon will be illustrated in detail in Section \ref{subsec:Q-trace}. In order to avoid such high variance, one idea is to truncate the IS ratio \cite{ionides2008truncated}, which leads to off-policy TD-learning algorithms such as Retrace$(\lambda)$ \cite{munos2016safe} and V-trace \cite{espeholt2018impala}.

\begin{table*}[t]\label{table: results}
\centering
\caption{Summary of the results in the literature $^1$}
\renewcommand{\arraystretch}{1.25} 
\begin{tabular}{ |c|c|c|c|c|c|  }
 \hline
 \multirow{2}{4.5em}{\centering Algorithm} & \multirow{2}{6em}{\centering Reference} & \multirow{2}{5.5em}{\centering Sample \\
Complexity $^2$} & \multirow{2}{4em}{\centering Single\\trajectory} & \multirow{2}{20 em}{\centering Comments}  \\
  &  &  &  &   \\
 \hline
 \multirow{3}{4.5em}{\centering AC}  &  \cite{wang2019neural} &$\Tilde{\mathcal{O}}(\epsilon^{-6})$ &  \xmark  &  \multirow{3}{20 em}{\centering Function Approx: 
 Sample complexity to ensure $\mathbb{E}[\|\nabla V^{\pi_t}\|^2] \leq \epsilon + \mathcal{E}_{\text{bias}}$ 
 } \\ \cline{2-4}
 & \cite{qiu2019finite} &$\Tilde{\mathcal{O}}(\epsilon^{-4})$&  \xmark & \\\cline{2-4}
 &\cite{kumar2019sample}&$\Tilde{\mathcal{O}}(\epsilon^{-4})$& \xmark &\\
\cline{1-5}
\multirow{3}{4.5em}{\centering NAC}  & \cite{wang2019neural} &$\Tilde{\mathcal{O}}(\epsilon^{-14})$ &   \xmark &  \multirow{2}{20 em}{\centering Function Approx: Sample complexity to ensure $V^{\pi^*}- V^{\pi_t} \leq \epsilon+\mathcal{E}_{\text{bias}}$}\\
\cline{2-4}
  & \cite{agarwal2019theory}    &$\Tilde{\mathcal{O}}(\epsilon^{-6})$
  & \xmark &  \multirow{2}{10 em}{}\\
\cline{2-5}
  & \cite{khodadadian2021finite}    &$\Tilde{\mathcal{O}}(\epsilon^{-4})$
  & \cmark &  \multirow{3}{15 em}{Tabular RL: Convergence to global optimum $V^{\pi^*}- V^{\pi_t} \leq \epsilon$ }\\ 
\cline{1-4}
\multirow{2}{4.5 em}{\centering Off-Policy\\ NAC}
  & \multirow{2}{6 em}{\centering Our work}   &   \multirow{2}{5.5 em}{\centering $\Tilde{\mathcal{O}}(\epsilon^{-3})$} &  \multirow{2}{4 em}{\centering \cmark}  & \\ 
&     &   &    & \\
 \hline
\end{tabular}

\justify
{\small $^{1}$ 
There are two other related works \cite{xu2020improving} and \cite{xu2020non}. \cite{xu2020improving} claims a sample complexity of $\Tilde{\mathcal{O}}(\epsilon^{-2})$ for NAC. \cite{xu2020non} claims a sample complexity of $\Tilde{\mathcal{O}}(\epsilon^{-2.5})$ for AC and $\Tilde{\mathcal{O}}(\epsilon^{-4})$ for NAC. In our opinion, the interpretation of the convergence results in terms of sample complexity in both papers is incorrect.  In case one accepts the interpretation in \cite{xu2020improving,xu2020non}, our results imply a sample complexity of  $\Tilde{\mathcal{O}}(\epsilon^{-1/N})$ for \textit{an arbitrary} $N\in\mathbb{Z}^+$. See Appendix \ref{ap:literature:convergence_bound} for a detailed explanation.\\
$^2$ In this table, $\Tilde{O}(\cdot)$ ignores all the logarithmic terms. See Appendix \ref{ap:compute-sample-complexity} for detailed calculations and comments regarding the sample complexities presented here.}

\end{table*}

\subsection{Main Contributions}\label{subsec:contribution}
In this paper, we study finite-sample convergence guarantees of an off-policy variant of the NAC algorithm. Our main contributions are threefold.

\textit{$Q$-Trace for Off-Policy TD-Learning: Algorithm and Finite-Sample Bounds.} To estimate the $Q$-function for the critic, we propose an off-policy TD-learning algorithm called $Q$-trace. This is inspired by the V-trace algorithm \cite{espeholt2018impala} to estimate the $V$-function. We establish the finite-sample convergence bounds of $Q$-trace, and show how the truncated IS ratios can be used to explicitly trade-off the truncation bias and the variance.

\textit{Finite-Sample Bounds for Off-Policy NAC.} 
Based on the $Q$-trace algorithm for the critic, we propose an off-policy NAC algorithm, which uses only a \textit{single trajectory} of samples.
To the best of our knowledge, we establish the first known finite-sample convergence guarantees of an \textit{off-policy} NAC algorithm. Based on that, we show that in order to obtain an $\epsilon$-optimal policy, the amount of samples required is of the size $\mathcal{O}(\epsilon^{-3}\log^2(1/\epsilon))$. 
This shows that the off-policy NAC outperforms even the best known theoretical convergence bounds of \textit{on-policy} NAC algorithms. See Table \ref{table: results} for more details.  

\textit{Exploration through Off-Policy Sampling.} 
While off-policy learning is primarily motivated by practical constraints, in this paper, we demonstrate that off-policy sampling leads to natural exploration. By exploiting off-policy sampling, we do not require either hard-to-verify assumptions made in the literature to ensure exploration \cite{xu2020improving, wu2020finite}, or additional exploration steps in the algorithm that slow down the convergence \cite{khodadadian2021finite}.

\subsection{Related Work}\label{subsec:literature}
Two popular algorithms for finding the optimal policy of an MDP are value iteration and policy iteration, which corresponds to $Q$-learning and AC in Reinforcement Learning when the underlying model is unknown.

\textit{The $Q$-learning algorithm}, first proposed in \cite{watkins1992q} is one of the most celebrated value space methods for solving the RL problem \cite{sutton2018reinforcement}. Since the proposal, there has been a long line of work to establish the convergence properties of $Q$-learning. In particular, \cite{tsitsiklis1994asynchronous,jaakkola1994convergence,bertsekas1996neuro,borkar2000ode, borkar2009stochastic} characterize the asymptotic convergence of $Q$-learning, \cite{beck2012error, beck2013improved, wainwright2019stochastic, chen2020finite,chen2021finite} study the finite-sample convergence bound in the mean-square sense, and \cite{even2003learning, li2020sample, qu2020finite} study the high-probability convergence bounds.

In AC framework, usually the actor uses Policy Gradient (PG) to perform policy update, and the critic uses TD-learning method to perform policy evaluation.

\textit{The PG method} was shown to converge in \cite{sutton1999policy, baxter2001infinite, pirotta2015policy, haarnoja2017reinforcement}. Natural PG, which is a PG method with preconditioning, was proposed in \cite{kakade2001natural}. More recently, there has been a line of work to establish finite-sample convergence bound of (natural) PG algorithm \cite{even2009online, azar2012dynamic, geist2019theory, agarwal2019theory, wang2019neural, liu2019neural, shani2020adaptive, mei2020global, cen2020fast, bhandari2020note}.

\textit{TD-learning method}, originally proposed in \cite{sutton1988learning}, represents a family of policy evaluation algorithms in RL. The asymptotic convergence of TD-learning has been established in \cite{tsitsiklis1994asynchronous,jaakkola1994convergence,borkar2000ode}. Furthermore, the finite-sample convergence bounds of TD-learning have been studied in \cite{dalal2018finite, lakshminarayanan2018linear, bhandari2018finite, srikant2019finite} in the on-policy setting. Off-policy variants of TD-learning such as Retrace$(\lambda)$, Tree-backup, and V-trace were studied in \cite{munos2016safe,precup2000eligibility,espeholt2018impala} respectively. Finite-sample bounds for V-trace are quantified in \cite{chen2020finite,chen2021finite}. 

\textit{Actor-critic}, as a stochastic variant of policy iteration, was proposed in \cite{barto1983neuronlike, borkar1997actor}, and later it has extended to function approximation setting \cite{konda2000actor} and NAC \cite{peters2008natural, NIPS2009_3767, NIPS2013_5184, bhatnagar2009natural}. Asymptotic convergence of AC algorithms
was studied in \cite{williams1990mathematical, konda2000actor, borkar1997actor, borkar2009stochastic, maei2018convergent, zhang2019convergence, zhang2020provably}. Furthermore, there has been a flurry of recent work studying the finite-sample convergence of AC and NAC \cite{qiu2019finite, kumar2019sample, shani2020adaptive, wang2019neural, xu2020non, xu2020improving, wu2020finite, khodadadian2021finite}. The results are summarized in Table \ref{table: results}. Concurrent work \cite{lan2021policy} studies a variant of NAC with on-policy sampling and time-varying inverse temperature, and obtains an $\Tilde{\mathcal{O}}(\epsilon^{-2})$ sample complexity.

The rest of this paper is organized as follows. In Section \ref{sec:RL}, we first present the $Q$-trace algorithm for off-policy TD-learning. We then use it with the Natural Policy Gradient to get the off-policy NAC algorithm, and present the finite-sample convergence bounds and sample complexity analysis. In Section \ref{sec:proof}, we present the proof sketch of our main results, and conclude in Section \ref{sec:con}.

\section{Off-Policy Natural Actor-Critic: Algorithm and Finite-Sample Bounds}\label{sec:RL}
\subsection{Background on Reinforcement Learning}
We model our RL problem with an MDP which consists of a tuple of 5 elements $(\mathcal{S},\mathcal{A},\mathcal{R}, \mathcal{P},\gamma)$. Here $\mathcal{S}$ and $\mathcal{A}$ are finite sets of states and actions, $\mathcal{R}: \mathcal{S}\times\mathcal{A}\rightarrow [0,1]$ is the reward function, $\mathcal{P}: \mathcal{S}\times\mathcal{A}\rightarrow \Delta^{|\mathcal{S}|}$ (where $\Delta^{|\mathcal{S}|}$ is the probability simplex on $\mathbb{R}^{|\mathcal{S}|}$) is the collection of transition probabilities that are unknown, and $\gamma\in(0,1)$ is the discount factor. 

The dynamics of an MDP is as follows. At each time step $k$, the system is at some state $S_k$ of the environment. The agent chooses an action $A_k$ based on a policy $\pi$ at hand, $A_k\sim \pi(\cdot|S_k)$, and the system moves to a new state based on the transition probabilities $\mathbb{P}(S_{k+1}=\cdot|S_k,A_k)$, and induces a one-step reward $\mathcal{R}(S_k,A_k)$. The goal of the agent to find an optimal policy which maximizes the cumulative reward. Specifically, the value function of a policy $\pi$ is defined by $V^\pi(\mu) = \mathbb{E}[\sum_{k=0}^\infty \gamma^k \mathcal{R}(S_k,A_k)|S_0\sim \mu, A_k\sim\pi(\cdot|S_k)]$, where $\mu$ is an initial distribution over states. Then the goal is to find an optimal policy $\pi^*$ s.t.
\begin{align}\label{eq:optimization_problem}
    \pi^*\in \argmax_{\pi\in \Pi} V^{\pi}(\mu),
\end{align}
where $\Pi$ represents the set of all policies. 

\subsection{Natural Policy Gradient}

Policy gradient algorithms aim at solving the optimization problem \eqref{eq:optimization_problem} by using gradient ascent or its variants in the policy space. In particular, a Mirror Descent (MD) \cite{nemirovskij1983problem} update of policy with stepsize $\beta$ reads as:
\begin{align}
    \pi_{t+1}=\argmax_{\pi\in\Pi} \left\{\beta \langle\nabla V^{\pi_t}(\mu),\pi-\pi_t\rangle-B(\pi,\pi_t)\right\}\label{eq:MD_update},
\end{align}
where $B(\cdot,\cdot)$ is an appropriately chosen Bregman divergence between two policies. If we replace the Bregman divergence with 
$B(\pi,\pi_t)=\sum_s d_\mu^{\pi_t}(s)\mathcal{KL}(\pi(\cdot|s)|\pi_t(\cdot|s))$ in Eq. (\ref{eq:MD_update}), we get the Natural Policy Gradient (NPG) algorithm for MDPs. Here $d_\mu^{\pi}(s) = (1-\gamma)\sum_{j=0}^\infty \gamma^j\mathbb{P}^\pi(S_j=s\mid S_0\sim \mu)$ is the discounted state visitation distribution \cite{agarwal2019theory}, and $\mathcal{KL}(\cdot\mid\cdot)$ stands for the KL-Divergence \cite{cover1999elements}. It has been shown in \cite{agarwal2019theory} that the update equation \eqref{eq:MD_update} can be equivalently written as
\begin{align} \label{eq:exp_weight_update}
    \pi_{t+1}(a|s) = \frac{\pi_t(a|s)\exp(\beta Q^{\pi_t}(s,a))}{\sum_{a'}\pi_t(a'|s)\exp(\beta Q^{\pi_t}(s,a'))}, \forall\; s,a,
\end{align}
where $Q^{\pi}(s,a) = \mathbb{E}_\pi[\sum_{k=0}^\infty \gamma^k \mathcal{R}(S_k,A_k)|S_0=s,A_0=a]$ is the $Q$-function for policy $\pi$ \cite{puterman1995markov}. The update rule \eqref{eq:exp_weight_update} can be equivalently derived using the preconditioned gradient ascent (with the Moore–Penrose inverse of the Fisher information matrix as the pre-conditioner) on the dual space of the policy $\pi$. This interpretation of \eqref{eq:exp_weight_update} was presented in \cite{kakade2001natural,agarwal2019theory}.
Furthermore, an interpretation of \eqref{eq:exp_weight_update} in terms of Mirror Descent Modified Policy Iteration (MD-MPI) was presented in \cite{geist2019theory}. An important result about the NPG is that, although the objective function of \eqref{eq:optimization_problem} is not concave, it has been shown in \cite{agarwal2019theory} that the policies achieved by the MD update of \eqref{eq:exp_weight_update} converges to an optimal policy with rate $\mathcal{O}(1/t)$.

\begin{algorithm*}[t]\caption{$Q$-Trace}\label{alg:Q-trace}
\begin{algorithmic}[1] 
	\STATE {\bfseries Input:} $K$, $\alpha$, $Q_0$, $\pi$, $\bar{\rho}$, and $\bar{c}$, $\{(S_k,A_k)\}_{0\leq k\leq K+n}$ (generated by the behavior policy $\pi_b$)
	\FOR{$k=0,1,\cdots,K-1$}
	\STATE $\alpha_k(s,a)=\alpha\mathbb{I}_{\{(s,a)=(S_k,A_k)\}}$ for all $(s,a)$
	\STATE $\Delta_{k,i}=\mathcal{R}(S_i,A_i)+\gamma\rho_\pi(S_{i+1},A_{i+1})Q_k(S_{i+1},A_{i+1})-Q_k(S_i,A_i)$ for all $k\leq i\leq k+n-1$ \label{lst:line:blah2}
	\STATE $Q_{k+1}(s,a)=Q_k(s,a)+\alpha_k(s,a)\sum_{i=k}^{k+n-1}\gamma^{i-k}\prod_{j=k+1}^ic_\pi(S_j,A_j)\Delta_{k,i}$ for all $(s,a)$
	\ENDFOR
	\STATE\textbf{Output:} $Q_K$
\end{algorithmic}
\end{algorithm*}

Although the convergence result in  \cite{agarwal2019theory} is promising to find the optimal policy in an MDP, since we do not have access to the transition probabilities and so the $Q$-function in RL, we cannot update the policy according to Eq. \eqref{eq:exp_weight_update}. Natural Actor-Critic (NAC) algorithm, which is a sample-based variant of the update \eqref{eq:exp_weight_update}, proceeds as follows. In each iteration, first the critic generates an estimate $Q_{t}$ of the $Q$-function $Q^{\pi_t}$. Then the actor updates the policy according to Eq. \eqref{eq:exp_weight_update} with $Q^{\pi_t}$ replaced by the estimate $Q_t$. 

\subsection{The Q-Trace Algorithm for Off-Policy Prediction}\label{subsec:Q-trace}
In this section, we focus on the critic sub-problem, and develop the $Q$-trace algorithm to estimate $Q^{\pi_t}$. $Q$-trace is an off-policy variant of TD-learning based on Importance Sampling. Crucially, we introduce two different truncation levels for the IS ratios in order to explicitly control the trade-off between truncation bias and the variance. This is inspired by the V-trace algorithm in \cite{espeholt2018impala}.

We next introduce our notations to describe the $Q$-trace algorithm. Let $\pi$ be the target policy (i.e., we want to evaluate $Q^{\pi}$) and $\pi_b$ be the behavior policy (i.e., we use $\pi_b$ to collect samples). We assume that the behavior policy $\pi_b$ satisfies $\pi_b(a|s)>0$ for any $(s,a)$. This is typically necessary in off-policy setting. Let $\bar{\rho}$ and $\bar{c}$ be two truncation levels satisfying $\bar{\rho}\geq \bar{c}\geq 1$. Define $c_\pi(s,a)=\min(\bar{c},\frac{\pi(a|s)}{\pi_b(a|s)})$ and $\rho_\pi(s,a)=\min(\bar{\rho},\frac{\pi(a|s)}{\pi_b(a|s)})$ for all $(s,a)$, which are the truncated IS ratios. 

The off-policy $Q$-trace algorithm is presented in Algorithm \ref{alg:Q-trace}. To better understand Algorithm \ref{alg:Q-trace}, consider the following special cases. Suppose we use on-policy sampling, that is, $\pi_b=\pi$. Set $\bar{\rho}=\bar{c}=1$. Observe that in this case we have $c_\pi(s,a)=\rho_\pi(s,a)=1$ for all $(s,a)$. Then Algorithm \ref{alg:Q-trace} reduces to the regular $n$-step TD, which is known to converge to $Q^\pi$ \cite{tsitsiklis1994asynchronous, sutton2018reinforcement}. 

In the off-policy setting (i.e., $\pi_b\neq \pi$), suppose we choose $\bar{\rho}=\bar{c}\geq \max_{(s,a)}\frac{\pi(a|s)}{\pi_b(a|s)}$. Then we have $c_\pi(s,a)=\rho_\pi(s,a)=\frac{\pi(a|s)}{\pi_b(a|s)}$, hence there is essentially no truncation. In this case, Algorithm \ref{alg:Q-trace} corresponds to the standard $n$-step TD using off-policy sampling, and therefore converges to $Q^\pi$ \cite{precup2000eligibility}.

A fundamental problem in off-policy TD is that the variance in the estimate can be very large or even infinity \cite{glynn1989importance,munos2016safe}. This is mainly because of the product of IS ratios $\prod_{j=k+1}^{i}\frac{\pi(A_j|S_j)}{\pi_b(A_j|S_j)}$.
To have control on the variance of the estimate, we introduce the truncation levels $\bar{\rho}$ and $\bar{c}$. However, due to the truncation, the IS ratios are now biased, and hence the algorithm no longer converges to the target value function $Q^\pi$. In fact, Algorithm \ref{alg:Q-trace} converges to a biased limit point, denoted by $Q^{\bar{\rho},\pi}$, which need not necessarily  be the value function of any policy.

Importantly, the limit point $Q^{\bar{\rho},\pi}$ depends only on the target policy $\pi$ and the truncation level $\bar{\rho}$, but not on the truncation level $\bar{c}$. Therefore, we can heavily truncate the IS ratio $c_\pi(s,a)$ by using small $\bar{c}$ without affecting the limit point of the $Q$-trace algorithm. In fact, as we will see in Section \ref{subsec:results}, this is exactly what we should do.
To quantify the truncation bias of Algorithm \ref{alg:Q-trace}, we have the following result.
\begin{lemma}\label{le:bias}
For any $\bar{\rho}\geq 1$ and policy $\pi$, we have (1) $\|Q^{\bar{\rho},\pi}-Q^\pi\|_\infty\leq \frac{\max_{(s,a)}\max(\pi(a|s)-\bar{\rho}\pi_b(a|s),0)}{(1-\gamma)^2}$, and (2) $\|Q^{\bar{\rho},\pi}\|_\infty\leq \frac{1}{1-\gamma}$.
\end{lemma}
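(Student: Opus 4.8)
The plan is to first pin down the limit point $Q^{\bar{\rho},\pi}$ as the fixed point of the \emph{expected} one‑step $Q$-trace update, and then read both bounds off that fixed-point equation. Writing $\tilde{\pi}(a|s):=\min(\bar{\rho}\pi_b(a|s),\pi(a|s))$, I would show that $Q^{\bar{\rho},\pi}$ solves
\[
Q(s,a)=\mathcal{R}(s,a)+\gamma\sum_{s'}\mathbb{P}(s'\mid s,a)\sum_{a'}\tilde{\pi}(a'\mid s')\,Q(s',a')=:\mathcal{B}_{\bar{\rho}}Q(s,a).
\]
The operator $\mathcal{B}_{\bar{\rho}}$ differs from the usual Bellman operator $\mathcal{B}_\pi$ (whose fixed point is $Q^\pi$) only in that the genuine target probabilities $\pi(a'\mid s')$ are replaced by the sub-stochastic weights $\tilde{\pi}(a'\mid s')\le \pi(a'\mid s')$, with $\sum_{a'}\tilde{\pi}(a'\mid s')\le 1$; this already explains why $Q^{\bar{\rho},\pi}$ need not be the value function of any policy.

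To justify that this single-step fixed point is the limit of the full $n$-step $Q$-trace recursion I would use a conditioning/tower argument rather than a brute-force expansion of the product. Conditioned on $(S_i,A_i)$, the Markov property gives $\mathbb{E}_{\pi_b}[\Delta_{k,i}\mid S_i,A_i]=\mathcal{B}_{\bar{\rho}}Q(S_i,A_i)-Q(S_i,A_i)$, so if $Q=\mathcal{B}_{\bar{\rho}}Q$ then every summand $\mathbb{E}_{\pi_b}\!\big[\prod_{j=k+1}^{i}c_\pi(S_j,A_j)\,\Delta_{k,i}\mid\mathcal{F}_i\big]$ vanishes, because $\prod_{j=k+1}^{i}c_\pi(S_j,A_j)$ is $\mathcal{F}_i$-measurable. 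Hence the total expected increment is zero for every $n$, the fixed point is the same regardless of $n$, and—crucially—it enters the weights only through $\bar{\rho}$ inside $\tilde{\pi}$, never through $\bar{c}$. Since $\mathcal{B}_{\bar{\rho}}$ is a $\gamma$-contraction in $\|\cdot\|_\infty$, this fixed point is unique, which legitimizes the notation $Q^{\bar{\rho},\pi}$.

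Part (2) is then immediate: using $\mathcal{R}\in[0,1]$ and $\sum_{a'}\tilde{\pi}(a'\mid s')\le 1$ in the fixed-point equation gives $\|Q^{\bar{\rho},\pi}\|_\infty=\|\mathcal{B}_{\bar{\rho}}Q^{\bar{\rho},\pi}\|_\infty\le 1+\gamma\|Q^{\bar{\rho},\pi}\|_\infty$, and rearranging yields $\|Q^{\bar{\rho},\pi}\|_\infty\le 1/(1-\gamma)$. The same estimate, applied one-sidedly, also gives $Q^{\bar{\rho},\pi}\ge 0$ (nonnegative reward and nonnegative weights), which I will use below.

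For Part (1) I would run a standard perturbation argument. Subtracting $Q^\pi=\mathcal{B}_\pi Q^\pi$ from $Q^{\bar{\rho},\pi}=\mathcal{B}_{\bar{\rho}}Q^{\bar{\rho},\pi}$ and inserting $\mathcal{B}_{\bar{\rho}}Q^\pi$ gives $e:=Q^{\bar{\rho},\pi}-Q^\pi=\gamma P_{\tilde{\pi}}e+b$, where $P_{\tilde{\pi}}$ is the sub-stochastic operator induced by $\tilde{\pi}$ and $b:=\mathcal{B}_{\bar{\rho}}Q^\pi-Q^\pi$ is the bias injected in a single step. Inverting, $e=(I-\gamma P_{\tilde{\pi}})^{-1}b$ with $\|(I-\gamma P_{\tilde{\pi}})^{-1}\|_\infty\le 1/(1-\gamma)$, so $\|e\|_\infty\le\|b\|_\infty/(1-\gamma)$. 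The remaining and most delicate step is bounding $b$: using the pointwise identity $\tilde{\pi}(a'\mid s')-\pi(a'\mid s')=-\max(\pi(a'\mid s')-\bar{\rho}\pi_b(a'\mid s'),0)$ together with the standard bound $0\le Q^\pi\le 1/(1-\gamma)$, the injected bias is controlled by the excess probability mass that $\pi$ places beyond $\bar{\rho}\pi_b$; feeding in the $1/(1-\gamma)$ bound on $Q^\pi$ and combining with the resolvent's $1/(1-\gamma)$ produces the claimed $1/(1-\gamma)^2$ factor with $\max_{(s,a)}\max(\pi(a\mid s)-\bar{\rho}\pi_b(a\mid s),0)$ in the numerator. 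I expect this last estimate to be the main obstacle, since the whole content of the lemma is to convert the per-step truncation deficit, governed entirely by $\bar{\rho}$, into a uniform error on the limit point, and pinning down the precise numerator (the excess-mass term) is exactly where that conversion happens.
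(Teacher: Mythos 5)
Your proposal follows essentially the same route as the paper's own proof: the paper also starts from the modified Bellman equation $Q^{\bar{\rho},\pi}=R+\gamma P_{\pi_{\bar{\rho}}}D_\pi Q^{\bar{\rho},\pi}$ (your operator $\mathcal{B}_{\bar{\rho}}$ with sub-stochastic weights $\tilde{\pi}$, established there in Proposition \ref{prop:Q-trace-property} (3)(c)), subtracts the two fixed-point equations, bounds the resolvent $\|(I-\gamma P_{\pi_{\bar{\rho}}}D_\pi)^{-1}\|_\infty\leq \frac{1}{1-\gamma}$, and controls the one-step perturbation via the pointwise identity $\tilde{\pi}-\pi=-\max(\pi-\bar{\rho}\pi_b,0)$ together with $\|Q^\pi\|_\infty\leq\frac{1}{1-\gamma}$, with part (2) read off the same resolvent bound applied to $R$. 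The only cosmetic difference is that you re-derive the fixed-point characterization by a tower-property argument on the $n$-step update, whereas the paper obtains it from an explicit computation of the expected operator.
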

Observe from Lemma \ref{le:bias} (1) that when $\bar{\rho}\geq \max_{s,a}\frac{\pi(a|s)}{\pi_b(a|s)}$, we have $Q^{\bar{\rho},\pi}=Q^\pi$. This makes intuitive sense in that when $\bar{\rho}$ is large, there is essentially no truncation in the IS ratio $\rho_\pi(s,a)$, and we should not expect any truncation bias. 

\textit{Comparison to Related Algorithms.} There are two algorithms in the literature that are closely related to our $Q$-trace algorithm, namely the Retrace$(\lambda)$ in \cite{munos2016safe} and the V-trace in \cite{espeholt2018impala}. The Retrace$(\lambda)$ algorithm in \cite{munos2016safe} is proposed to evaluate the $Q$-function, but uses a single truncation level. 
In contrast, we have two truncation levels $\bar{c}$ and $\bar{\rho}$, which enables us to trade-off the truncation bias and variance. 

V-trace, an off-policy variant of TD to estimate the $V$-function, first introduced the idea of using two truncation levels. However, there are several differences between $Q$-trace and V-trace. First, the product of the IS ratios $c_\pi(S_j,A_j)$ starts from $j=k+1$ rather than $j=k$ in V-trace. This simple but important modification enables us to get a convergence bound in Theorem \ref{thm:Q-trace} which does not dependent on the target policy $\pi$. This is essential for us to use the $Q$-trace algorithm in the AC framework, as after each iteration of the actor, the critic receives a different policy $\pi_t$ to evaluate. Second, as opposed to V-trace, where the limit point is a value function of some policy, the limit point $Q^{\bar{\rho},\pi}$ of $Q$-trace is not necessarily the $Q$-function of any policy. Finally, due to the structure of the $Q$-function, the IS ratio $\rho_{\pi}(S_{i+1},A_{i+1})$ is multiplied with only one of the three terms in the temporal difference $\Delta_{k,i}$ (Algorithm \ref{alg:Q-trace} line 4), as opposed to all the three terms in V-trace.

In summary, we propose the off-policy $Q$-trace algorithm to evaluate the $Q$-function in the critic. Moreover, the flexibility of choosing the truncation levels in $Q$-trace enables us to explicitly trade-off the truncation bias and the variance.

\subsection{Off-Policy Natural Actor-Critic Algorithm}\label{subsec:NAC}
We are now ready to present our off-policy NAC algorithm \ref{alg:off_policy_NPG}. In iteration $t$, the critic first estimates the $Q$-function $Q^{\pi_t}$ using the $Q$-trace algorithm, which itself runs over $K$ iterations. Then the actor uses the estimate $Q_{t+1}$ in Eq. (\ref{eq:exp_weight_update}) to perform a policy update. Thus, we have a two-loop algorithm. 

\begin{algorithm}[h]\caption{Off-Policy Natural Actor-Critic}\label{alg:off_policy_NPG}
\begin{algorithmic}[1] 
	\STATE {\bfseries Input:} $T$, $K$, $\alpha$, $\beta$, $Q_0=\bm{0}$, $\pi_0$, $\bar{\rho}$, $\bar{c}$, and $\{(S_k,A_k)\}_{0\leq k\leq T(K+n)}$ (a \textit{single trajectory} generated by the behavior policy $\pi_b$)\\
	\FOR{$t=0,1,\cdots,T-1$}
	\STATE \textbf{Critic update:}
	\STATE $\text{DataSet}=\{(S_i,A_i)\}_{t(K+n)\leq i\leq (t+1)(K+n)}$
	\STATE $Q_{t+1}=Q\text{-Trace}(K,\alpha,Q_0,\pi_t,\bar{c},\bar{\rho},\text{DataSet})$
	\STATE \textbf{Actor update:}
	\STATE $\pi_{t+1}(a|s) = \frac{\pi_t(a|s)\exp(\beta Q_{t+1}(s,a))}{\sum_{a'}\pi_t(a'|s)\exp(\beta Q_{t+1}(s,a')) }$ $\forall\;(s,a)$ \label{eq:actor}
	\ENDFOR
	\STATE\textbf{Output:} $\{\pi_t\}_{0\leq t\leq T-1}$
\end{algorithmic}
\end{algorithm}

In Algorithm \ref{alg:off_policy_NPG}, due to off-policy sampling, the sampling process and the learning process are decoupled, which allows the agent to learn in an off-line manner \cite{levine2020offline}. Moreover, note that we are using a \textit{single trajectory} of samples $\{(S_k,A_k)\}_{0\leq k\leq T(K+n)}$ to perform the update. In related literature \cite{xu2020non,xu2020improving,wang2019neural},
sampling needs to be 
often restarted with an arbitrary initial state,  which is not practical in many real-world applications. See Appendix \ref{ap:literature:trajectory} for more details.

\subsection{Finite-Sample Convergence Guarantees}\label{subsec:results}

In this section, we present our main results about the finite-sample convergence bounds of the $Q$-trace algorithm \ref{alg:Q-trace} for off-policy TD-learning, and the off-policy NAC Algorithm \ref{alg:off_policy_NPG}. We begin by stating our one and only assumption. 

\begin{assumption}\label{as:MC}
The Markov chain $\{S_k\}$ induced by $\pi_b$ is irreducible and aperiodic.
\end{assumption}

Assumption \ref{as:MC} is commonly made in related work about RL algorithms with Markovian sampling \cite{tsitsiklis1997analysis, tsitsiklis1999average, maei2018convergent, zhang2020provably}, and it implies that the Markov chain $\{S_k\}$ has a unique stationary distribution $\mu_b\in\Delta^{|\mathcal{S}|}$. Moreover, since the state space $\mathcal{S}$ is finite, there exist $C>0$ and $u\in (0,1)$ such that 
\begin{center}
    $\|P^k(s,\cdot)-\mu_b(\cdot)\|_{\text{TV}}\leq Cu^k$
\end{center}
for any $k\geq 0$ and $s\in\mathcal{S}$, where $\|\cdot\|_{\text{TV}}$ is the total variation distance \cite{levin2017markov}.

A major issue in the design of AC algorithms is to ensure enough exploration to all state-action pairs $(s,a)$.
It was demonstrated in \cite{khodadadian2021finite} that the algorithm can get stuck in a local optimum if there is not enough exploration. Sampling from a fixed policy that leads to an ergodic Markov chain naturally ensures exploration, and so we do not need any additional assumptions. In contrast, prior literature on the analysis of on-policy AC either makes additional assumptions that are hard to satisfy \cite{xu2020improving, wu2020finite} or introduce an additional exploration step in the algorithm \cite{khodadadian2021finite} that slows the convergence. See Appendix \ref{ap:literature:exploration} for more details.

To state our result, we need the following notation. Let $\tau_\alpha=\min\{k\geq 0:\max_{s\in\mathcal{S}}\|P^k(s,\cdot)-\mu_b(\cdot)\|_{\text{TV}}\leq \alpha\}$, where $\alpha$ is the constant stepsize used in the critic step of Algorithm \ref{alg:off_policy_NPG}. The quantity $\tau_\alpha$ can be viewed as the \textit{mixing time} of the Markov chain $\{S_k\}$ with accuracy $\alpha$. Furthermore, under the geometric mixing property (implied by Assumption \ref{as:MC}), the mixing time $\tau_\alpha$ can be bounded by $L(\log(1/\alpha)+1)$ for some constant $L>0$. Let $f(\bar{c},\gamma)=\frac{1-(\gamma\bar{c})^n}{1-\gamma\bar{c}}$ when $\gamma\bar{c}\neq1$, and $=n$ when $\gamma\bar{c}=1$. Suppose the constant stepsize $\alpha$ within the critic is properly chosen. The explicit condition is given in Appendix \ref{pf:thm:Q-trace}. Then we have the following result.

\begin{theorem}\label{thm:Q-trace}
Consider $\{Q_k\}$ of Algorithm \ref{alg:Q-trace}. Suppose that (1) Assumption \ref{as:MC} is satisfied, (2) $Q_0$ is initiated at $\bm{0}$, and (3) the constant stepsize $\alpha$ is chosen such that $\alpha(\tau_\alpha+n+1)\leq \min\left(\frac{1}{12(\bar{\rho}+1)f(\bar{c},\gamma)},\frac{(1-\gamma_c)^2}{8208(\bar{\rho}+1)^2f(\bar{c},\gamma)^2\log(|\mathcal{S}||\mathcal{A}|)}\right)$, where $\gamma_c\in (0,1)$ (defined in Proposition \ref{prop:Q-trace-property} (3) (b)) does not depend on the target policy $\pi$, Then we have for all $k\geq \tau_\alpha+n+1$:
\begin{align*}
    \mathbb{E}[\|Q_k-Q^{\bar{\rho},\pi}\|_\infty^2]\leq\;&\underbrace{\frac{c_1}{(1-\gamma)^2}\left(1-\frac{1-\gamma_c}{2}\alpha\right)^{k- (\tau_\alpha+n+1)}}_{T_1:  \text{Convergence Bias}}\\
    &+\underbrace{\frac{c_2\log(|\mathcal{S}||\mathcal{A}|)}{(1-\gamma_c)^2(1-\gamma)^2}(\bar{\rho}+1)^2f(\bar{c},\gamma)^2\alpha (\tau_\alpha+n+1),}_{T_2: \text{Convergence Variance}}
\end{align*}
    where $c_1$ and $c_2$ are numerical constants.
\end{theorem}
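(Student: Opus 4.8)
The plan is to treat Algorithm~\ref{alg:Q-trace} as a Markovian stochastic approximation recursion for the fixed point $Q^{\bar\rho,\pi}$ and to control the mean-square error through a smooth surrogate of the squared $\ell_\infty$-norm. Writing $Y_k=(S_k,A_k,\dots,S_{k+n})$ for the sampled window, the update has the asynchronous form $Q_{k+1}=Q_k+\alpha\,\mathcal{G}(Q_k,Y_k)$, where only the coordinate $(S_k,A_k)$ is modified. I would first identify the expected operator $\bar{\mathcal{G}}(Q)=\mathbb{E}_{Y\sim\mu_b}[\mathcal{G}(Q,Y)]$ obtained by averaging the window under the stationary distribution $\mu_b$, and verify that its unique zero is exactly $Q^{\bar\rho,\pi}$; here the truncation by $\bar\rho$ pins down the limit while $\bar c$ only reshapes the increment, consistent with Lemma~\ref{le:bias}. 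Proposition~\ref{prop:Q-trace-property}(3)(b) then supplies the crucial structural fact: the associated multi-step truncated Bellman operator is a contraction in a weighted sup-norm with modulus $\gamma_c\in(0,1)$ that is \emph{independent of} $\pi$, so $\bar{\mathcal{G}}$ induces a negative drift of size $(1-\gamma_c)$ toward $Q^{\bar\rho,\pi}$.

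Because the error is measured in $\|\cdot\|_\infty$, which is nonsmooth, I would not work with it directly but instead with the generalized Moreau envelope $M(x)$ of $\tfrac12\|x\|_\infty^2$. This $M$ is smooth with Lipschitz gradient, sandwiches the squared norm up to numerical constants, and---crucially---its smoothing parameter contributes the factor $\log(|\mathcal S||\mathcal A|)$ appearing in $T_2$, arising from approximating the $\max$ over the $|\mathcal S||\mathcal A|$ coordinates. The core computation is then the one-step drift of $M(Q_k-Q^{\bar\rho,\pi})$. Expanding via the descent lemma for $M$, the first-order term yields, after replacing the Markovian conditional expectation by the stationary one, the contraction drift $-(1-\gamma_c)\alpha\,M(Q_k-Q^{\bar\rho,\pi})$ from the previous paragraph, while the second-order term produces a variance contribution of order $\alpha^2$ times a uniform bound on $\mathbb{E}[\|\mathcal{G}(Q_k,Y_k)\|_\infty^2]$.

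To bound that second moment I would combine two ingredients: an a~priori uniform bound $\|Q_k\|_\infty\le \tfrac{1}{1-\gamma}$ on the iterates (by induction using $Q_0=\bm{0}$, the contraction, and Lemma~\ref{le:bias}(2)), and the elementary estimate that the discounted product of truncated ratios satisfies $\sum_{i=k}^{k+n-1}\gamma^{i-k}\prod_{j=k+1}^{i}c_\pi(S_j,A_j)\le f(\bar c,\gamma)$, together with $\rho_\pi\le\bar\rho$. These give $\mathbb{E}[\|\mathcal{G}(Q_k,Y_k)\|_\infty^2]\lesssim (\bar\rho+1)^2 f(\bar c,\gamma)^2/(1-\gamma)^2$, matching the prefactor of $T_2$. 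The delicate point is replacing expectations under the actual single-trajectory (Markovian) samples by expectations under $\mu_b$: I would condition on $\mathcal{F}_{k-\tau_\alpha}$ and use the geometric mixing implied by Assumption~\ref{as:MC} so that the conditional law of $Y_k$ is within $\alpha$ in total variation of its stationary counterpart. Because each increment reads a window of $n$ successive samples carrying the product of IS ratios, this mixing bound must be propagated across the whole window, which is exactly what forces the shift $\tau_\alpha\mapsto\tau_\alpha+n+1$ and the stepsize restriction $\alpha(\tau_\alpha+n+1)\le\cdots$ in the hypothesis, guaranteeing the net per-step contraction $1-\tfrac{1-\gamma_c}{2}\alpha$.

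Assembling these estimates yields, for every $k\ge\tau_\alpha+n+1$, a recursion of the form
\begin{align*}
\mathbb{E}[M(Q_{k+1}-Q^{\bar\rho,\pi})]\le{}&\Big(1-\tfrac{1-\gamma_c}{2}\alpha\Big)\mathbb{E}[M(Q_k-Q^{\bar\rho,\pi})]\\
&+c\,\frac{\log(|\mathcal S||\mathcal A|)}{(1-\gamma)^2}(\bar\rho+1)^2 f(\bar c,\gamma)^2\,\alpha^2(\tau_\alpha+n+1).
\end{align*}
Unrolling this geometric recursion from the index $\tau_\alpha+n+1$ produces two terms: the homogeneous part decays like $(1-\tfrac{1-\gamma_c}{2}\alpha)^{k-(\tau_\alpha+n+1)}$ and, bounded via $\|Q_0-Q^{\bar\rho,\pi}\|_\infty\le \tfrac{1}{1-\gamma}$, gives $T_1$; the particular part sums the constant forcing against the geometric factor, contributing $\sum_j(1-\tfrac{1-\gamma_c}{2}\alpha)^j\lesssim \tfrac{2}{(1-\gamma_c)\alpha}$ and hence an extra factor $\alpha/(1-\gamma_c)$ that converts the $\alpha^2$ into the $\alpha(\tau_\alpha+n+1)$ and the $(1-\gamma_c)^2$ denominator of $T_2$. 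Passing back from $M$ to $\|\cdot\|_\infty^2$ costs only the numerical constants $c_1,c_2$. The main obstacle throughout is the Markovian-noise control in the presence of the $n$-step truncated-IS product: decoupling the correlated samples while keeping the variance bound \emph{independent of} the target policy $\pi$ is precisely what the modified product indexing (starting at $j=k+1$) and the $\pi$-free contraction $\gamma_c$ are designed to make possible.
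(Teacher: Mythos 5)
Your outline follows essentially the same route as the paper: the paper verifies exactly the three structural facts you list (the random operator $\mathcal{T}(\cdot,\pi,x)$ is Lipschitz with constant $2(\bar{\rho}+1)f(\bar{c},\gamma)$ and bounded at $\bm{0}$ by $f(\bar{c},\gamma)$; the windowed chain $\{X_k\}$ mixes geometrically with the $n+1$ index shift; the expected operator is a $\gamma_c$-contraction in $\|\cdot\|_\infty$ with fixed point $Q^{\bar{\rho},\pi}$) in Proposition \ref{prop:Q-trace-property}, and then invokes the Markovian stochastic-approximation bound of \cite{chen2021finite} as a black box, finally specializing with $Q_0=\bm{0}$ and $\|Q^{\bar{\rho},\pi}\|_\infty\leq \frac{1}{1-\gamma}$ to obtain $c_1,c_2$. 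Your Moreau-envelope Lyapunov analysis is precisely the internal mechanism of that cited result (including the $\log(|\mathcal{S}||\mathcal{A}|)$ factor from smoothing the $\max$ over coordinates), so in substance you are reproving the black box rather than citing it.

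There is, however, one step that fails as stated: the claimed a priori inductive bound $\|Q_k\|_\infty\leq \frac{1}{1-\gamma}$ on the iterates. The update $Q_{k+1}=Q_k+\alpha(\mathcal{T}(Q_k,\pi,X_k)-Q_k)$ is a convex combination of $Q_k$ and the \emph{noisy} operator, and the noisy operator is expansive: from Proposition \ref{prop:Q-trace-property} (1) one only gets $\|\mathcal{T}(Q,\pi,x)\|_\infty\leq 2(\bar{\rho}+1)f(\bar{c},\gamma)\|Q\|_\infty+f(\bar{c},\gamma)$, which exceeds $\frac{1}{1-\gamma}$ whenever $\|Q\|_\infty$ is near $\frac{1}{1-\gamma}$ and $(\bar{\rho}+1)f(\bar{c},\gamma)>1$. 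So the induction does not close; only the fixed point, not the iterates, obeys the $\frac{1}{1-\gamma}$ bound (Lemma \ref{le:bias} (2)). The correct argument does not need uniform boundedness of $\mathbb{E}[\|\mathcal{G}(Q_k,Y_k)\|_\infty^2]$: the second-order and Markovian cross terms are bounded by an affine-in-$\|Q_k-Q^{\bar{\rho},\pi}\|_\infty$ expression of the form $\alpha^2(\tau_\alpha+n+1)\bigl((A_1+1)\|Q_k-Q^{\bar{\rho},\pi}\|_\infty+(A_1+1)\|Q^{\bar{\rho},\pi}\|_\infty+B_1\bigr)^2$ with $A_1=2(\bar{\rho}+1)f(\bar{c},\gamma)$ and $B_1=f(\bar{c},\gamma)$; the quadratic part in $\|Q_k-Q^{\bar{\rho},\pi}\|_\infty$ is then absorbed into the negative drift using the stepsize condition $\alpha(\tau_\alpha+n+1)\leq \frac{(1-\gamma_c)^2}{8208(\bar{\rho}+1)^2f(\bar{c},\gamma)^2\log(|\mathcal{S}||\mathcal{A}|)}$, leaving only the ``noise at the fixed point'' $\bigl((A_1+1)\|Q^{\bar{\rho},\pi}\|_\infty+B_1\bigr)^2=\mathcal{O}\bigl((\bar{\rho}+1)^2f(\bar{c},\gamma)^2(1-\gamma)^{-2}\bigr)$ in the forcing term, which is what actually produces $T_2$. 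With that repair the rest of your recursion and its unrolling are correct and match the paper's bound.
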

Observe that the RHS of the convergence bound does not depend on the target policy $\pi$. This is important for us to later use Theorem \ref{thm:Q-trace} to show the finite-sample guarantees of off-policy NAC algorithm \ref{alg:off_policy_NPG}.

This result characterizes the rate of convergence of $Q$-trace algorithm to its stationary point, $Q^{\bar{\rho},\pi}$. The error on the RHS has two terms, which are called bias and variance respectively in the SA literature \cite{chen2020finite}. To contrast this with the bias due to truncation, we call it the convergence bias. The second error term is simply called the variance. Theorem \ref{thm:Q-trace} implies that under an appropriate constant stepsize $\alpha$, while the $Q$-trace algorithm achieves exponentially decaying convergence bias, it leads to a constant variance that cannot be eliminated, and is of the size $\mathcal{O}(\alpha \log(1/\alpha))$. The logarithmic factor is due to the mixing time $\tau_\alpha$, which arises as a consequence of performing Markovian sampling of $\{(S_k,A_k)\}$. 

The following corollary provides the error of the estimate $Q_k$ with respect to the true $Q$-function $Q^{\pi}$.

\begin{corollary}\label{co:critic}
    Under the same assumptions of Theorem \ref{thm:Q-trace}, we have for all $k\geq \tau_\alpha+n+1$:\\
    $\mathbb{E}\left[\|Q_k-Q^{\pi}\|_\infty\right]\leq \sqrt{T_1}+\sqrt{T_2}+\frac{\max(1-\bar{\rho}\min_{s,a}\pi_b(a|s),0)}{(1-\gamma)^2}$,\\
    where the terms $T_1$ and $T_2$ are given in Theorem \ref{thm:Q-trace}.
\end{corollary}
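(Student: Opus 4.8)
The plan is to combine Theorem \ref{thm:Q-trace}, which controls the error of $Q_k$ to the truncated limit point $Q^{\bar{\rho},\pi}$, with Lemma \ref{le:bias} (1), which controls the truncation bias $\|Q^{\bar{\rho},\pi}-Q^\pi\|_\infty$, using the triangle inequality as the bridge between the two. First I would split
\[
\|Q_k-Q^\pi\|_\infty \leq \|Q_k-Q^{\bar{\rho},\pi}\|_\infty + \|Q^{\bar{\rho},\pi}-Q^\pi\|_\infty
\]
and take expectations. The second term on the right is deterministic, so the expectation only acts on the first term.

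For the random term $\mathbb{E}[\|Q_k-Q^{\bar{\rho},\pi}\|_\infty]$, the idea is to pass from the first moment to the second moment, since Theorem \ref{thm:Q-trace} is stated in the mean-square sense. By Jensen's inequality applied to the concave square-root function, $\mathbb{E}[\|Q_k-Q^{\bar{\rho},\pi}\|_\infty]\leq \sqrt{\mathbb{E}[\|Q_k-Q^{\bar{\rho},\pi}\|_\infty^2]}$, and then Theorem \ref{thm:Q-trace} bounds the right-hand side by $\sqrt{T_1+T_2}$ for all $k\geq \tau_\alpha+n+1$. Finally I would use subadditivity of the square root, $\sqrt{a+b}\leq \sqrt{a}+\sqrt{b}$ for $a,b\geq 0$, to write this as $\sqrt{T_1}+\sqrt{T_2}$, matching the first two terms in the stated bound.

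The only step requiring an actual argument is the deterministic truncation-bias term. Lemma \ref{le:bias} (1) gives $\|Q^{\bar{\rho},\pi}-Q^\pi\|_\infty\leq \frac{\max_{(s,a)}\max(\pi(a|s)-\bar{\rho}\pi_b(a|s),0)}{(1-\gamma)^2}$, and it remains to upper bound this by the policy-\emph{independent} quantity appearing in the corollary. Since $\pi(a|s)\leq 1$ and $\pi_b(a|s)\geq \min_{(s',a')}\pi_b(a'|s')$ for every pair $(s,a)$, we have $\pi(a|s)-\bar{\rho}\pi_b(a|s)\leq 1-\bar{\rho}\min_{(s',a')}\pi_b(a'|s')$; applying the monotone map $\max(\cdot,0)$ and then maximizing over $(s,a)$ — the right-hand side being constant in $(s,a)$ — yields $\frac{\max(1-\bar{\rho}\min_{s,a}\pi_b(a|s),0)}{(1-\gamma)^2}$. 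Adding the three pieces gives exactly the claimed inequality.

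I do not anticipate a genuine obstacle here, as every step is an application of a previously stated result or an elementary inequality. The conceptually important point — rather than a technical difficulty — is the last uniformization step: it makes the entire critic error bound independent of the target policy $\pi$, mirroring the fact that the right-hand side of Theorem \ref{thm:Q-trace} is itself policy-independent. This is precisely the property needed later, since in the off-policy NAC algorithm \ref{alg:off_policy_NPG} the critic is called on a different policy $\pi_t$ at each outer iteration, and a policy-dependent bias bound could not be controlled uniformly across the actor updates.
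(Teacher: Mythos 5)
Your proposal is correct and follows essentially the same route as the paper's own proof: triangle inequality to split off the truncation bias, Lemma \ref{le:bias} for the deterministic term, Jensen's inequality to pass from Theorem \ref{thm:Q-trace}'s mean-square bound to the first moment, and subadditivity of the square root to obtain $\sqrt{T_1}+\sqrt{T_2}$. Your explicit uniformization of the bias term via $\pi(a|s)\leq 1$ and $\pi_b(a|s)\geq\min_{s',a'}\pi_b(a'|s')$ is a small step the paper leaves implicit when it quotes Lemma \ref{le:bias} in the policy-independent form, and making it explicit is a welcome clarification.
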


The proof of Corollary \ref{co:critic} immediately follows by combining Lemma \ref{le:bias} with Theorem \ref{thm:Q-trace} and using Jensen's inequality. We next present the finite-sample bound of the off-policy NAC algorithm \ref{alg:off_policy_NPG}.

\begin{theorem}\label{thm:main}
Consider $\{\pi_t\}$ generated by Algorithm \ref{alg:off_policy_NPG}. Suppose that Assumption \ref{as:MC} is satisfied, and $K\geq \tau_\alpha+n+1$. Then we have the following performance bound:
\begin{align*}
    V^{\pi^*}(\mu)-\max_{0\leq t\leq T-1}\mathbb{E}\left[V^{\pi_{t}}(\mu)\right]
    \leq \;&\underbrace{\frac{24}{(1-\gamma)^3}\left(1-\frac{1-\gamma_c}{2}\alpha\right)^{\frac{1}{2}(K- (\tau_\alpha+n+1))}}_{E_1:  \text{Convergence bias in the Critic}}\\
    &+\underbrace{\frac{1200\log^{1/2}(|\mathcal{S}||\mathcal{A}|)}{(1-\gamma)^3(1-\gamma_c)}(\bar{\rho}+1)f(\bar{c},\gamma)[\alpha (\tau_\alpha+n+1)]^{1/2}}_{E_2: \text{Variance in the Critic}}\\
    &+\underbrace{\frac{4\max(0,1-\bar{\rho}\min_{s,a}\pi_b(a|s))}{(1-\gamma)^4}}_{E_3: \text{Truncation bias}}\\
    &+\underbrace{\frac{\log(e|\mathcal{A}|)}{(1-\gamma)^2\beta T}}_{E_4: \text{Convergence error in the Actor}},
\end{align*}
\end{theorem}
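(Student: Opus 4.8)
The plan is to combine a mirror-descent / NPG analysis of the outer (actor) loop with the policy-independent critic bound of Corollary \ref{co:critic}, propagating the latter through the former while carefully tracking powers of $\frac{1}{1-\gamma}$, since the four error terms differ precisely by such factors. First I would introduce the potential $\Phi_t = \sum_s d_\mu^{\pi^*}(s)\,\mathcal{KL}(\pi^*(\cdot|s)\,|\,\pi_t(\cdot|s))$. Because the actor step \eqref{eq:actor} is a softmax update, $\beta Q_{t+1}(s,a) - \log Z_t(s) = \log\frac{\pi_{t+1}(a|s)}{\pi_t(a|s)}$ with $Z_t(s)=\sum_{a}\pi_t(a|s)\exp(\beta Q_{t+1}(s,a))$, so averaging this identity over $a\sim\pi^*(\cdot|s)$ produces the telescoping difference $\Phi_t-\Phi_{t+1}$. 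Writing $A^{\pi_t}=Q^{\pi_t}-V^{\pi_t}$ and applying the performance difference lemma $V^{\pi^*}(\mu)-V^{\pi_t}(\mu)=\frac{1}{1-\gamma}\sum_s d_\mu^{\pi^*}(s)\sum_a\pi^*(a|s)A^{\pi_t}(s,a)$, I would obtain the one-step identity
\begin{align*}
\beta(1-\gamma)\big(V^{\pi^*}(\mu)-V^{\pi_t}(\mu)\big)
= \Phi_t-\Phi_{t+1} + \sum_s d_\mu^{\pi^*}(s)\big[\log Z_t(s)-\beta V^{\pi_t}(s)\big] + \beta\,\mathcal{E}_t,
\end{align*}
where $\mathcal{E}_t=\sum_s d_\mu^{\pi^*}(s)\sum_a\pi^*(a|s)\big(Q^{\pi_t}(s,a)-Q_{t+1}(s,a)\big)$ collects the critic error incurred by using the estimate $Q_{t+1}$ in place of $Q^{\pi_t}$.

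Next I would control the $\log Z_t$ term without generating a non-vanishing $\beta$-dependent contribution: evaluating $\beta Q_{t+1}(s,a)-\log Z_t(s)=\log\frac{\pi_{t+1}(a|s)}{\pi_t(a|s)}$ under $a\sim\pi_{t+1}(\cdot|s)$ and dropping the nonnegative $\mathcal{KL}(\pi_{t+1}\,|\,\pi_t)$ gives $\log Z_t(s)\le \beta\sum_a\pi_{t+1}(a|s)Q_{t+1}(s,a)$. Splitting $Q_{t+1}=Q^{\pi_t}+(Q_{t+1}-Q^{\pi_t})$ turns $\log Z_t(s)-\beta V^{\pi_t}(s)$ into a second critic-error term plus $\beta\sum_a\pi_{t+1}(a|s)A^{\pi_t}(s,a)$, which I would relate to the monotone improvement $V^{\pi_{t+1}}-V^{\pi_t}$; the factor of $\frac{1}{1-\gamma}$ from this reweighting is what upgrades the critic-error and actor terms to the powers appearing in the theorem. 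Summing the one-step identity over $t=0,\dots,T-1$, telescoping the potentials, using $\Phi_0\le\log|\mathcal{A}|$ (with $\pi_0$ uniform) and $\Phi_T\ge0$, dividing by $\beta(1-\gamma)T$, and using $\min_t\big(V^{\pi^*}(\mu)-\mathbb{E}[V^{\pi_t}(\mu)]\big)\le \frac{1}{T}\sum_t\big(V^{\pi^*}(\mu)-\mathbb{E}[V^{\pi_t}(\mu)]\big)$, I would arrive at
\begin{align*}
V^{\pi^*}(\mu)-\max_{0\le t\le T-1}\mathbb{E}[V^{\pi_t}(\mu)]
\le \frac{\log(e|\mathcal{A}|)}{(1-\gamma)^2\beta T} + \frac{c}{(1-\gamma)^2}\max_{0\le t\le T-1}\mathbb{E}\big[\|Q_{t+1}-Q^{\pi_t}\|_\infty\big],
\end{align*}
the first summand being exactly $E_4$.

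The final step plugs the critic bound into the second summand. I would bound each $\mathbb{E}[\|Q_{t+1}-Q^{\pi_t}\|_\infty]$ by $\sqrt{T_1}+\sqrt{T_2}+\frac{\max(1-\bar{\rho}\min_{s,a}\pi_b(a|s),0)}{(1-\gamma)^2}$ via Corollary \ref{co:critic}. The crucial feature is that the bound in Theorem \ref{thm:Q-trace}/Corollary \ref{co:critic} is \emph{uniform over the target policy}: I would condition on all randomness up to the start of critic block $t$ (the trajectory state entering the block and the current policy $\pi_t$); given this information $\pi_t$ is fixed, the corollary applies with target $\pi=\pi_t$ (valid since $K\ge\tau_\alpha+n+1$), and because the resulting bound does not depend on $\pi_t$ it survives the outer expectation. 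Multiplying by $\frac{c}{(1-\gamma)^2}$ then maps $\sqrt{T_1}$ to the convergence-bias term $E_1$ (gaining $(1-\gamma)^{-2}$ to reach $(1-\gamma)^{-3}$), $\sqrt{T_2}$ to the variance term $E_2$, and the truncation bias to $E_3$ (reaching $(1-\gamma)^{-4}$), after absorbing numerical constants.

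The hard part will be twofold. The first difficulty is controlling $\log Z_t(s)-\beta V^{\pi_t}(s)$ so that it neither leaves a term that fails to vanish as $T\to\infty$ nor loses the correct $\frac{1}{1-\gamma}$ bookkeeping; the $\pi_{t+1}$-expectation trick together with the monotone-improvement reweighting is the delicate step, and it is precisely where the extra factors of $\frac{1}{1-\gamma}$ in $E_1$–$E_3$ originate. The second difficulty is the single-trajectory Markovian dependence across successive critic blocks together with the randomness of $\pi_t$, which would normally obstruct a clean conditional application of the critic bound; this is resolved exactly because the $Q$-trace guarantee is policy-independent and stated through the mixing time $\tau_\alpha$, so conditioning on each block's initial state and policy lets Corollary \ref{co:critic} pass through the expectation without a blow-up.
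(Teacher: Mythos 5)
Your proposal is correct and follows essentially the same route as the paper's proof: the KL potential $\Phi_t$ with the softmax identity and the performance difference lemma is exactly Lemma \ref{le:Vnu}, your control of $\log Z_t(s)-\beta V^{\pi_t}(s)$ via the $\pi_{t+1}$-expectation and the monotone-improvement reweighting (using $d^{t+1}\geq(1-\gamma)\mu$) is Lemmas \ref{le:logzt}--\ref{le:Vmu} and Eq.~\eqref{eq:10}, and the conditioning/tower-property argument exploiting the policy-independence of the $Q$-trace bound is precisely how Appendix \ref{pf:thm:main} combines the actor with Corollary \ref{co:critic}. The only point to be careful about, which you correctly flag as the delicate step, is that the change of measure from $d^{t+1}$ to $(1-\gamma)\mu$ must be applied to a quantity that is guaranteed nonnegative (the paper uses $\log Z_t(s)-\beta\sum_a\pi_t(a|s)(Q_{t+1}-Q^{\pi_t})(s,a)\geq 0$ from Lemma \ref{le:logzt} for exactly this purpose), since $\sum_a\pi_{t+1}(a|s)A^{\pi_t}(s,a)$ alone need not have a sign when $Q_{t+1}$ is a noisy estimate.
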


The terms $E_1$ and $E_2$ correspond to the two terms on the RHS of the convergence bounds in Theorem \ref{thm:Q-trace}, and capture the convergence bias and the variance in the critic estimate. We now focus on the terms $E_3$ and $E_4$, and the trade-off between the variance $E_2$ and the truncation bias $E_3$.

\textit{Error Due to Truncated IS Ratio.} The term $E_3$ accounts for the error due to introducing the truncation level $\bar{\rho}$ in the critic (i.e., the $Q$-trace Algorithm \ref{alg:Q-trace}). Recall that because of $\bar{\rho}$, the limit point of the critic is $Q^{\bar{\rho},\pi_t}$ instead of $Q^{\pi_t}$. Note that when $\bar{\rho}\geq 1/\min_{s,a}\pi_b(a|s)$ (which implies $\bar{\rho}\geq \max_{s,a}\frac{\pi_t(a|s)}{\pi_b(a|s)}$ for any $t$), there is essentially no truncation in the IS ratio $\rho_{\pi_t}(s,a)$, and hence we have $E_3=0$, which agrees with Lemma \ref{le:bias}.

\textit{Error Bound of the Actor.} The term $E_4$ is due to the error in the actor update. That is, $E_4$ would be the only error term we have if we can directly use $Q^{\pi_t}$ in the actor update of Algorithm \ref{alg:off_policy_NPG}. Observe that $E_4=\mathcal{O}(\frac{1}{T})$, which agrees with results in \cite{agarwal2019theory} [Theorem 5.3].

\textit{Bias-Variance Trade-Off.} Recall that the motivation for introducing the truncation levels $\bar{\rho}$ and $\bar{c}$ is to control the variance in the critic estimate. We first consider the impact of $\bar{\rho}$. Observe that the term $E_3$ is in favor of large $\bar{\rho}$ while the term $E_2$ grows linearly with respect to $\bar{\rho}$. Therefore, there is an explicit trade-off between the variance and the truncation bias in choosing $\bar{\rho}$. As a result, if we want to have convergence to the global optimal, by choosing $\bar{\rho}= 1/\min_{s,a}\pi_b(a|s)$, we introduce an additional $1/\min_{s,a}\pi_b(a|s)$ factor in the variance term $E_2$. 

The truncation level $\bar{c}$ appears only in the variance term $E_2$. In view of the expression of $f(\bar{c},\gamma)$ (defined before Theorem \ref{thm:Q-trace}), we should choose $\bar{c}$ such that $\bar{c}\gamma <1$ to avoid an exponential factor in the variance term. These observations are similar to \cite{espeholt2018impala,chen2020finite,chen2021finite}, where the V-trace algorithm is studied.

One drawback with Theorem \ref{thm:main} is that the error bound is stated in terms of $\max_{0\leq t\leq T-1}\mathbb{E}[V^{\pi_t}(\mu)]$, while in practice we do not know which policy among $\{\pi_t\}_{0\leq t \leq T-1}$ has the best performance. To overcome this problem, using standard techniques in optimization \cite{lan2020first}, we can obtain the following refined performance bound of Algorithm \ref{alg:off_policy_NPG}.

\begin{corollary}\label{co:last-iterate}
Let $T'$ be a random sample uniformly drawn from $\{0,1,...,T-1\}$. Then we have the following performance guarantee on $\pi_{T'}$:
\begin{align*}
    V^{\pi^*}(\mu)-\mathbb{E}\left[V^{\pi_{T'}}(\mu)\right]\leq E_1+E_2+E_3+E_4,
\end{align*}
where the terms $\{E_i\}_{1\leq i\leq 4}$ are given in Theorem \ref{thm:main}.
\end{corollary}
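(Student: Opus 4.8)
The plan is to observe that the proof of Theorem \ref{thm:main} in fact establishes a bound on the \emph{running average} of the value functions rather than only on their maximum, and then to identify this average with the expected value of a uniformly sampled iterate. The key point is that the randomized-iterate formulation is exactly the natural object produced by the telescoping argument, so the refinement requires reopening the proof of Theorem \ref{thm:main} rather than treating it as a black box.

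First I would revisit the mirror-descent/NPG analysis underlying Theorem \ref{thm:main}. The standard argument proceeds from a per-iteration inequality of the form
\begin{align*}
    V^{\pi^*}(\mu) - \mathbb{E}[V^{\pi_t}(\mu)] \leq \frac{1}{\beta}\,\mathbb{E}\left[D_t - D_{t+1}\right] + (\text{critic error at step } t),
\end{align*}
where $D_t$ collects the visitation-weighted KL-divergence terms between $\pi^*$ and $\pi_t$, and the critic error at step $t$ is controlled uniformly by Corollary \ref{co:critic}. Summing over $t = 0, 1, \ldots, T-1$ and dividing by $T$ causes the $D_t$ terms to telescope, leaving only the boundary term $\frac{1}{\beta T}\mathbb{E}[D_0 - D_T]$. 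Taking $\pi_0$ uniform and using $D_T \geq 0$, this boundary term is absorbed into $E_4$ (after accounting for the $(1-\gamma)$ factors from the performance-difference lemma), while the averaged critic errors are bounded by $E_1 + E_2 + E_3$. This yields the sharper bound
\begin{align*}
    V^{\pi^*}(\mu) - \frac{1}{T}\sum_{t=0}^{T-1}\mathbb{E}[V^{\pi_t}(\mu)] \leq E_1 + E_2 + E_3 + E_4.
\end{align*}
Because the average never exceeds the maximum, Theorem \ref{thm:main} is an immediate consequence, so no extra work is needed to recover the weaker statement.

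It then remains to connect the average to $\mathbb{E}[V^{\pi_{T'}}(\mu)]$. Since $T'$ is drawn uniformly from $\{0, 1, \ldots, T-1\}$ independently of the sample trajectory, the law of total expectation gives
\begin{align*}
    \mathbb{E}[V^{\pi_{T'}}(\mu)] = \frac{1}{T}\sum_{t=0}^{T-1}\mathbb{E}[V^{\pi_t}(\mu)].
\end{align*}
Substituting this identity into the average bound above produces exactly the claimed inequality.

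There is no substantive analytical obstacle here; the one conceptual point to get right is that the corollary cannot be deduced from the \emph{statement} of Theorem \ref{thm:main}, since an upper bound on $\max_t \mathbb{E}[V^{\pi_t}(\mu)]$ does not control the average in the required direction. The argument instead hinges on recognizing that the telescoping structure in the proof of Theorem \ref{thm:main} already bounds the average, which is the standard ``randomized iterate'' device from the optimization literature \cite{lan2020first}. Once the average bound is isolated, the remaining step is the elementary expectation identity above.
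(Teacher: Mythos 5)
Your proposal is correct and follows essentially the same route as the paper: the paper's own proof of Theorem \ref{thm:main} (and of Proposition \ref{prop:actor}) passes through the intermediate inequality $V^{\pi^*}(\mu)-\max_{0\leq t\leq T-1}\mathbb{E}[V^{\pi_t}(\mu)]\leq V^{\pi^*}(\mu)-\frac{1}{T}\sum_{t=0}^{T-1}\mathbb{E}[V^{\pi_t}(\mu)]\leq E_1+E_2+E_3+E_4$, so the average bound you extract is already present there, and the corollary follows from the identity $\mathbb{E}[V^{\pi_{T'}}(\mu)]=\frac{1}{T}\sum_{t=0}^{T-1}\mathbb{E}[V^{\pi_t}(\mu)]$ for $T'$ uniform and independent of the trajectory. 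Your observation that the corollary cannot be deduced from the \emph{statement} of Theorem \ref{thm:main} alone, only from its proof, is exactly the right point to make.
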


The convergence guarantee in in Corollary \ref{co:last-iterate} holds for the policy attained by Algorithm \ref{alg:off_policy_NPG} at a random point between $0$ and $T-1$. However, in practice one usually takes the last policy achieved by the algorithm as the output. Numerical experiments of off-policy NAC algorithm \ref{alg:off_policy_NPG} in Figure \ref{fig:1} shows that in expectation, the algorithm can converges almost monotonically. Theoretically showing a performance bound for $V^{\pi^*}(\mu)-\mathbb{E}\left[V^{\pi_{T-1}}(\mu)\right]$ is a future direction of this work. 
\begin{figure}[ht]
    \centering
    \includegraphics[width=0.4\linewidth]{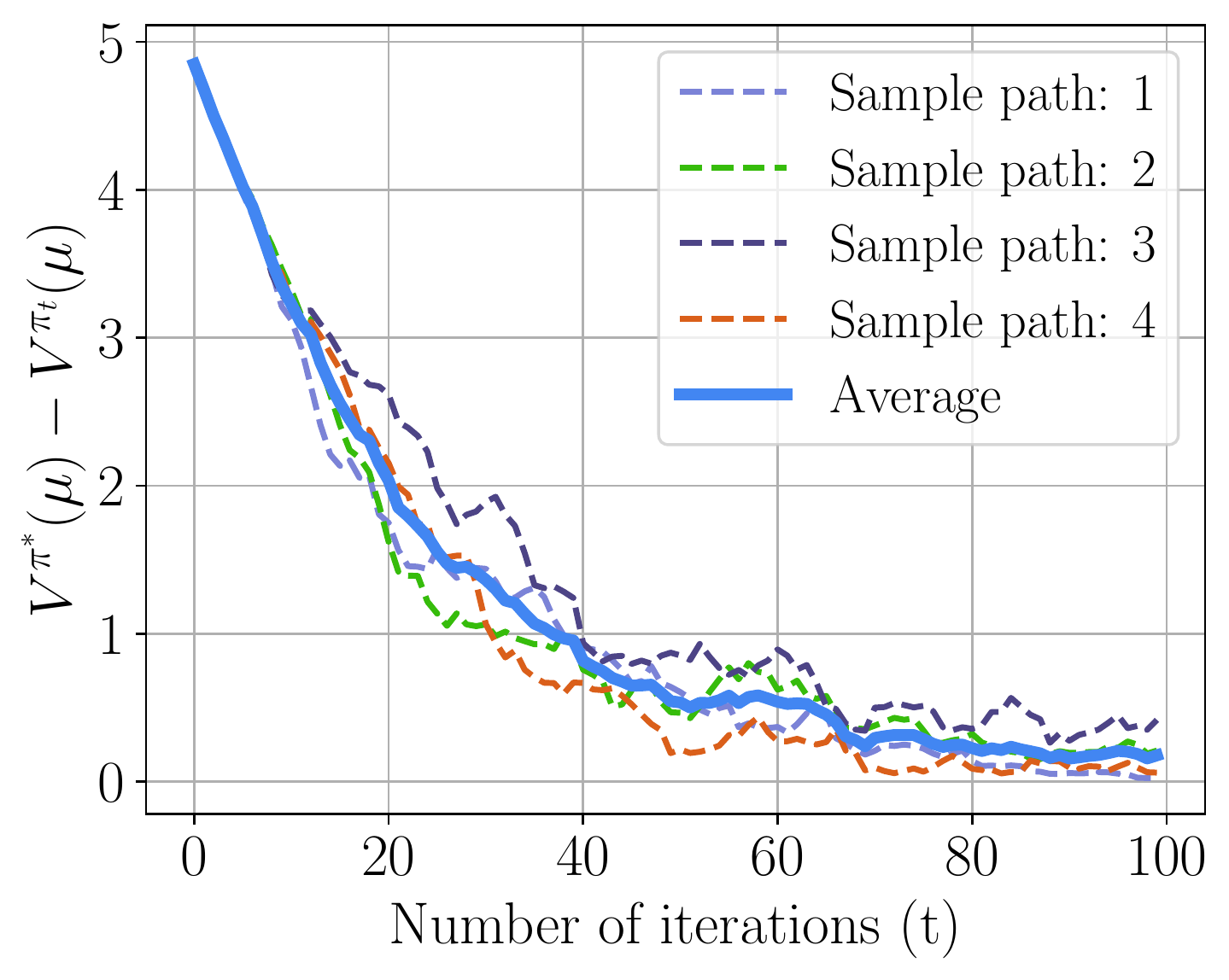}
    \caption{Convergence of Algorithm \ref{alg:off_policy_NPG} on a 5 state, 3 action MDP. Each dashed line is for one sample path of the algorithm, and the solid line is the average of the 4 sample paths. See Appendix \ref{sec:exp_result_details} for more details.}
    \label{fig:1}
\end{figure}

\subsection{Sample Complexity Analysis}\label{subsec:complexity}
With Theorem \ref{thm:main} at hand, we now analyze the sample complexity of off-policy NAC algorithm \ref{alg:off_policy_NPG}. 

\textit{Sample Complexity for Global Optimum.} Suppose that $\bar{\rho} \geq 1/\pi_{b,\min}$, where $\pi_{b,\min}:=\min_{s,a}\pi_b(a|s)$. In this case, we have $E_3=0$, i.e., the bias due to truncation is eliminated, and hence we have convergence to a global optimum. Theorem \ref{thm:main} implies the following sample complexity result, whose proof is presented in Appendix \ref{pf:co:sample_complexity}.

\begin{corollary}\label{co:sample_complexity}
In order to obtain an $\epsilon$-optimal policy, the total number of samples required (i.e., $TK$) is of the size
\begin{align*}
    \mathcal{O}(\epsilon^{-3}\log^2(1/\epsilon)) \Tilde{\mathcal{O}}((1-\gamma)^{-11}M_{\min}^{-3}\pi_{b,\min}^{-2}),
\end{align*}
where $M_{\min}=\min_{s,a}\mu_b(s)\pi_b(a|s)$.
\end{corollary}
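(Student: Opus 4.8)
The plan is to invoke the performance guarantee of Theorem \ref{thm:main} under the standing hypothesis $\bar{\rho}\geq 1/\pi_{b,\min}$. By Lemma \ref{le:bias} (equivalently, by inspection of the formula for $E_3$), this choice makes the truncation-bias term $E_3$ vanish identically, so it remains only to drive the surviving sum $E_1+E_2+E_4$ below $\epsilon$ by choosing the free parameters $\bar{c},\alpha,K,\beta,T$, and then to count the resulting total number of samples $T(K+n)$. I would allocate a budget of $\epsilon/3$ to each of the three terms and solve for the parameters in the order $\bar{c}\to\alpha\to K$ on the critic side and $\beta\to T$ on the actor side, since each later choice depends on the earlier ones; the final count is then the product of the critic depth $K+n$ and the actor depth $T$.

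First I would fix $\bar{c}=1$, which is admissible because $1\leq\bar{c}\leq\bar{\rho}$, so that $f(\bar{c},\gamma)=(1-\gamma^n)/(1-\gamma)\leq 1/(1-\gamma)$ stays bounded and no factor exponential in $n$ appears in $E_2$. Imposing $E_2\leq\epsilon/3$ then caps the admissible size of $\alpha(\tau_\alpha+n+1)$: after substituting $f\leq 1/(1-\gamma)$ and $\bar{\rho}+1=\mathcal{O}(\pi_{b,\min}^{-1})$ and squaring, rearrangement gives a bound of the form $\alpha(\tau_\alpha+n+1)\lesssim \epsilon^2\,\mathrm{poly}(1-\gamma,\,1-\gamma_c)\,\pi_{b,\min}^2/\log(|\mathcal{S}||\mathcal{A}|)$. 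Since $\tau_\alpha\leq L(\log(1/\alpha)+1)$, this is an implicit inequality of the type $\alpha\log(1/\alpha)\lesssim c(\epsilon)$, whose solution is $\alpha=\tilde{\mathcal{O}}(c(\epsilon))$; concretely $\alpha$ scales like $\epsilon^2$ up to the indicated polynomial factors and a single logarithmic correction, so that $1/\alpha=\tilde{\mathcal{O}}(\epsilon^{-2})$ already carries one factor of $\log(1/\epsilon)$. I would then verify that this $\alpha$ satisfies the two-sided stepsize condition required by Theorem \ref{thm:Q-trace}, which is itself an upper bound on $\alpha(\tau_\alpha+n+1)$ of the same algebraic shape and is therefore automatically met once $\epsilon$ is small.

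With $\alpha$ fixed, $E_1\leq\epsilon/3$ determines $K$: using $(1-\tfrac{1-\gamma_c}{2}\alpha)^m\leq\exp(-\tfrac{1-\gamma_c}{2}\alpha m)$ and solving for the exponent yields $K=\tau_\alpha+n+1+\mathcal{O}\left(\frac{1}{(1-\gamma_c)\alpha}\log\frac{1}{\epsilon(1-\gamma)^3}\right)=\tilde{\mathcal{O}}(\epsilon^{-2})$, where the two logarithmic factors (one from the $E_1$ solve, one already present inside $1/\alpha$) compound to $\log^2(1/\epsilon)$. On the actor side I would fix $\beta$ to a constant (of order $1-\gamma$) and impose $E_4\leq\epsilon/3$, giving $T=\mathcal{O}(\log(e|\mathcal{A}|)/[(1-\gamma)^2\beta\epsilon])=\mathcal{O}(\epsilon^{-1})$ with no further power of $\log(1/\epsilon)$. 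Since $\tau_\alpha+n=\mathcal{O}(\log(1/\epsilon))\ll K$, we have $K+n=\Theta(K)$ and the requirement $K\geq\tau_\alpha+n+1$ of Theorem \ref{thm:main} holds. Multiplying, $T(K+n)=\tilde{\mathcal{O}}(\epsilon^{-1})\cdot\tilde{\mathcal{O}}(\epsilon^{-2}\log^2(1/\epsilon))=\mathcal{O}(\epsilon^{-3}\log^2(1/\epsilon))$ times the problem-dependent factors, and Corollary \ref{co:last-iterate} lets me phrase the identical guarantee for a uniformly sampled iterate.

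The step I expect to be the main obstacle is not conceptual but the careful accounting of the problem-dependent constants, in particular recovering the exponents $(1-\gamma)^{-11}$, $M_{\min}^{-3}$, and $\pi_{b,\min}^{-2}$. The choice $\bar{\rho}=1/\pi_{b,\min}$ puts $(\bar{\rho}+1)^2=\mathcal{O}(\pi_{b,\min}^{-2})$ into the denominator controlling $\alpha$ and hence into $K$, which cleanly supplies the $\pi_{b,\min}^{-2}$. The $M_{\min}$ dependence is subtler: the critic update is \emph{asynchronous} (only the visited pair $(S_k,A_k)$ is refreshed at each step), so the minimal state-action visitation $M_{\min}$ enters the effective contraction and must be extracted from $\gamma_c$ through Proposition \ref{prop:Q-trace-property}(3)(b); the three accumulated powers $(1-\gamma_c)^{-3}$ appearing in $K$ then produce the $M_{\min}^{-3}$. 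Finally, the powers of $(1-\gamma)^{-1}$ coming from $E_1$ and $E_2$, from $f(\bar{c},\gamma)\leq 1/(1-\gamma)$, from $E_4$, and from the choice $\beta=\Theta(1-\gamma)$ must be summed without losing a factor, combining to $(1-\gamma)^{-11}$. The only real risk is an arithmetic slip in solving the implicit $\alpha\log(1/\alpha)$ inequality or in tallying these powers once $\gamma_c$ is expanded, rather than any missing idea.
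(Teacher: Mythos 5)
Your proposal is correct and follows essentially the same route as the paper: kill $E_3$ by taking $\bar{\rho}\geq 1/\pi_{b,\min}$, then solve $E_4\leq\epsilon$ for $T=\mathcal{O}(\epsilon^{-1})$, $E_2\leq\epsilon$ for $\alpha=\tilde{\mathcal{O}}(\epsilon^2)$, and $E_1\leq\epsilon$ for $K=\tilde{\mathcal{O}}(\epsilon^{-2}\log^2(1/\epsilon))$, multiplying to get $TK$, with the $\pi_{b,\min}^{-2}$ entering through $\bar{\rho}^2$ in $\alpha$ and the $M_{\min}^{-3}$ through $(1-\gamma_c)^{-1}$ exactly as you describe. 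The only minor deviation is your choice $\beta=\Theta(1-\gamma)$, which would cost one extra power of $(1-\gamma)^{-1}$ in $T$ relative to the paper's implicit $\beta=\Theta(1)$; this affects only the problem-dependent prefactor, not the $\epsilon$-dependence.
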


The $\mathcal{O}(\epsilon^{-3}\log^2(1/\epsilon))$ dependence on the accuracy $\epsilon$ advances the state of the art results in on-policy NAC. See Table \ref{table: results} for more details. The dependence on the state-action space is at least $|\mathcal{S}|^3|\mathcal{A}|^5$, which is achieved when $\pi_b(a|s)=\frac{1}{|\mathcal{A}|}$ for all $a$ and $\mu_b(s)=\frac{1}{|\mathcal{S}|}$ for all $s$ (i.e., uniform exploration). The $\mathcal{\Tilde{O}}((1-\gamma)^{-11})$ dependence on the discount factor while seemingly loose, agrees with known results about NPG in \cite{agarwal2019theory} (Corollary 6.3). See Appendix \ref{ap:compute-sample-complexity} for more details about the comparison to \cite{agarwal2019theory}.

Note that in off-policy TD-learning, one set of samples can be used multiple times to evaluate different policies. Therefore, it is natural to consider repeatedly using the same set of samples in the critic (the $Q$-trace algorithm) in the off-policy NAC algorithm. In that case, the sample complexity is reduced from $KT=\Tilde{\mathcal{O}}(\epsilon^{-3})$ to only $K=\Tilde{\mathcal{O}}(\epsilon^{-2})$. Although this approach seems reasonable, numerical experiments suggest that it may lead to the divergence of Algorithm \ref{alg:off_policy_NPG}. See Appendix \ref{sec:exp_result_details} for more details.

\section{Proof Sketch of Our Main Results}\label{sec:proof}

In this section, we present the key steps in proving Theorems \ref{thm:Q-trace} and \ref{thm:main}.

\subsection{Proof Sketch of Theorem \ref{thm:Q-trace}}
To prove Theorem \ref{thm:Q-trace}, we begin by introducing some notations. For any $k\geq 0$, let $X_k=(S_k,A_k,...,S_{k+n})$. It is clear that $\{X_k\}$ is a Markov chain, whose state-space is denoted by $\mathcal{X}$. Moreover, under Assumption \ref{as:MC}, the Markov chain $\{X_k\}$ has a unique stationary distribution, denoted by $\mu_X$. Let $\mathcal{T}:\mathbb{R}^{|\mathcal{S}||\mathcal{A}|}\times\Pi\times\mathcal{X}\mapsto\mathbb{R}^{|\mathcal{S}||\mathcal{A}|}$ be an operator defined by 
\begin{align*}
    &[\mathcal{T}(Q,\pi,x)](s,a)=[\mathcal{T}(Q,\pi,s_0,a_0,...,s_n)](s,a)\\
    =\;&\mathbb{I}_{(s,a)=(s_0,a_0)}\sum_{i=0}^{n-1}\gamma^i\prod_{j=1}^ic_\pi(s_j,a_j)(\mathcal{R}(s_i,a_i)+\gamma \rho_\pi(s_{i+1},a_{i+1})Q(s_{i+1},a_{i+1})-Q(s_i,a_i))+Q(s,a)
\end{align*}
for all $(s,a)$. We further define $\mathcal{T}_e:\mathbb{R}^{|\mathcal{S}||\mathcal{A}|}\times\Pi\mapsto\mathbb{R}^{|\mathcal{S}||\mathcal{A}|}$ by $\mathcal{T}_e(Q,\pi)=\mathbb{E}_{X\sim \mu_X}\mathcal{T}(Q,\pi,X)$, which can be viewed as the expected version of the operator $\mathcal{T}$.

Using the notation given above, the $Q$-trace update equation (Algorithm \ref{alg:Q-trace} line 5) can be equivalently written by
\begin{align}
    Q_{k+1}=\;&Q_k+\alpha (\mathcal{T}(Q_k,\pi,X_k)-Q_k)\label{alg:Q-trace-remodel}\\
    =\;&Q_k+\alpha (\mathcal{T}_e(Q_k,\pi)-Q_k)+\alpha(\mathcal{T}(Q_k,\pi,X_k)-\mathcal{T}_e(Q_k,\pi))\tag{$*$}
\end{align}
The above update equation can be viewed as a stochastic approximation algorithm for solving the fixed-point equation $\mathcal{T}_e(Q,\pi)=Q$ with Markovian noise. To see this, assume for the moment that the term ($*$) is identically zero. Then the Algorithm is the fixed-point iteration for solving the equation $\mathcal{T}_e(Q,\pi)=Q$, and it is known to converge when the operator $\mathcal{T}_e(\cdot,\pi)$ is a \textit{contraction mapping} \cite{banach1922operations}. Now in the presence of the term $(*)$, the algorithm becomes a Markovian stochastic approximation algorithm for solving $\mathcal{T}_e(Q,\pi)=Q$.

Intuitively, once we show the desired contraction property of the operator $\mathcal{T}_e(\cdot,\pi)$ and have control on the error caused by the Markovian noise $(*)$, we should be able to establish the convergence bounds of Algorithm (\ref{alg:Q-trace-remodel}). In order to show such properties, we need the following notation.
\begin{enumerate}[(1)]
    \item Let $\pi_{\bar{c}}$ and $\pi_{\bar{\rho}}$ be two policies defined by
    \begin{align*}
        \pi_{\bar{c}}(a|s)=\frac{\min(\bar{c}\pi_b(a|s),\pi(a|s))}{\sum_{a'}\min(\bar{c}\pi_b(a'|s),\pi(a'|s))}\quad\text{and}\quad \pi_{\bar{\rho}}(a|s)=\frac{\min(\bar{\rho}\pi_b(a|s),\pi(a|s))}{\sum_{a'}\min(\bar{\rho}\pi_b(a'|s),\pi(a'|s))},\;\forall\;(s,a).
    \end{align*}
    \item Let $C_\pi,D_\pi\in\mathbb{R}^{|\mathcal{S}||\mathcal{A}|\times |\mathcal{S}||\mathcal{A}|}$ be diagonal matrices s.t. $C_\pi((s,a),(s,a))=\sum_{a}\min(\bar{c}\pi_b(a|s),\pi(a|s))$ and $D_\pi((s,a),(s,a))=\sum_{a}\min(\bar{\rho}\pi_b(a|s),\pi(a|s))$
    for all $(s,a)$. Let $C_{\min}=\bar{c}\min_{s,a}\pi_b(a|s)$. Note that we have $C_{\min} I\leq C_\pi\leq D_\pi\leq I$ (component-wise).
    \item Let $P_\pi\in\mathbb{R}^{|\mathcal{S}||\mathcal{A}|\times |\mathcal{S}||\mathcal{A}|}$ be a stochastic matrix defined by $P_\pi((s,a),(s',a'))=P_a(s,s')\pi(a'|s')$, i.e., the probability of transition from $(s,a)$ to $(s',a')$ under policy $\pi$. Let $R$ be a vector in $\mathbb{R}^{|\mathcal{S}||\mathcal{A}|}$ such that $R(s,a)=\mathcal{R}(s,a)$ for all $(s,a)$.
    \item Let $M\in\mathbb{R}^{|\mathcal{S}||\mathcal{A}|\times |\mathcal{S}||\mathcal{A}|}$ be a diagonal matrix such that $M((s,a),(s,a))=\mu_b(s)\pi_b(a|s)$, which is the steady-state probability of visiting $(s,a)$. Let $M_{\min}=\min_{s,a}\mu_b(s)\pi_b(a|s)$. Note that  $0<M_{\min}<1$ under Assumption \ref{as:MC}.
\end{enumerate}

Now we are ready to establish the desired properties of Algorithm (\ref{alg:Q-trace-remodel}) in the following proposition, whose proof is presented in Appendix \ref{pf:prop:Q-trace-property}.

\begin{proposition}\label{prop:Q-trace-property}
The following properties hold regarding the operators $\mathcal{T}(\cdot)$, $\mathcal{T}_e(\cdot)$, and the Markov chain $\{X_k\}$.
\begin{enumerate}[(1)]
    \item The operator $\mathcal{T}(\cdot)$ satisfies $\|\mathcal{T}(Q_1,\pi,x)-\mathcal{T}(Q_2,\pi,x)\|_\infty\leq 2(\bar{\rho}+1)f(\bar{c},\gamma)\|Q_1-Q_2\|_\infty$ and $\|\mathcal{T}(\bm{0},\pi,x)\|_\infty\leq f(\bar{c},\gamma)$ for any $Q_1,Q_2\in\mathbb{R}^{|\mathcal{S}||\mathcal{A}|}$, $\pi\in \Pi$, and $x\in\mathcal{X}$.
    \item For all $k\geq 0$, it holds that 
    \begin{align*}
        \max_{x\in\mathcal{X}}\|P^{k+n+1}(x,\cdot)-\mu_X(\cdot)\|_{\text{TV}}\leq Cu^k,
    \end{align*}
    where $\|\cdot\|_{\text{TV}}$ is the \textit{total variation distance}.
    \item The operator $\mathcal{T}_e(\cdot)$ has the following properties:
    \begin{enumerate}[(a)]
        \item $\mathcal{T}_e(\cdot,\pi)$ is 
a linear operator given by $\mathcal{T}_e(Q,\pi)=AQ+b$, where $A=I-\sum_{i=0}^{n-1}\gamma^iM(P_{\pi_{\bar{c}}}C_\pi)^{i}(I-\gamma P_{\pi_{\bar{\rho}}}D_\pi)$ and $b=\sum_{i=0}^{n-1}\gamma^iM(P_{\pi_{\bar{c}}}C_\pi)^{i}R$.
\item $\mathcal{T}_e(\cdot,\pi)$ is 
a \textit{contraction mapping} with respect to $\|\cdot\|_\infty$, with contraction factor
\begin{align*}
    \gamma_c=1-\frac{M_{\min}(1-\gamma)(1-(\gamma C_{\min})^n)}{1-\gamma C_{\min}}.
\end{align*}
\item $\mathcal{T}_e(\cdot,\pi)$ has a unique fixed-point $Q^{\bar{\rho},\pi}$, which is the unique solution to the modified Bellman's equation $Q=R+\gamma P_{\pi_{\bar{\rho}}} D_\pi Q$.
    \end{enumerate}
\end{enumerate}
\end{proposition}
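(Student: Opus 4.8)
The plan is to handle the four items in turn, disposing of the two analytic bounds (1)--(2) first and then establishing the three structural facts about $\mathcal{T}_e$ in (3). For item (1), I would estimate $\mathcal{T}$ term by term. In the difference $\mathcal{T}(Q_1,\pi,x)-\mathcal{T}(Q_2,\pi,x)$ the reward terms cancel, leaving $\mathbb{I}_{(s,a)=(s_0,a_0)}\sum_{i=0}^{n-1}\gamma^i\prod_{j=1}^ic_\pi(s_j,a_j)\big(\gamma\rho_\pi(s_{i+1},a_{i+1})(Q_1-Q_2)(s_{i+1},a_{i+1})-(Q_1-Q_2)(s_i,a_i)\big)$ together with the trailing $(Q_1-Q_2)(s,a)$. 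Using $c_\pi\le\bar{c}$, $\rho_\pi\le\bar{\rho}$ and $\sum_{i=0}^{n-1}(\gamma\bar{c})^i=f(\bar{c},\gamma)$, each bracket is at most $(\gamma\bar{\rho}+1)\|Q_1-Q_2\|_\infty\le(\bar{\rho}+1)\|Q_1-Q_2\|_\infty$, so the sum is bounded by $(\bar{\rho}+1)f(\bar{c},\gamma)\|Q_1-Q_2\|_\infty$; adding the trailing term and using $f(\bar{c},\gamma)\ge1$, $\bar{\rho}+1\ge2$ gives the factor $2(\bar{\rho}+1)f(\bar{c},\gamma)$. Setting $Q=\bm{0}$ and $\mathcal{R}\in[0,1]$ yields $\|\mathcal{T}(\bm{0},\pi,x)\|_\infty\le\sum_{i=0}^{n-1}(\gamma\bar{c})^i=f(\bar{c},\gamma)$. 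For item (2), I would lift the geometric mixing of $\{S_k\}$ from Assumption \ref{as:MC} to the block chain: since $X_{k+n+1}$ begins at $S_{k+n+1}$ and, by the Markov property of the behavior chain, depends on $X_0$ only through its last coordinate $S_n$, I can write $\mathbb{P}(X_{k+n+1}\in\cdot\mid X_0=x)=\sum_s P^{k+1}(S_n,s)\,g(\cdot\mid s)$ and $\mu_X(\cdot)=\sum_s\mu_b(s)\,g(\cdot\mid s)$ for the common window kernel $g$; mixing over a shared kernel is a total-variation contraction, so the distance is at most $\|P^{k+1}(S_n,\cdot)-\mu_b\|_{\text{TV}}\le Cu^{k+1}\le Cu^k$.

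For item (3)(a) I would compute $\mathcal{T}_e(Q,\pi)=\mathbb{E}_{X\sim\mu_X}\mathcal{T}(Q,\pi,X)$ by iterated conditioning; affineness in $Q$ is immediate. The outer indicator against $(S_0,A_0)\sim M$ produces the diagonal factor $M$. The crucial observation is that each importance-sampling factor collapses a behavior-policy transition into a truncated-policy transition: for any $h$, $\mathbb{E}_{A\sim\pi_b(\cdot|s)}[c_\pi(s,A)h(s,A)]=\sum_a\min(\bar{c}\pi_b(a|s),\pi(a|s))h(s,a)=C_\pi(s,s)\,\mathbb{E}_{A\sim\pi_{\bar{c}}(\cdot|s)}[h(s,A)]$, and analogously $\rho_\pi$ produces $D_\pi$ and $\pi_{\bar{\rho}}$. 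Propagating this through the product $\prod_{j=1}^i c_\pi$ and the transition kernel turns the length-$i$ segment into $(P_{\pi_{\bar{c}}}C_\pi)^i$, while the single $\rho_\pi$ inside $\Delta_{k,i}$ contributes one factor $\gamma P_{\pi_{\bar{\rho}}}D_\pi$. Collecting the reward, the $\rho$-term, and the $-Q$ term of $\Delta_{k,i}$ gives $\mathcal{T}_e(Q,\pi)=Q+\sum_{i=0}^{n-1}\gamma^iM(P_{\pi_{\bar{c}}}C_\pi)^i\big(R+\gamma P_{\pi_{\bar{\rho}}}D_\pi Q-Q\big)$, which rearranges into the stated $A$ and $b$.

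The heart of the argument, and the step I expect to be hardest, is the contraction (3)(b) with the exact factor $\gamma_c$. Writing $B_c=P_{\pi_{\bar{c}}}C_\pi$ and $B_\rho=P_{\pi_{\bar{\rho}}}D_\pi$ (both nonnegative), I would first prove that $A$ is \emph{entrywise nonnegative}, which the raw expression $A=I-\sum_i\gamma^iMB_c^i(I-\gamma B_\rho)$ does not display. Shifting the summation index regroups it as $A=(I-M)+\sum_{i=1}^{n-1}\gamma^iMB_c^{i-1}(B_\rho-B_c)+\gamma^nMB_c^{n-1}B_\rho$. Here $I-M\ge0$ (diagonal entries $1-\mu_b(s)\pi_b(a|s)>0$), and the decisive point is that $\bar{\rho}\ge\bar{c}$ forces $\min(\bar{\rho}\pi_b,\pi)\ge\min(\bar{c}\pi_b,\pi)$ entrywise, hence $B_\rho\ge B_c\ge0$ and every remaining summand is nonnegative. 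Once $A\ge0$, its induced $\infty$-norm equals $\max_{(s,a)}(A\mathbf{1})(s,a)$, so it suffices to bound row sums. From $B_\rho\mathbf{1}\le\mathbf{1}$ I get $(I-\gamma B_\rho)\mathbf{1}\ge(1-\gamma)\mathbf{1}$, and from $C_\pi\ge C_{\min}I$, $M\ge M_{\min}I$ I get $MB_c^i\big((1-\gamma)\mathbf{1}\big)\ge(1-\gamma)M_{\min}C_{\min}^i\mathbf{1}$; summing $\sum_{i=0}^{n-1}(\gamma C_{\min})^i$ gives $A\mathbf{1}\le\big(1-\tfrac{M_{\min}(1-\gamma)(1-(\gamma C_{\min})^n)}{1-\gamma C_{\min}}\big)\mathbf{1}=\gamma_c\mathbf{1}$. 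Nonnegativity of $A$ forces $\gamma_c\ge0$, the subtracted term is strictly positive so $\gamma_c<1$, and $\gamma_c$ depends only on $M_{\min},C_{\min},\gamma,n$, hence is independent of $\pi$, as needed downstream. Finally, item (3)(c) is short: since $\gamma B_\rho$ has $\infty$-norm at most $\gamma<1$, the map $Q\mapsto R+\gamma P_{\pi_{\bar{\rho}}}D_\pi Q$ is a contraction with a unique solution $Q^{\bar{\rho},\pi}$; substituting it into the formula from (3)(a) annihilates the bracket, so $\mathcal{T}_e(Q^{\bar{\rho},\pi},\pi)=Q^{\bar{\rho},\pi}$, and by the contraction in (3)(b) this is its unique fixed point.
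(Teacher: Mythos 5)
Your proposal is correct and follows essentially the same route as the paper's proof: the same term-by-term bounds via $c_\pi\leq\bar{c}$, $\rho_\pi\leq\bar{\rho}$ for (1), the same reduction of the block chain's mixing to that of $\{S_k\}$ for (2), the same iterated-conditioning identity turning truncated IS weights into $P_{\pi_{\bar{c}}}C_\pi$ and $P_{\pi_{\bar{\rho}}}D_\pi$ for (3)(a), the identical regrouping of $A$ to expose entrywise nonnegativity followed by the row-sum bound for (3)(b), and invertibility of $I-\gamma P_{\pi_{\bar{\rho}}}D_\pi$ for (3)(c). The only cosmetic imprecision is in (2), where the conditioning on $X_0=x$ fixes $A_n=a_n$, so the relevant kernel is $\sum_{s}P_{a_n}(s_n,s)P^k(s,\cdot)$ rather than $P^{k+1}(s_n,\cdot)$; averaging over $s$ still yields the bound $Cu^k$ exactly as in the paper.
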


Several remarks are in order. First, using Proposition \ref{prop:Q-trace-property} (1), we have by triangle inequality that
\begin{align*}
    \|\mathcal{T}(Q,\pi,x)\|_\infty\leq 2f(\bar{c},\gamma)((\bar{\rho}+1)\|Q\|_\infty+1)
\end{align*}
for any $Q$, $\pi$ and $x$. This is important to control the Markovian noise as it implies that the noisy operator $\|\mathcal{T}(Q_k,\pi,X_k)\|_\infty$ is at most an affine function of $\|Q_k\|_\infty$.

Proposition \ref{prop:Q-trace-property} (2) implies that the Markov chain $\{X_k\}$ mixes geometrically fast, which is also an important property we need to control the Markovian noise.

Proposition \ref{prop:Q-trace-property} (3) establishes all the desired properties for the expected operator $\mathcal{T}_e(\cdot)$. First of all, $\mathcal{T}_e(\cdot,\pi)$ is a contraction operator, with a contraction factor $\gamma_c$ independent of the target policy $\pi$. This uniform contraction property is necessary for us to combine the critic with the actor later in Section \ref{subsec:combine}, as the policy $\pi_t$ is time-varying. 

Note that from Proposition \ref{prop:Q-trace-property} (3) (c) we see that when $\bar{\rho}\geq \max_{s,a}\frac{\pi(a|s)}{\pi_b(a|s)}$, such modified Bellman's equation becomes the regular Bellman's equation for $Q^\pi$, and hence we have $Q^{\bar{\rho},\pi}=Q^\pi$, which agrees with Lemma \ref{le:bias}. 

The above proposition enables us to interpret Eq. (\ref{alg:Q-trace-remodel}) as a Markovian Stochastic Approximation involving a contraction mapping. Theorem \ref{thm:Q-trace} then follows from using finite-sample bounds on Markovian Stochastic Approximation established in \cite{chen2021finite}. See Appendix \ref{pf:thm:Q-trace} for the detailed proof.

\subsection{Proof Sketch of Theorem \ref{thm:main}}

The high level idea of proving Theorem \ref{thm:main} is as follows. We first analyze the iterates $\{\pi_t\}$ updated by the actor in Algorithm \ref{alg:off_policy_NPG}. The performance bound of $\pi_t$ would involve the error in the critic estimate, i.e., the difference between $Q_{t+1}$ and $Q^{\pi_t}$. We then use Corollary \ref{co:critic} of the $Q$-trace algorithm \ref{alg:Q-trace} to control the critic estimation error and finish the proof of Theorem \ref{thm:main}.

\subsubsection{Analysis of the Actor}

By analyzing the update of the actor, we obtain the performance bound of $\{\pi_t\}$ in the following proposition.

\begin{proposition}\label{prop:actor}
Consider iterates $\{\pi_t\}$ of Algorithm \ref{alg:off_policy_NPG}. We have for any $T\geq 1$:
\begin{align*}
    V^{\pi^*}(\mu)-\max_{0\leq t\leq T-1}\mathbb{E}\left[V^{\pi_t}(\mu)\right] \leq \underbrace{\frac{\log(e|\mathcal{A}|)}{(1-\gamma)^2\beta T}}_{\text{Error in the actor}}+\underbrace{\frac{4}{(1-\gamma)^2T}\sum_{t=0}^{T-1}\mathbb{E}[\|Q^{\pi_t}-Q_{t+1}\|_\infty]}_{\text{Error in the Critic}}.
\end{align*}
\end{proposition}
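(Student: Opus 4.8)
The plan is to treat the actor update in line \ref{eq:actor} as a mirror-descent step in policy space and to follow the Natural Policy Gradient template of \cite{agarwal2019theory}, but with the critic error $\|Q^{\pi_t}-Q_{t+1}\|_\infty$ carried along as a controlled perturbation. The two main tools are the performance difference lemma (see \cite{agarwal2019theory}), $V^{\pi^*}(\mu)-V^{\pi_t}(\mu)=\frac{1}{1-\gamma}\sum_s d_\mu^{\pi^*}(s)\sum_a\pi^*(a|s)A^{\pi_t}(s,a)$ with $A^{\pi_t}(s,a)=Q^{\pi_t}(s,a)-V^{\pi_t}(s)$, and the Kullback--Leibler potential $\mathcal{KL}_t(s):=\mathcal{KL}(\pi^*(\cdot|s)\mid\pi_t(\cdot|s))$, which I will telescope across $t$.

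First I would use the explicit form of the update. Writing $Z_t(s)=\sum_a\pi_t(a|s)\exp(\beta Q_{t+1}(s,a))$, line \ref{eq:actor} gives $\log\frac{\pi_{t+1}(a|s)}{\pi_t(a|s)}=\beta Q_{t+1}(s,a)-\log Z_t(s)$, hence $\mathcal{KL}_t(s)-\mathcal{KL}_{t+1}(s)=\beta\sum_a\pi^*(a|s)Q_{t+1}(s,a)-\log Z_t(s)$. Substituting $Q^{\pi_t}=Q_{t+1}+(Q^{\pi_t}-Q_{t+1})$ into the performance difference lemma and using this identity yields, for each $t$,
\begin{align*}
V^{\pi^*}(\mu)-V^{\pi_t}(\mu)
&=\frac{1}{\beta(1-\gamma)}\sum_s d_\mu^{\pi^*}(s)\big(\mathcal{KL}_t(s)-\mathcal{KL}_{t+1}(s)\big)\\
&\quad+\frac{1}{1-\gamma}\sum_s d_\mu^{\pi^*}(s)\Big(\tfrac{1}{\beta}\log Z_t(s)-V^{\pi_t}(s)\Big)\\
&\quad+\frac{1}{1-\gamma}\sum_s d_\mu^{\pi^*}(s)\sum_a\pi^*(a|s)\big(Q^{\pi_t}(s,a)-Q_{t+1}(s,a)\big),
\end{align*}
where the last line is bounded in absolute value by $\frac{1}{1-\gamma}\|Q^{\pi_t}-Q_{t+1}\|_\infty$.

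The crux, and the step I expect to be the main obstacle, is bounding the log-partition term $\frac{1}{\beta}\log Z_t(s)-V^{\pi_t}(s)$ so that after summing over $t$ it telescopes instead of leaving a non-vanishing constant. The route is the variational identity $\frac{1}{\beta}\log Z_t(s)=\sum_a\pi_{t+1}(a|s)Q_{t+1}(s,a)-\frac{1}{\beta}\mathcal{KL}(\pi_{t+1}(\cdot|s)\mid\pi_t(\cdot|s))\le\sum_a\pi_{t+1}(a|s)Q_{t+1}(s,a)$, which gives $\frac{1}{\beta}\log Z_t(s)-V^{\pi_t}(s)\le\sum_a\pi_{t+1}(a|s)A^{\pi_t}(s,a)+\|Q^{\pi_t}-Q_{t+1}\|_\infty$. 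In exact NPG one finishes by invoking $\sum_a\pi_{t+1}(a|s')A^{\pi_t}(s',a)\ge0$ for all $s'$ and extracting the $j=0$ term of $d_s^{\pi_{t+1}}$ to get $\sum_a\pi_{t+1}(a|s)A^{\pi_t}(s,a)\le V^{\pi_{t+1}}(s)-V^{\pi_t}(s)$. With an inexact critic this nonnegativity fails, so instead I would center $Q_{t+1}$ by $\sum_{a'}\pi_t(a'|s)Q_{t+1}(s,a')$, apply Jensen to the resulting partition function to show $\sum_a\pi_{t+1}(a|s')A^{\pi_t}(s',a)\ge-2\|Q^{\pi_t}-Q_{t+1}\|_\infty$ for every $s'$, and feed this lower bound into the performance difference lemma for $V^{\pi_{t+1}}(s)-V^{\pi_t}(s)$ (whose off-diagonal visitation weights sum to $\gamma$). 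This upgrades the exact inequality to
\begin{align*}
\tfrac{1}{\beta}\log Z_t(s)-V^{\pi_t}(s)\le V^{\pi_{t+1}}(s)-V^{\pi_t}(s)+\tfrac{2}{1-\gamma}\|Q^{\pi_t}-Q_{t+1}\|_\infty.
\end{align*}

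Finally I would sum the per-iteration inequality over $t=0,\dots,T-1$. The KL differences telescope to $\sum_s d_\mu^{\pi^*}(s)\mathcal{KL}_0(s)\le\log|\mathcal{A}|$ (using the uniform initialization $\pi_0$ and $\mathcal{KL}_T\ge0$), and the value differences telescope to $\frac{1}{1-\gamma}\sum_s d_\mu^{\pi^*}(s)(V^{\pi_T}(s)-V^{\pi_0}(s))\le\frac{1}{(1-\gamma)^2}$ by the range of the value function. Dividing by $T$ and absorbing the $\frac{1}{(1-\gamma)^2T}$ constant into the KL term (via the mild stepsize bound $\beta\le1$) produces the $\frac{\log(e|\mathcal{A}|)}{(1-\gamma)^2\beta T}$ contribution, while the two critic-error sources collect into $\frac{O(1)}{(1-\gamma)^2T}\sum_t\|Q^{\pi_t}-Q_{t+1}\|_\infty$. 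Taking expectations and using $\max_{0\le t\le T-1}\mathbb{E}[V^{\pi_t}(\mu)]\ge\frac{1}{T}\sum_t\mathbb{E}[V^{\pi_t}(\mu)]$ then yields the claim. The only delicate bookkeeping is the constant on the critic error, which must be tracked through both places it enters — the direct $Q^{\pi_t}-Q_{t+1}$ replacement and the improvement lower bound — to arrive at the stated numerical constant.
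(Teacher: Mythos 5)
Your proposal is correct and follows essentially the same route as the paper's proof: the exact decomposition of $V^{\pi^*}(\mu)-V^{\pi_t}(\mu)$ via the performance difference lemma and the telescoping KL potential is the paper's Lemma \ref{le:Vnu}, and your pointwise bound on $\frac{1}{\beta}\log Z_t(s)-V^{\pi_t}(s)$ via Jensen on the partition function plus a critic-error-perturbed improvement lower bound is exactly the content of Lemmas \ref{le:logzt} and \ref{le:Vmu} (the paper packages the latter in expectation over $d^*$ using $d^{t+1}\geq(1-\gamma)\mu$ rather than pointwise, ending with constant $3/(1-\gamma)$ in Eq.~(\ref{eq:10}) where you get $2/(1-\gamma)$, but both land within the stated $4/(1-\gamma)^2$). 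The remaining steps — telescoping, dividing by $T$, absorbing the $1/(1-\gamma)^2T$ term into $\log(e|\mathcal{A}|)$ via $\beta\leq 1$, and replacing the average by the max — match the paper exactly.
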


The proof of Proposition \ref{prop:actor} is inspired by that of Theorem 5.3 in \cite{agarwal2019theory}, and is presented in Appendix \ref{pf:prop:actor}. The main difference is that in \cite{agarwal2019theory} they assume access to the dynamics of the underlying MDP. Hence they can directly use the $Q$-function $Q^{\pi_t}$ in the policy update. Here in the RL setting, we can only use the noisy estimate $Q_t$ to perform the policy update. As a consequence, when compared to Theorem 5.3 of \cite{agarwal2019theory}, we have the critic error term $\frac{4}{(1-\gamma)^2T}\sum_{t=0}^{T-1}\mathbb{E}[\|Q^{\pi_t}-Q_{t+1}\|_\infty]$ on the RHS of the resulting inequality of Proposition \ref{prop:actor}.

\subsubsection{Combining the Actor and the Critic}\label{subsec:combine}
In view of Proposition \ref{prop:actor}, what remains to do in proving Theorem \ref{thm:main} is to apply Corollary \ref{co:critic} to control the error term $\mathbb{E}[\|Q^{\pi_t}-Q_{t+1}\|_\infty]$ for any $0\leq t\leq T-1$. However, there is a challenge in doing this. Corollary \ref{co:critic} and Theorem \ref{thm:Q-trace} are stated for a fixed target policy $\pi$, while in Algorithm \ref{alg:off_policy_NPG} the policies $\pi_t$ are stochastic. We overcome 
this challenge by using a conditioning argument and exploiting Markovian nature of the samples. The full details are presented in Appendix \ref{pf:thm:main}.

\section{Conclusion and Future Work} \label{sec:con}

In this work, we study the convergence bounds of NAC, where the critic uses the $Q$-trace algorithm to perform off-policy learning. Such off-policy NAC algorithm enables us to overcome the difficulty of exploration in on-policy NAC, and establish the convergence bounds under minimal assumptions. A future direction is to extend our results to the case where function approximation is used. Note that off-policy TD with function approximation can be unstable in general \cite{sutton2018reinforcement}. The first step in this direction is to modify the algorithm to achieve convergence.

\bibliographystyle{imsart-number}
\bibliography{references}    

\begin{thebibliography}{79}

\bibitem{agarwal2019theory}
\begin{barticle}[author]
\bauthor{\bsnm{Agarwal},~\bfnm{Alekh}\binits{A.}},
  \bauthor{\bsnm{Kakade},~\bfnm{Sham~M}\binits{S.~M.}},
  \bauthor{\bsnm{Lee},~\bfnm{Jason~D}\binits{J.~D.}} \AND
  \bauthor{\bsnm{Mahajan},~\bfnm{Gaurav}\binits{G.}}
(\byear{2019}).
\btitle{On the theory of policy gradient methods: Optimality, approximation,
  and distribution shift}.
\bjournal{Preprint arXiv:1908.00261}.
\end{barticle}
\endbibitem

\bibitem{azar2012dynamic}
\begin{barticle}[author]
\bauthor{\bsnm{Azar},~\bfnm{Mohammad~Gheshlaghi}\binits{M.~G.}},
  \bauthor{\bsnm{G{\'o}mez},~\bfnm{Vicen{\c{c}}}\binits{V.}} \AND
  \bauthor{\bsnm{Kappen},~\bfnm{Hilbert~J}\binits{H.~J.}}
(\byear{2012}).
\btitle{Dynamic policy programming}.
\bjournal{The Journal of Machine Learning Research}
\bvolume{13}
\bpages{3207--3245}.
\end{barticle}
\endbibitem

\bibitem{bahdanau2016actor}
\begin{barticle}[author]
\bauthor{\bsnm{Bahdanau},~\bfnm{Dzmitry}\binits{D.}},
  \bauthor{\bsnm{Brakel},~\bfnm{Philemon}\binits{P.}},
  \bauthor{\bsnm{Xu},~\bfnm{Kelvin}\binits{K.}},
  \bauthor{\bsnm{Goyal},~\bfnm{Anirudh}\binits{A.}},
  \bauthor{\bsnm{Lowe},~\bfnm{Ryan}\binits{R.}},
  \bauthor{\bsnm{Pineau},~\bfnm{Joelle}\binits{J.}},
  \bauthor{\bsnm{Courville},~\bfnm{Aaron}\binits{A.}} \AND
  \bauthor{\bsnm{Bengio},~\bfnm{Yoshua}\binits{Y.}}
(\byear{2016}).
\btitle{An actor-critic algorithm for sequence prediction}.
\bjournal{Preprint arXiv:1607.07086}.
\end{barticle}
\endbibitem

\bibitem{banach1922operations}
\begin{barticle}[author]
\bauthor{\bsnm{Banach},~\bfnm{Stefan}\binits{S.}}
(\byear{1922}).
\btitle{Sur les op{\'e}rations dans les ensembles abstraits et leur application
  aux {\'e}quations int{\'e}grales}.
\bjournal{Fund. math}
\bvolume{3}
\bpages{133--181}.
\end{barticle}
\endbibitem

\bibitem{barto1983neuronlike}
\begin{barticle}[author]
\bauthor{\bsnm{{Barto}},~\bfnm{A.~G.}\binits{A.~G.}},
  \bauthor{\bsnm{{Sutton}},~\bfnm{R.~S.}\binits{R.~S.}} \AND
  \bauthor{\bsnm{{Anderson}},~\bfnm{C.~W.}\binits{C.~W.}}
(\byear{1983}).
\btitle{Neuronlike adaptive elements that can solve difficult learning control
  problems}.
\bjournal{IEEE Transactions on Systems, Man, and Cybernetics}
\bvolume{SMC-13}
\bpages{834-846}.
\bdoi{10.1109/TSMC.1983.6313077}
\end{barticle}
\endbibitem

\bibitem{baxter2001infinite}
\begin{barticle}[author]
\bauthor{\bsnm{Baxter},~\bfnm{Jonathan}\binits{J.}} \AND
  \bauthor{\bsnm{Bartlett},~\bfnm{Peter~L}\binits{P.~L.}}
(\byear{2001}).
\btitle{Infinite-horizon policy-gradient estimation}.
\bjournal{Journal of Artificial Intelligence Research}
\bvolume{15}
\bpages{319--350}.
\end{barticle}
\endbibitem

\bibitem{beck2012error}
\begin{barticle}[author]
\bauthor{\bsnm{Beck},~\bfnm{Carolyn~L}\binits{C.~L.}} \AND
  \bauthor{\bsnm{Srikant},~\bfnm{Rayadurgam}\binits{R.}}
(\byear{2012}).
\btitle{Error bounds for constant step-size {$Q$}-learning}.
\bjournal{Systems \& control letters}
\bvolume{61}
\bpages{1203--1208}.
\end{barticle}
\endbibitem

\bibitem{beck2013improved}
\begin{binproceedings}[author]
\bauthor{\bsnm{Beck},~\bfnm{Carolyn~L}\binits{C.~L.}} \AND
  \bauthor{\bsnm{Srikant},~\bfnm{Rayadurgam}\binits{R.}}
(\byear{2013}).
\btitle{Improved upper bounds on the expected error in constant step-size
  {$Q$}-learning}.
In \bbooktitle{2013 American Control Conference}
\bpages{1926--1931}.
\bpublisher{IEEE}.
\end{binproceedings}
\endbibitem

\bibitem{bertsekas1996neuro}
\begin{bbook}[author]
\bauthor{\bsnm{Bertsekas},~\bfnm{Dimitri~P}\binits{D.~P.}} \AND
  \bauthor{\bsnm{Tsitsiklis},~\bfnm{John~N}\binits{J.~N.}}
(\byear{1996}).
\btitle{Neuro-dynamic programming}.
\bpublisher{Athena Scientific}.
\end{bbook}
\endbibitem

\bibitem{bhandari2020note}
\begin{barticle}[author]
\bauthor{\bsnm{Bhandari},~\bfnm{Jalaj}\binits{J.}} \AND
  \bauthor{\bsnm{Russo},~\bfnm{Daniel}\binits{D.}}
(\byear{2020}).
\btitle{A note on the linear convergence of policy gradient methods}.
\bjournal{Preprint arXiv:2007.11120}.
\end{barticle}
\endbibitem

\bibitem{bhandari2018finite}
\begin{binproceedings}[author]
\bauthor{\bsnm{Bhandari},~\bfnm{Jalaj}\binits{J.}},
  \bauthor{\bsnm{Russo},~\bfnm{Daniel}\binits{D.}} \AND
  \bauthor{\bsnm{Singal},~\bfnm{Raghav}\binits{R.}}
(\byear{2018}).
\btitle{{A Finite Time Analysis of Temporal Difference Learning With Linear
  Function Approximation}}.
In \bbooktitle{Conference On Learning Theory}
\bpages{1691--1692}.
\end{binproceedings}
\endbibitem

\bibitem{bhatnagar2009natural}
\begin{barticle}[author]
\bauthor{\bsnm{Bhatnagar},~\bfnm{Shalabh}\binits{S.}},
  \bauthor{\bsnm{Sutton},~\bfnm{Richard~S}\binits{R.~S.}},
  \bauthor{\bsnm{Ghavamzadeh},~\bfnm{Mohammad}\binits{M.}} \AND
  \bauthor{\bsnm{Lee},~\bfnm{Mark}\binits{M.}}
(\byear{2009}).
\btitle{Natural actor--critic algorithms}.
\bjournal{Automatica}
\bvolume{45}
\bpages{2471--2482}.
\end{barticle}
\endbibitem

\bibitem{borkar2009stochastic}
\begin{bbook}[author]
\bauthor{\bsnm{Borkar},~\bfnm{Vivek~S}\binits{V.~S.}}
(\byear{2009}).
\btitle{Stochastic approximation: a dynamical systems viewpoint}
\bvolume{48}.
\bpublisher{Springer}.
\end{bbook}
\endbibitem

\bibitem{borkar1997actor}
\begin{barticle}[author]
\bauthor{\bsnm{Borkar},~\bfnm{Vivek~S}\binits{V.~S.}} \AND
  \bauthor{\bsnm{Konda},~\bfnm{Vijaymohan~R}\binits{V.~R.}}
(\byear{1997}).
\btitle{The actor-critic algorithm as multi-time-scale stochastic
  approximation}.
\bjournal{Sadhana}
\bvolume{22}
\bpages{525--543}.
\end{barticle}
\endbibitem

\bibitem{borkar2000ode}
\begin{barticle}[author]
\bauthor{\bsnm{Borkar},~\bfnm{Vivek~S}\binits{V.~S.}} \AND
  \bauthor{\bsnm{Meyn},~\bfnm{Sean~P}\binits{S.~P.}}
(\byear{2000}).
\btitle{The {ODE} method for convergence of stochastic approximation and
  reinforcement learning}.
\bjournal{SIAM Journal on Control and Optimization}
\bvolume{38}
\bpages{447--469}.
\end{barticle}
\endbibitem

\bibitem{bottou2018optimization}
\begin{barticle}[author]
\bauthor{\bsnm{Bottou},~\bfnm{L{\'e}on}\binits{L.}},
  \bauthor{\bsnm{Curtis},~\bfnm{Frank~E}\binits{F.~E.}} \AND
  \bauthor{\bsnm{Nocedal},~\bfnm{Jorge}\binits{J.}}
(\byear{2018}).
\btitle{Optimization methods for large-scale machine learning}.
\bjournal{Siam Review}
\bvolume{60}
\bpages{223--311}.
\end{barticle}
\endbibitem

\bibitem{cen2020fast}
\begin{barticle}[author]
\bauthor{\bsnm{Cen},~\bfnm{Shicong}\binits{S.}},
  \bauthor{\bsnm{Cheng},~\bfnm{Chen}\binits{C.}},
  \bauthor{\bsnm{Chen},~\bfnm{Yuxin}\binits{Y.}},
  \bauthor{\bsnm{Wei},~\bfnm{Yuting}\binits{Y.}} \AND
  \bauthor{\bsnm{Chi},~\bfnm{Yuejie}\binits{Y.}}
(\byear{2020}).
\btitle{Fast global convergence of natural policy gradient methods with entropy
  regularization}.
\bjournal{Preprint arXiv:2007.06558}.
\end{barticle}
\endbibitem

\bibitem{chen2020finite}
\begin{barticle}[author]
\bauthor{\bsnm{Chen},~\bfnm{Zaiwei}\binits{Z.}},
  \bauthor{\bsnm{Maguluri},~\bfnm{Siva~Theja}\binits{S.~T.}},
  \bauthor{\bsnm{Shakkottai},~\bfnm{Sanjay}\binits{S.}} \AND
  \bauthor{\bsnm{Shanmugam},~\bfnm{Karthikeyan}\binits{K.}}
(\byear{2020}).
\btitle{{Finite-Sample Analysis of Contractive Stochastic Approximation Using
  Smooth Convex Envelopes}}.
\bjournal{Advances in Neural Information Processing Systems}
\bvolume{33}.
\end{barticle}
\endbibitem

\bibitem{chen2021finite}
\begin{barticle}[author]
\bauthor{\bsnm{Chen},~\bfnm{Zaiwei}\binits{Z.}},
  \bauthor{\bsnm{Maguluri},~\bfnm{Siva~Theja}\binits{S.~T.}},
  \bauthor{\bsnm{Shakkottai},~\bfnm{Sanjay}\binits{S.}} \AND
  \bauthor{\bsnm{Shanmugam},~\bfnm{Karthikeyan}\binits{K.}}
(\byear{2021}).
\btitle{{A Lyapunov Theory for Finite-Sample Guarantees of Asynchronous
  $Q$-Learning and TD-Learning Variants}}.
\bjournal{Preprint arXiv:2102.01567}.
\end{barticle}
\endbibitem

\bibitem{cover1999elements}
\begin{bbook}[author]
\bauthor{\bsnm{Cover},~\bfnm{Thomas~M}\binits{T.~M.}}
(\byear{1999}).
\btitle{Elements of information theory}.
\bpublisher{John Wiley \& Sons}.
\end{bbook}
\endbibitem

\bibitem{dalal2018finite}
\begin{binproceedings}[author]
\bauthor{\bsnm{Dalal},~\bfnm{Gal}\binits{G.}},
  \bauthor{\bsnm{Sz{\"o}r{\'e}nyi},~\bfnm{Bal{\'a}zs}\binits{B.}},
  \bauthor{\bsnm{Thoppe},~\bfnm{Gugan}\binits{G.}} \AND
  \bauthor{\bsnm{Mannor},~\bfnm{Shie}\binits{S.}}
(\byear{2018}).
\btitle{{Finite sample analysis for TD$(0)$ with function approximation}}.
In \bbooktitle{Thirty-Second AAAI Conference on Artificial Intelligence}.
\end{binproceedings}
\endbibitem

\bibitem{dann2019policy}
\begin{binproceedings}[author]
\bauthor{\bsnm{Dann},~\bfnm{Christoph}\binits{C.}},
  \bauthor{\bsnm{Li},~\bfnm{Lihong}\binits{L.}},
  \bauthor{\bsnm{Wei},~\bfnm{Wei}\binits{W.}} \AND
  \bauthor{\bsnm{Brunskill},~\bfnm{Emma}\binits{E.}}
(\byear{2019}).
\btitle{{Policy certificates: Towards accountable reinforcement learning}}.
In \bbooktitle{International Conference on Machine Learning}
\bpages{1507--1516}.
\bpublisher{PMLR}.
\end{binproceedings}
\endbibitem

\bibitem{degris2012off}
\begin{binproceedings}[author]
\bauthor{\bsnm{Degris},~\bfnm{Thomas}\binits{T.}},
  \bauthor{\bsnm{White},~\bfnm{Martha}\binits{M.}} \AND
  \bauthor{\bsnm{Sutton},~\bfnm{Richard}\binits{R.}}
(\byear{2012}).
\btitle{{Off-Policy Actor-Critic}}.
In \bbooktitle{International Conference on Machine Learning}.
\end{binproceedings}
\endbibitem

\bibitem{espeholt2018impala}
\begin{binproceedings}[author]
\bauthor{\bsnm{Espeholt},~\bfnm{Lasse}\binits{L.}},
  \bauthor{\bsnm{Soyer},~\bfnm{Hubert}\binits{H.}},
  \bauthor{\bsnm{Munos},~\bfnm{Remi}\binits{R.}},
  \bauthor{\bsnm{Simonyan},~\bfnm{Karen}\binits{K.}},
  \bauthor{\bsnm{Mnih},~\bfnm{Vlad}\binits{V.}},
  \bauthor{\bsnm{Ward},~\bfnm{Tom}\binits{T.}},
  \bauthor{\bsnm{Doron},~\bfnm{Yotam}\binits{Y.}},
  \bauthor{\bsnm{Firoiu},~\bfnm{Vlad}\binits{V.}},
  \bauthor{\bsnm{Harley},~\bfnm{Tim}\binits{T.}},
  \bauthor{\bsnm{Dunning},~\bfnm{Iain}\binits{I.}} \betal{et~al.}
(\byear{2018}).
\btitle{{IMPALA}: Scalable Distributed Deep-RL with Importance Weighted
  Actor-Learner Architectures}.
In \bbooktitle{International Conference on Machine Learning}
\bpages{1407--1416}.
\end{binproceedings}
\endbibitem

\bibitem{even2009online}
\begin{barticle}[author]
\bauthor{\bsnm{Even-Dar},~\bfnm{Eyal}\binits{E.}},
  \bauthor{\bsnm{Kakade},~\bfnm{Sham~M}\binits{S.~M.}} \AND
  \bauthor{\bsnm{Mansour},~\bfnm{Yishay}\binits{Y.}}
(\byear{2009}).
\btitle{Online Markov decision processes}.
\bjournal{Mathematics of Operations Research}
\bvolume{34}
\bpages{726--736}.
\end{barticle}
\endbibitem

\bibitem{even2003learning}
\begin{barticle}[author]
\bauthor{\bsnm{Even-Dar},~\bfnm{Eyal}\binits{E.}} \AND
  \bauthor{\bsnm{Mansour},~\bfnm{Yishay}\binits{Y.}}
(\byear{2003}).
\btitle{Learning rates for {$Q$}-learning}.
\bjournal{Journal of Machine Learning Research}
\bvolume{5}
\bpages{1--25}.
\end{barticle}
\endbibitem

\bibitem{geist2019theory}
\begin{binproceedings}[author]
\bauthor{\bsnm{Geist},~\bfnm{Matthieu}\binits{M.}},
  \bauthor{\bsnm{Scherrer},~\bfnm{Bruno}\binits{B.}} \AND
  \bauthor{\bsnm{Pietquin},~\bfnm{Olivier}\binits{O.}}
(\byear{2019}).
\btitle{A theory of regularized markov decision processes}.
In \bbooktitle{International Conference on Machine Learning}
\bpages{2160--2169}.
\bpublisher{PMLR}.
\end{binproceedings}
\endbibitem

\bibitem{geweke1989bayesian}
\begin{barticle}[author]
\bauthor{\bsnm{Geweke},~\bfnm{John}\binits{J.}}
(\byear{1989}).
\btitle{Bayesian inference in econometric models using Monte Carlo
  integration}.
\bjournal{Econometrica: Journal of the Econometric Society}
\bpages{1317--1339}.
\end{barticle}
\endbibitem

\bibitem{glynn1989importance}
\begin{barticle}[author]
\bauthor{\bsnm{Glynn},~\bfnm{Peter~W}\binits{P.~W.}} \AND
  \bauthor{\bsnm{Iglehart},~\bfnm{Donald~L}\binits{D.~L.}}
(\byear{1989}).
\btitle{Importance sampling for stochastic simulations}.
\bjournal{Management science}
\bvolume{35}
\bpages{1367--1392}.
\end{barticle}
\endbibitem

\bibitem{gottesman2020interpretable}
\begin{binproceedings}[author]
\bauthor{\bsnm{Gottesman},~\bfnm{Omer}\binits{O.}},
  \bauthor{\bsnm{Futoma},~\bfnm{Joseph}\binits{J.}},
  \bauthor{\bsnm{Liu},~\bfnm{Yao}\binits{Y.}},
  \bauthor{\bsnm{Parbhoo},~\bfnm{Sonali}\binits{S.}},
  \bauthor{\bsnm{Celi},~\bfnm{Leo}\binits{L.}},
  \bauthor{\bsnm{Brunskill},~\bfnm{Emma}\binits{E.}} \AND
  \bauthor{\bsnm{Doshi-Velez},~\bfnm{Finale}\binits{F.}}
(\byear{2020}).
\btitle{Interpretable off-policy evaluation in reinforcement learning by
  highlighting influential transitions}.
In \bbooktitle{International Conference on Machine Learning}
\bpages{3658--3667}.
\bpublisher{PMLR}.
\end{binproceedings}
\endbibitem

\bibitem{gu2017deep}
\begin{binproceedings}[author]
\bauthor{\bsnm{Gu},~\bfnm{Shixiang}\binits{S.}},
  \bauthor{\bsnm{Holly},~\bfnm{Ethan}\binits{E.}},
  \bauthor{\bsnm{Lillicrap},~\bfnm{Timothy}\binits{T.}} \AND
  \bauthor{\bsnm{Levine},~\bfnm{Sergey}\binits{S.}}
(\byear{2017}).
\btitle{Deep reinforcement learning for robotic manipulation with asynchronous
  off-policy updates}.
In \bbooktitle{2017 IEEE international conference on robotics and automation
  (ICRA)}
\bpages{3389--3396}.
\bpublisher{IEEE}.
\end{binproceedings}
\endbibitem

\bibitem{haarnoja2017reinforcement}
\begin{binproceedings}[author]
\bauthor{\bsnm{Haarnoja},~\bfnm{Tuomas}\binits{T.}},
  \bauthor{\bsnm{Tang},~\bfnm{Haoran}\binits{H.}},
  \bauthor{\bsnm{Abbeel},~\bfnm{Pieter}\binits{P.}} \AND
  \bauthor{\bsnm{Levine},~\bfnm{Sergey}\binits{S.}}
(\byear{2017}).
\btitle{Reinforcement learning with deep energy-based policies}.
In \bbooktitle{International Conference on Machine Learning}
\bpages{1352--1361}.
\bpublisher{PMLR}.
\end{binproceedings}
\endbibitem

\bibitem{ionides2008truncated}
\begin{barticle}[author]
\bauthor{\bsnm{Ionides},~\bfnm{Edward~L}\binits{E.~L.}}
(\byear{2008}).
\btitle{Truncated importance sampling}.
\bjournal{Journal of Computational and Graphical Statistics}
\bvolume{17}
\bpages{295--311}.
\end{barticle}
\endbibitem

\bibitem{jaakkola1994convergence}
\begin{binproceedings}[author]
\bauthor{\bsnm{Jaakkola},~\bfnm{Tommi}\binits{T.}},
  \bauthor{\bsnm{Jordan},~\bfnm{Michael~I}\binits{M.~I.}} \AND
  \bauthor{\bsnm{Singh},~\bfnm{Satinder~P}\binits{S.~P.}}
(\byear{1994}).
\btitle{Convergence of stochastic iterative dynamic programming algorithms}.
In \bbooktitle{Advances in neural information processing systems}
\bpages{703--710}.
\end{binproceedings}
\endbibitem

\bibitem{kakade2001natural}
\begin{barticle}[author]
\bauthor{\bsnm{Kakade},~\bfnm{Sham~M}\binits{S.~M.}}
(\byear{2001}).
\btitle{A natural policy gradient}.
\bjournal{Advances in neural information processing systems}
\bvolume{14}.
\end{barticle}
\endbibitem

\bibitem{khodadadian2021finite}
\begin{barticle}[author]
\bauthor{\bsnm{Khodadadian},~\bfnm{Sajad}\binits{S.}},
  \bauthor{\bsnm{Doan},~\bfnm{Thinh~T.}\binits{T.~T.}},
  \bauthor{\bsnm{Maguluri},~\bfnm{Siva~Theja}\binits{S.~T.}} \AND
  \bauthor{\bsnm{Romberg},~\bfnm{Justin}\binits{J.}}
(\byear{2021}).
\btitle{{Finite Sample Analysis of Two-Time-Scale Natural Actor-Critic
  Algorithm}}.
\bjournal{Preprint arXiv:2101.10506}.
\end{barticle}
\endbibitem

\bibitem{konda2000actor}
\begin{binproceedings}[author]
\bauthor{\bsnm{Konda},~\bfnm{Vijay~R}\binits{V.~R.}} \AND
  \bauthor{\bsnm{Tsitsiklis},~\bfnm{John~N}\binits{J.~N.}}
(\byear{2000}).
\btitle{Actor-critic algorithms}.
In \bbooktitle{Advances in neural information processing systems}
\bpages{1008--1014}.
\bpublisher{Citeseer}.
\end{binproceedings}
\endbibitem

\bibitem{kumar2019sample}
\begin{barticle}[author]
\bauthor{\bsnm{Kumar},~\bfnm{Harshat}\binits{H.}},
  \bauthor{\bsnm{Koppel},~\bfnm{Alec}\binits{A.}} \AND
  \bauthor{\bsnm{Ribeiro},~\bfnm{Alejandro}\binits{A.}}
(\byear{2019}).
\btitle{{On the Sample Complexity of Actor-Critic Method for Reinforcement
  Learning with Function Approximation}}.
\bjournal{Preprint arXiv:1910.08412}.
\end{barticle}
\endbibitem

\bibitem{lakshminarayanan2018linear}
\begin{binproceedings}[author]
\bauthor{\bsnm{Lakshminarayanan},~\bfnm{Chandrashekar}\binits{C.}} \AND
  \bauthor{\bsnm{Szepesvari},~\bfnm{Csaba}\binits{C.}}
(\byear{2018}).
\btitle{{Linear Stochastic Approximation: How Far Does Constant Step-Size and
  Iterate Averaging Go?}}
In \bbooktitle{International Conference on Artificial Intelligence and
  Statistics}
\bpages{1347--1355}.
\end{binproceedings}
\endbibitem

\bibitem{lan2020first}
\begin{bbook}[author]
\bauthor{\bsnm{Lan},~\bfnm{Guanghui}\binits{G.}}
(\byear{2020}).
\btitle{{First-order and Stochastic Optimization Methods for Machine
  Learning}}.
\bpublisher{Springer}.
\end{bbook}
\endbibitem

\bibitem{lan2021policy}
\begin{barticle}[author]
\bauthor{\bsnm{Lan},~\bfnm{Guanghui}\binits{G.}}
(\byear{2021}).
\btitle{Policy mirror descent for reinforcement learning: Linear convergence,
  new sampling complexity, and generalized problem classes}.
\bjournal{arXiv preprint arXiv:2102.00135}.
\end{barticle}
\endbibitem

\bibitem{levin2017markov}
\begin{bbook}[author]
\bauthor{\bsnm{Levin},~\bfnm{David~A}\binits{D.~A.}} \AND
  \bauthor{\bsnm{Peres},~\bfnm{Yuval}\binits{Y.}}
(\byear{2017}).
\btitle{Markov chains and mixing times}
\bvolume{107}.
\bpublisher{American Mathematical Soc.}
\end{bbook}
\endbibitem

\bibitem{levine2020offline}
\begin{barticle}[author]
\bauthor{\bsnm{Levine},~\bfnm{Sergey}\binits{S.}},
  \bauthor{\bsnm{Kumar},~\bfnm{Aviral}\binits{A.}},
  \bauthor{\bsnm{Tucker},~\bfnm{George}\binits{G.}} \AND
  \bauthor{\bsnm{Fu},~\bfnm{Justin}\binits{J.}}
(\byear{2020}).
\btitle{Offline reinforcement learning: Tutorial, review, and perspectives on
  open problems}.
\bjournal{Preprint arXiv:2005.01643}.
\end{barticle}
\endbibitem

\bibitem{li2020sample}
\begin{barticle}[author]
\bauthor{\bsnm{Li},~\bfnm{Gen}\binits{G.}},
  \bauthor{\bsnm{Wei},~\bfnm{Yuting}\binits{Y.}},
  \bauthor{\bsnm{Chi},~\bfnm{Yuejie}\binits{Y.}},
  \bauthor{\bsnm{Gu},~\bfnm{Yuantao}\binits{Y.}} \AND
  \bauthor{\bsnm{Chen},~\bfnm{Yuxin}\binits{Y.}}
(\byear{2020}).
\btitle{{Sample Complexity of Asynchronous {$Q$}-Learning: Sharper Analysis and
  Variance Reduction}}.
\bjournal{Preprint arXiv:2006.03041}.
\end{barticle}
\endbibitem

\bibitem{lillicrap2015continuous}
\begin{barticle}[author]
\bauthor{\bsnm{Lillicrap},~\bfnm{Timothy~P}\binits{T.~P.}},
  \bauthor{\bsnm{Hunt},~\bfnm{Jonathan~J}\binits{J.~J.}},
  \bauthor{\bsnm{Pritzel},~\bfnm{Alexander}\binits{A.}},
  \bauthor{\bsnm{Heess},~\bfnm{Nicolas}\binits{N.}},
  \bauthor{\bsnm{Erez},~\bfnm{Tom}\binits{T.}},
  \bauthor{\bsnm{Tassa},~\bfnm{Yuval}\binits{Y.}},
  \bauthor{\bsnm{Silver},~\bfnm{David}\binits{D.}} \AND
  \bauthor{\bsnm{Wierstra},~\bfnm{Daan}\binits{D.}}
(\byear{2015}).
\btitle{Continuous control with deep reinforcement learning}.
\bjournal{Preprint arXiv:1509.02971}.
\end{barticle}
\endbibitem

\bibitem{liu2019neural}
\begin{barticle}[author]
\bauthor{\bsnm{Liu},~\bfnm{Boyi}\binits{B.}},
  \bauthor{\bsnm{Cai},~\bfnm{Qi}\binits{Q.}},
  \bauthor{\bsnm{Yang},~\bfnm{Zhuoran}\binits{Z.}} \AND
  \bauthor{\bsnm{Wang},~\bfnm{Zhaoran}\binits{Z.}}
(\byear{2019}).
\btitle{Neural proximal/trust region policy optimization attains globally
  optimal policy}.
\bjournal{Preprint arXiv:1906.10306}.
\end{barticle}
\endbibitem

\bibitem{liu2018representation}
\begin{barticle}[author]
\bauthor{\bsnm{Liu},~\bfnm{Yao}\binits{Y.}},
  \bauthor{\bsnm{Gottesman},~\bfnm{Omer}\binits{O.}},
  \bauthor{\bsnm{Raghu},~\bfnm{Aniruddh}\binits{A.}},
  \bauthor{\bsnm{Komorowski},~\bfnm{Matthieu}\binits{M.}},
  \bauthor{\bsnm{Faisal},~\bfnm{Aldo~A}\binits{A.~A.}},
  \bauthor{\bsnm{Doshi-Velez},~\bfnm{Finale}\binits{F.}} \AND
  \bauthor{\bsnm{Brunskill},~\bfnm{Emma}\binits{E.}}
(\byear{2018}).
\btitle{{Representation Balancing MDPs for Off-policy Policy Evaluation}}.
\bjournal{Advances in Neural Information Processing Systems}
\bvolume{31}
\bpages{2644--2653}.
\end{barticle}
\endbibitem

\bibitem{maei2018convergent}
\begin{barticle}[author]
\bauthor{\bsnm{Maei},~\bfnm{Hamid~Reza}\binits{H.~R.}}
(\byear{2018}).
\btitle{Convergent actor-critic algorithms under off-policy training and
  function approximation}.
\bjournal{arXiv preprint arXiv:1802.07842}.
\end{barticle}
\endbibitem

\bibitem{mandel2014offline}
\begin{binproceedings}[author]
\bauthor{\bsnm{Mandel},~\bfnm{Travis}\binits{T.}},
  \bauthor{\bsnm{Liu},~\bfnm{Yun-En}\binits{Y.-E.}},
  \bauthor{\bsnm{Levine},~\bfnm{Sergey}\binits{S.}},
  \bauthor{\bsnm{Brunskill},~\bfnm{Emma}\binits{E.}} \AND
  \bauthor{\bsnm{Popovic},~\bfnm{Zoran}\binits{Z.}}
(\byear{2014}).
\btitle{Offline policy evaluation across representations with applications to
  educational games.}
In \bbooktitle{AAMAS}
\bpages{1077--1084}.
\end{binproceedings}
\endbibitem

\bibitem{mei2020global}
\begin{binproceedings}[author]
\bauthor{\bsnm{Mei},~\bfnm{Jincheng}\binits{J.}},
  \bauthor{\bsnm{Xiao},~\bfnm{Chenjun}\binits{C.}},
  \bauthor{\bsnm{Szepesvari},~\bfnm{Csaba}\binits{C.}} \AND
  \bauthor{\bsnm{Schuurmans},~\bfnm{Dale}\binits{D.}}
(\byear{2020}).
\btitle{On the global convergence rates of softmax policy gradient methods}.
In \bbooktitle{International Conference on Machine Learning}
\bpages{6820--6829}.
\bpublisher{PMLR}.
\end{binproceedings}
\endbibitem

\bibitem{NIPS2009_3767}
\begin{binproceedings}[author]
\bauthor{\bsnm{Morimura},~\bfnm{Tetsuro}\binits{T.}},
  \bauthor{\bsnm{Uchibe},~\bfnm{Eiji}\binits{E.}},
  \bauthor{\bsnm{Yoshimoto},~\bfnm{Junichiro}\binits{J.}} \AND
  \bauthor{\bsnm{Doya},~\bfnm{Kenji}\binits{K.}}
(\byear{2009}).
\btitle{A generalized natural actor-critic algorithm}.
In \bbooktitle{Advances in neural information processing systems}
\bpages{1312--1320}.
\end{binproceedings}
\endbibitem

\bibitem{munos2016safe}
\begin{binproceedings}[author]
\bauthor{\bsnm{Munos},~\bfnm{R{\'e}mi}\binits{R.}},
  \bauthor{\bsnm{Stepleton},~\bfnm{Thomas}\binits{T.}},
  \bauthor{\bsnm{Harutyunyan},~\bfnm{Anna}\binits{A.}} \AND
  \bauthor{\bsnm{Bellemare},~\bfnm{Marc~G}\binits{M.~G.}}
(\byear{2016}).
\btitle{Safe and efficient off-policy reinforcement learning}.
In \bbooktitle{Proceedings of the 30th International Conference on Neural
  Information Processing Systems}
\bpages{1054--1062}.
\end{binproceedings}
\endbibitem

\bibitem{nemirovskij1983problem}
\begin{barticle}[author]
\bauthor{\bsnm{Nemirovskij},~\bfnm{Arkadij~Semenovi{\v{c}}}\binits{A.~S.}} \AND
  \bauthor{\bsnm{Yudin},~\bfnm{David~Borisovich}\binits{D.~B.}}
(\byear{1983}).
\btitle{Problem complexity and method efficiency in optimization}.
\bjournal{Chichester: John Wiley}.
\end{barticle}
\endbibitem

\bibitem{peters2008natural}
\begin{barticle}[author]
\bauthor{\bsnm{Peters},~\bfnm{Jan}\binits{J.}} \AND
  \bauthor{\bsnm{Schaal},~\bfnm{Stefan}\binits{S.}}
(\byear{2008}).
\btitle{Natural actor-critic}.
\bjournal{Neurocomputing}
\bvolume{71}
\bpages{1180--1190}.
\end{barticle}
\endbibitem

\bibitem{pirotta2015policy}
\begin{barticle}[author]
\bauthor{\bsnm{Pirotta},~\bfnm{Matteo}\binits{M.}},
  \bauthor{\bsnm{Restelli},~\bfnm{Marcello}\binits{M.}} \AND
  \bauthor{\bsnm{Bascetta},~\bfnm{Luca}\binits{L.}}
(\byear{2015}).
\btitle{Policy gradient in lipschitz markov decision processes}.
\bjournal{Machine Learning}
\bvolume{100}
\bpages{255--283}.
\end{barticle}
\endbibitem

\bibitem{precup2000eligibility}
\begin{barticle}[author]
\bauthor{\bsnm{Precup},~\bfnm{Doina}\binits{D.}}
(\byear{2000}).
\btitle{Eligibility traces for off-policy policy evaluation}.
\bjournal{Computer Science Department Faculty Publication Series}
\bpages{80}.
\end{barticle}
\endbibitem

\bibitem{puterman1995markov}
\begin{barticle}[author]
\bauthor{\bsnm{Puterman},~\bfnm{Martin~L}\binits{M.~L.}}
(\byear{1995}).
\btitle{{Markov decision processes: Discrete stochastic dynamic programming}}.
\bjournal{Journal of the Operational Research Society}
\bvolume{46}
\bpages{792--792}.
\end{barticle}
\endbibitem

\bibitem{qiu2019finite}
\begin{binproceedings}[author]
\bauthor{\bsnm{Qiu},~\bfnm{Shuang}\binits{S.}},
  \bauthor{\bsnm{Yang},~\bfnm{Zhuoran}\binits{Z.}},
  \bauthor{\bsnm{Ye},~\bfnm{Jieping}\binits{J.}} \AND
  \bauthor{\bsnm{Wang},~\bfnm{Zhaoran}\binits{Z.}}
(\byear{2019}).
\btitle{On the finite-time convergence of actor-critic algorithm}.
In \bbooktitle{Optimization Foundations for Reinforcement Learning Workshop at
  Advances in Neural Information Processing Systems (NeurIPS)}.
\end{binproceedings}
\endbibitem

\bibitem{qu2020finite}
\begin{binproceedings}[author]
\bauthor{\bsnm{Qu},~\bfnm{Guannan}\binits{G.}} \AND
  \bauthor{\bsnm{Wierman},~\bfnm{Adam}\binits{A.}}
(\byear{2020}).
\btitle{{Finite-Time Analysis of Asynchronous Stochastic Approximation and
  $Q$-Learning}}.
In \bbooktitle{Conference on Learning Theory}
\bpages{3185--3205}.
\bpublisher{PMLR}.
\end{binproceedings}
\endbibitem

\bibitem{shani2020adaptive}
\begin{binproceedings}[author]
\bauthor{\bsnm{Shani},~\bfnm{Lior}\binits{L.}},
  \bauthor{\bsnm{Efroni},~\bfnm{Yonathan}\binits{Y.}} \AND
  \bauthor{\bsnm{Mannor},~\bfnm{Shie}\binits{S.}}
(\byear{2020}).
\btitle{{Adaptive Trust Region Policy Optimization: Global Convergence and
  Faster Rates for Regularized MDPs}}.
In \bbooktitle{Proceedings of the AAAI Conference on Artificial Intelligence}
\bvolume{34}
\bpages{5668--5675}.
\end{binproceedings}
\endbibitem

\bibitem{silver2016mastering}
\begin{barticle}[author]
\bauthor{\bsnm{Silver},~\bfnm{David}\binits{D.}},
  \bauthor{\bsnm{Huang},~\bfnm{Aja}\binits{A.}},
  \bauthor{\bsnm{Maddison},~\bfnm{Chris~J}\binits{C.~J.}},
  \bauthor{\bsnm{Guez},~\bfnm{Arthur}\binits{A.}},
  \bauthor{\bsnm{Sifre},~\bfnm{Laurent}\binits{L.}}, \bauthor{\bsnm{Van
  Den~Driessche},~\bfnm{George}\binits{G.}},
  \bauthor{\bsnm{Schrittwieser},~\bfnm{Julian}\binits{J.}},
  \bauthor{\bsnm{Antonoglou},~\bfnm{Ioannis}\binits{I.}},
  \bauthor{\bsnm{Panneershelvam},~\bfnm{Veda}\binits{V.}},
  \bauthor{\bsnm{Lanctot},~\bfnm{Marc}\binits{M.}} \betal{et~al.}
(\byear{2016}).
\btitle{Mastering the game of Go with deep neural networks and tree search}.
\bjournal{nature}
\bvolume{529}
\bpages{484}.
\end{barticle}
\endbibitem

\bibitem{srikant2019finite}
\begin{binproceedings}[author]
\bauthor{\bsnm{Srikant},~\bfnm{R}\binits{R.}} \AND
  \bauthor{\bsnm{Ying},~\bfnm{Lei}\binits{L.}}
(\byear{2019}).
\btitle{{Finite-Time Error Bounds For Linear Stochastic Approximation and TD
  Learning}}.
In \bbooktitle{Conference on Learning Theory}
\bpages{2803--2830}.
\end{binproceedings}
\endbibitem

\bibitem{sutton1988learning}
\begin{barticle}[author]
\bauthor{\bsnm{Sutton},~\bfnm{Richard~S}\binits{R.~S.}}
(\byear{1988}).
\btitle{Learning to predict by the methods of temporal differences}.
\bjournal{Machine learning}
\bvolume{3}
\bpages{9--44}.
\end{barticle}
\endbibitem

\bibitem{sutton2018reinforcement}
\begin{bbook}[author]
\bauthor{\bsnm{Sutton},~\bfnm{Richard~S}\binits{R.~S.}} \AND
  \bauthor{\bsnm{Barto},~\bfnm{Andrew~G}\binits{A.~G.}}
(\byear{2018}).
\btitle{{Reinforcement learning: An introduction}}.
\bpublisher{MIT press}.
\end{bbook}
\endbibitem

\bibitem{sutton1999policy}
\begin{binproceedings}[author]
\bauthor{\bsnm{Sutton},~\bfnm{Richard~S}\binits{R.~S.}},
  \bauthor{\bsnm{McAllester},~\bfnm{David~A}\binits{D.~A.}},
  \bauthor{\bsnm{Singh},~\bfnm{Satinder~P}\binits{S.~P.}},
  \bauthor{\bsnm{Mansour},~\bfnm{Yishay}\binits{Y.}} \betal{et~al.}
(\byear{1999}).
\btitle{Policy gradient methods for reinforcement learning with function
  approximation.}
In \bbooktitle{NIPs}
\bvolume{99}
\bpages{1057--1063}.
\bpublisher{Citeseer}.
\end{binproceedings}
\endbibitem

\bibitem{NIPS2013_5184}
\begin{binproceedings}[author]
\bauthor{\bsnm{Thomas},~\bfnm{Philip~S}\binits{P.~S.}},
  \bauthor{\bsnm{Dabney},~\bfnm{William}\binits{W.}},
  \bauthor{\bsnm{Mahadevan},~\bfnm{Sridhar}\binits{S.}} \AND
  \bauthor{\bsnm{Giguere},~\bfnm{Stephen}\binits{S.}}
(\byear{2013}).
\btitle{{Projected natural actor-critic}}.
In \bbooktitle{Proceedings of the 26th International Conference on Neural
  Information Processing Systems-Volume 2}
\bpages{2337--2345}.
\end{binproceedings}
\endbibitem

\bibitem{tsitsiklis1994asynchronous}
\begin{barticle}[author]
\bauthor{\bsnm{Tsitsiklis},~\bfnm{John~N}\binits{J.~N.}}
(\byear{1994}).
\btitle{Asynchronous stochastic approximation and {$Q$}-learning}.
\bjournal{Machine learning}
\bvolume{16}
\bpages{185--202}.
\end{barticle}
\endbibitem

\bibitem{tsitsiklis1997analysis}
\begin{binproceedings}[author]
\bauthor{\bsnm{Tsitsiklis},~\bfnm{John~N}\binits{J.~N.}} \AND
  \bauthor{\bsnm{Van~Roy},~\bfnm{Benjamin}\binits{B.}}
(\byear{1997}).
\btitle{Analysis of temporal-difference learning with function approximation}.
In \bbooktitle{Advances in neural information processing systems}
\bpages{1075--1081}.
\end{binproceedings}
\endbibitem

\bibitem{tsitsiklis1999average}
\begin{barticle}[author]
\bauthor{\bsnm{Tsitsiklis},~\bfnm{John~N}\binits{J.~N.}} \AND
  \bauthor{\bsnm{Van~Roy},~\bfnm{Benjamin}\binits{B.}}
(\byear{1999}).
\btitle{Average cost temporal-difference learning}.
\bjournal{Automatica}
\bvolume{35}
\bpages{1799--1808}.
\end{barticle}
\endbibitem

\bibitem{wainwright2019stochastic}
\begin{barticle}[author]
\bauthor{\bsnm{Wainwright},~\bfnm{Martin~J}\binits{M.~J.}}
(\byear{2019}).
\btitle{Stochastic approximation with cone-contractive operators: Sharp
  $\ell_\infty$-bounds for ${Q}$-learning}.
\bjournal{Preprint arXiv:1905.06265}.
\end{barticle}
\endbibitem

\bibitem{wang2019neural}
\begin{barticle}[author]
\bauthor{\bsnm{Wang},~\bfnm{Lingxiao}\binits{L.}},
  \bauthor{\bsnm{Cai},~\bfnm{Qi}\binits{Q.}},
  \bauthor{\bsnm{Yang},~\bfnm{Zhuoran}\binits{Z.}} \AND
  \bauthor{\bsnm{Wang},~\bfnm{Zhaoran}\binits{Z.}}
(\byear{2019}).
\btitle{{Neural policy gradient methods: Global optimality and rates of
  convergence}}.
\bjournal{Preprint arXiv:1909.01150}.
\end{barticle}
\endbibitem

\bibitem{wang2016sample}
\begin{barticle}[author]
\bauthor{\bsnm{Wang},~\bfnm{Ziyu}\binits{Z.}},
  \bauthor{\bsnm{Bapst},~\bfnm{Victor}\binits{V.}},
  \bauthor{\bsnm{Heess},~\bfnm{Nicolas}\binits{N.}},
  \bauthor{\bsnm{Mnih},~\bfnm{Volodymyr}\binits{V.}},
  \bauthor{\bsnm{Munos},~\bfnm{Remi}\binits{R.}},
  \bauthor{\bsnm{Kavukcuoglu},~\bfnm{Koray}\binits{K.}} \AND
  \bauthor{\bparticle{de} \bsnm{Freitas},~\bfnm{Nando}\binits{N.}}
(\byear{2016}).
\btitle{Sample efficient actor-critic with experience replay}.
\bjournal{Preprint arXiv:1611.01224}.
\end{barticle}
\endbibitem

\bibitem{watkins1992q}
\begin{barticle}[author]
\bauthor{\bsnm{Watkins},~\bfnm{Christopher~JCH}\binits{C.~J.}} \AND
  \bauthor{\bsnm{Dayan},~\bfnm{Peter}\binits{P.}}
(\byear{1992}).
\btitle{{$Q$}-learning}.
\bjournal{Machine learning}
\bvolume{8}
\bpages{279--292}.
\end{barticle}
\endbibitem

\bibitem{williams1990mathematical}
\begin{binproceedings}[author]
\bauthor{\bsnm{Williams},~\bfnm{Ronald~J}\binits{R.~J.}} \AND
  \bauthor{\bsnm{Baird},~\bfnm{Leemon~C}\binits{L.~C.}}
(\byear{1990}).
\btitle{A mathematical analysis of actor-critic architectures for learning
  optimal controls through incremental dynamic programming}.
In \bbooktitle{Proceedings of the Sixth Yale Workshop on Adaptive and Learning
  Systems}
\bpages{96--101}.
\bpublisher{Citeseer}.
\end{binproceedings}
\endbibitem

\bibitem{wu2020finite}
\begin{barticle}[author]
\bauthor{\bsnm{Wu},~\bfnm{Yue}\binits{Y.}},
  \bauthor{\bsnm{Zhang},~\bfnm{Weitong}\binits{W.}},
  \bauthor{\bsnm{Xu},~\bfnm{Pan}\binits{P.}} \AND
  \bauthor{\bsnm{Gu},~\bfnm{Quanquan}\binits{Q.}}
(\byear{2020}).
\btitle{{A Finite Time Analysis of Two Time-Scale Actor Critic Methods}}.
\bjournal{Preprint arXiv:2005.01350}.
\end{barticle}
\endbibitem

\bibitem{xu2020improving}
\begin{barticle}[author]
\bauthor{\bsnm{Xu},~\bfnm{Tengyu}\binits{T.}},
  \bauthor{\bsnm{Wang},~\bfnm{Zhe}\binits{Z.}} \AND
  \bauthor{\bsnm{Liang},~\bfnm{Yingbin}\binits{Y.}}
(\byear{2020}).
\btitle{Improving sample complexity bounds for (natural) actor-critic
  algorithms}.
\bjournal{Advances in Neural Information Processing Systems}
\bvolume{33}.
\end{barticle}
\endbibitem

\bibitem{xu2020non}
\begin{barticle}[author]
\bauthor{\bsnm{Xu},~\bfnm{Tengyu}\binits{T.}},
  \bauthor{\bsnm{Wang},~\bfnm{Zhe}\binits{Z.}} \AND
  \bauthor{\bsnm{Liang},~\bfnm{Yingbin}\binits{Y.}}
(\byear{2020}).
\btitle{{Non-asymptotic Convergence Analysis of Two Time-scale (Natural)
  Actor-Critic Algorithms}}.
\bjournal{Preprint arXiv:2005.03557}.
\end{barticle}
\endbibitem

\bibitem{zhang2019convergence}
\begin{binproceedings}[author]
\bauthor{\bsnm{Zhang},~\bfnm{Kaiqing}\binits{K.}},
  \bauthor{\bsnm{Koppel},~\bfnm{Alec}\binits{A.}},
  \bauthor{\bsnm{Zhu},~\bfnm{Hao}\binits{H.}} \AND
  \bauthor{\bsnm{Ba{\c{s}}ar},~\bfnm{Tamer}\binits{T.}}
(\byear{2019}).
\btitle{Convergence and iteration complexity of policy gradient method for
  infinite-horizon reinforcement learning}.
In \bbooktitle{2019 IEEE 58th Conference on Decision and Control (CDC)}
\bpages{7415--7422}.
\bpublisher{IEEE}.
\end{binproceedings}
\endbibitem

\bibitem{zhang2020provably}
\begin{binproceedings}[author]
\bauthor{\bsnm{Zhang},~\bfnm{Shangtong}\binits{S.}},
  \bauthor{\bsnm{Liu},~\bfnm{Bo}\binits{B.}},
  \bauthor{\bsnm{Yao},~\bfnm{Hengshuai}\binits{H.}} \AND
  \bauthor{\bsnm{Whiteson},~\bfnm{Shimon}\binits{S.}}
(\byear{2020}).
\btitle{Provably convergent two-timescale off-policy actor-critic with function
  approximation}.
In \bbooktitle{International Conference on Machine Learning}
\bpages{11204--11213}.
\bpublisher{PMLR}.
\end{binproceedings}
\endbibitem

\end{thebibliography}

\newpage

\begin{appendix}
\section{The Q-Trace Algorithm}

\subsection{Proof of Proposition \ref{prop:Q-trace-property}}\label{pf:prop:Q-trace-property}
\begin{enumerate}[(1)]
    \item Using the definition of the operator $\mathcal{T}(\cdot)$, we have for any $Q_1,Q_2$, $\pi$, $x=(s_0,a_0,...,s_n,a_n)\in\mathcal{X}$, and state-action pairs $(s,a)$:
\begin{align*}
    &|[\mathcal{T}(Q_1,\pi,x)](s,a)-[\mathcal{T}(Q_2,\pi,x)](s,a)|\\
	=\;&\bigg|\mathbb{I}_{\{(s,a)=(s_0,a_0)\}}\sum_{i=0}^{n-1}\gamma^i\left(\prod_{j=1}^{i}c_\pi(s_j,a_j)\right)\left(\gamma \rho_\pi(s_{i+1},a_{i+1}) [Q_1-Q_2](s_{i+1},a_{i+1})-[Q_1-Q_2](s_i,a_i)\right)\\
	&+[Q_1-Q_2](s,a)\bigg|\\
	\leq \;&\sum_{i=0}^{n-1}\gamma^i\left(\prod_{j=1}^{i}c_\pi(s_j,a_j)\right)\left(\gamma \rho_\pi(s_{i+1},a_{i+1})+1\right)\|Q_1-Q_2\|_\infty+\|Q_1-Q_2\|_\infty\\
	\leq \;&\sum_{i=0}^{n-1}(\gamma \bar{c})^{i}(\bar{\rho}+1)\|Q_1-Q_2\|_\infty+\|Q_1-Q_2\|_\infty\tag{$c_\pi(s,a)\leq \bar{c}$ and $\rho_\pi(s,a)\leq \bar{\rho}$ for any $(s,a)$}\\
	\leq \;&\begin{dcases}
		2n(\bar{\rho}+1)\|Q_1-Q_2\|_\infty,&\gamma\bar{c}=1,\\
		\frac{2(\bar{\rho}+1)(1-(\gamma\bar{c})^n)}{1-\gamma\bar{c}}\|Q_1-Q_2\|_\infty,&\gamma\bar{c}\neq 1.
	\end{dcases}
\end{align*}
It follows that $\|\mathcal{T}(Q_1,\pi,x)-\mathcal{T}(Q_2,\pi,x)\|_\infty\leq 2f(\bar{c},\gamma)(\bar{\rho}+1)\|Q_1-Q_2\|_\infty$. Similarly, for any $\pi\in \Pi$ and $x=(s_0,a_0,...,s_n,a_n)\in\mathcal{X}$, we have for any $(s,a)$:
\begin{align*}
	|[\mathcal{T}(\bm{0},\pi,x)](s,a)|&=\left|\mathbb{I}_{\{(s,a)=(s_0,a_0)\}}\sum_{i=0}^{n-1}\gamma^i\left(\prod_{j=1}^{i}c_\pi(s_j,a_j)\right)\mathcal{R}(s_i,a_i)\right|\\
	&\leq \sum_{i=0}^{n-1}\gamma^i\left(\prod_{j=1}^{i}c_\pi(s_j,a_j)\right)\tag{$\mathcal{R}(s,a)\in [0,1]$ for any $(s,a)$}\\
	&\leq \sum_{i=0}^{n-1}(\gamma\bar{c})^{i}\tag{$c_\pi(s,a)\leq \bar{c}$ for any $(s,a)$}\\
	&=\begin{dcases}
		n,&\gamma\bar{c}=1,\\
		\frac{1-(\gamma\bar{c})^n}{1-\gamma\bar{c}},&\gamma\bar{c}\neq 1.
	\end{dcases}
\end{align*}
Hence we have $\|\mathcal{T}(\bm{0},\pi,x)\|_\infty\leq f(\bar{c},\gamma)$.
\item Since the Markov chain $\{S_k\}$ induced by the behavior policy $\pi_b$ is irreducible and aperiodic, there exists $C>0$ and $u\in (0,1)$ such that $\max_{s\in\mathcal{S}}\|P^k(s,\cdot)-\mu_b(\cdot)\|_{\text{TV}}\leq Cu^k$ for all $k\geq 0$ \cite{levin2017markov}, where $P^k$ represents the $k$-step transition probability matrix. Now consider the Markov chain $\{X_k\}$. We have for all $k\geq 0$:
\begin{align*}
    &\max_{x\in\mathcal{X}}\left\|P^{k+n+1}(x,\cdot)-\mu_X(\cdot)\right\|_{\text{TV}}\\
    = \;&\frac{1}{2}\max_{s_0,a_0,...,s_n,a_n}\sum_{s_0',a_0',...,s_n',a_n'}\left|\sum_{s}P_{a_n}(s_n,s)P^k(s,s_0')-\mu_b(s_0')\right|\pi_b(a_0'|s_0')\prod_{i=0}^{n-1} P_{a_i'}(s_i',s_{i+1}')\pi_b(a_{i+1}'|s_{i+1}')\tag{$P_a$ is the transition probability matrix under action $a$}\\
    \leq \;&\frac{1}{2}\max_{s_n,a_n}\sum_{s_0'}\left|\sum_{s}P_{a_n}(s_n,s)P^k(s,s_0')-\mu_b(s_0')\right|\\
    = \;&\frac{1}{2}\max_{s_n,a_n}\sum_{s}P_{a_n}(s_n,s)\sum_{s_0'}\left|P^k(s,s_0')-\mu_b(s_0')\right|\\
    \leq \;&\frac{1}{2}\max_{s}\sum_{s_0'}\left|P^k(s,s_0')-\mu_b(s_0')\right|\\
    =\;&\max_{s\in\mathcal{S}}\left\|P^k(s,\cdot)-\mu_b(\cdot)\right\|_{\text{TV}}\\
    \leq \;&Cu^k.
\end{align*}
\item 
\begin{enumerate}[(a)]
    \item We first compute $\mathcal{T}_e(Q,\pi)$ in the following. For any $Q\in\mathbb{R}^{|\mathcal{S}||\mathcal{A}|}$ and $\pi\in \Pi$, we have for any $(s,a)$:
	\begin{align*}
		&[\mathcal{T}_e(Q,\pi)](s,a)\\
		=\;&\mathbb{E}_{S_0\sim \mu_b}\Bigg[\mathbb{I}_{\{(s,a)=(S_0,A_0)\}}\sum_{i=0}^{n-1}\gamma^i\left(\prod_{j=1}^{i}c_\pi(S_j,A_j)\right)(\mathcal{R}(S_i,A_i)+\gamma \rho_\pi(S_{i+1},A_{i+1}) Q(S_{i+1},A_{i+1})\\
		&-Q(S_i,A_i))\Bigg]+Q(s,a).
	\end{align*}
	For any $0\leq i\leq n-1$, we have
	\begin{align*}
		&\mathbb{E}_{S_0\sim \mu_b}\left[\mathbb{I}_{\{(s,a)=(S_0,A_0)\}}\gamma^i\left(\prod_{j=1}^{i}c_\pi(S_j,A_j)\right)\left(\mathcal{R}(S_i,A_i)+\gamma \rho_\pi(S_{i+1},A_{i+1}) Q(S_{i+1},A_{i+1})-Q(S_i,A_i)\right)\right]\\
		=\;&\mathbb{E}_{S_0\sim \mu_b}\Bigg[\mathbb{I}_{\{(s,a)=(S_0,A_0)\}}\gamma^i\left(\prod_{j=1}^{i}c_\pi(S_j,A_j)\right)(\mathcal{R}(S_i,A_i)\\
		&+\gamma \mathbb{E}[\rho_\pi(S_{i+1},A_{i+1}) Q(S_{i+1},A_{i+1})\mid S_0\sim \mu_b,A_0,...,S_i,A_i]-Q(S_i,A_i))\Bigg]\\
		=\;&\mathbb{E}_{S_0\sim \mu_b}\Bigg[\mathbb{I}_{\{(s,a)=(S_0,A_0)\}}\gamma^i\left(\prod_{j=1}^{i}c_\pi(S_j,A_j)\right)(\mathcal{R}(S_i,A_i)\\
		&+\gamma \sum_{s',a'}P_{A_i}(S_i,s')\pi_b(a'|s')\min\left(\bar{\rho},\frac{\pi(a'|s')}{\pi_b(a'|s')}\right)Q(s',a')-Q(S_i,A_i))\Bigg]\\
		=\;&\mathbb{E}_{S_0\sim \mu_b}\Bigg[\mathbb{I}_{\{(s,a)=(S_0,A_0)\}}\gamma^i\left(\prod_{j=1}^{i}c_\pi(S_j,A_j)\right)(\mathcal{R}(S_i,A_i)\\
		&+\gamma \sum_{s',a'}P_{A_i}(S_i,s')\min\left(\bar{\rho}\pi_b(a'|s'),\pi(a'|s')\right)Q(s',a')-Q(S_i,A_i))\Bigg]\\
		=\;&\mathbb{E}_{S_0\sim \mu_b}\Bigg[\mathbb{I}_{\{(s,a)=(S_0,A_0)\}}\gamma^i\left(\prod_{j=1}^{i}c_\pi(S_j,A_j)\right)(\mathcal{R}(S_i,A_i)\\
		&+\gamma \sum_{s',a'}P_{A_i}(S_i,s')D_\pi(s',a')\frac{\min\left(\bar{\rho}\pi_b(a'|s'),\pi(a'|s')\right)}{\sum_{a'}\min\left(\bar{\rho}\pi_b(a'|s'),\pi(a'|s')\right)}Q(s',a')-Q(S_i,A_i))\Bigg]\\
		=\;&\mathbb{E}_{S_0\sim \mu_b}\Bigg[\mathbb{I}_{\{(s,a)=(S_0,A_0)\}}\gamma^i\left(\prod_{j=1}^{i}c_\pi(S_j,A_j)\right)(\mathcal{R}(S_i,A_i)\\
		&+\gamma \sum_{s',a'}P_{A_i}(S_i,s')D_\pi(s',a')\pi_{\bar{\rho}}(a'|s')Q(s',a')-Q(S_i,A_i))\Bigg]\\
		=\;&\mathbb{E}_{S_0\sim \mu_b}\Bigg[\mathbb{I}_{\{(s,a)=(S_0,A_0)\}}\gamma^i\left(\prod_{j=1}^{i}c_\pi(S_j,A_j)\right)(\mathcal{R}(S_i,A_i)\\
		&+\gamma \sum_{s',a'}P_{\pi_{\bar{\rho}}}((S_i,A_i),(s',a'))D_\pi(s',a')Q(s',a')-Q(S_i,A_i))\Bigg]\\
		=\;&\mathbb{E}_{S_0\sim \mu_b}\left[\mathbb{I}_{\{(s,a)=(S_0,A_0)\}}\gamma^i\left(\prod_{j=1}^{i}c_\pi(S_j,A_j)\right)(\mathcal{R}(S_i,A_i)+\gamma [P_{\pi_{\bar{\rho}}}D_\pi Q](S_i,A_i)-Q(S_i,A_i))\right]\\
		=\;&\mathbb{E}_{S_0\sim \mu_b}\Bigg[\mathbb{I}_{\{(s,a)=(S_0,A_0)\}}\gamma^i\left(\prod_{j=1}^{i-1}c_\pi(S_j,A_j)\right)\times\\
		&\mathbb{E}\left[c_\pi(S_i,A_i)(\mathcal{R}(S_i,A_i)+\gamma [P_{\pi_{\bar{\rho}}}D_\pi Q](S_i,A_i)-Q(S_i,A_i))\mid S_0\sim \mu_b,A_0,...,S_{i-1},A_{i-1}\right]\Bigg] \\
		=\;&\mathbb{E}_{S_0\sim \mu_b}\Bigg[\mathbb{I}_{\{(s,a)=(S_0,A_0)\}}\gamma^i\left(\prod_{j=1}^{i-1}c_\pi(S_j,A_j)\right)\times\\
		&\left(\sum_{s',a'}P_{A_{i-1}}(S_{i-1},s')\pi_b(a'|s')\min\left(\bar{c},\frac{\pi(a'|s')}{\pi_b(a'|s')}\right)[R+\gamma P_{\pi_{\bar{\rho}}}D_\pi Q-Q](s',a')\right)\Bigg]\\
		=\;&\mathbb{E}_{S_0\sim \mu_b}\Bigg[\mathbb{I}_{\{(s,a)=(S_0,A_0)\}}\gamma^i\left(\prod_{j=1}^{i-1}c_\pi(S_j,A_j)\right)\times\\
		&\left(\sum_{s',a'}P_{A_{i-1}}(S_{i-1},s')\min\left(\bar{c}\pi_b(a'|s'),\pi(a'|s')\right)[R+\gamma P_{\pi_{\bar{\rho}}}D_\pi Q-Q](s',a')\right)\Bigg]\\
		=\;&\mathbb{E}_{S_0\sim \mu_b}\Bigg[\mathbb{I}_{\{(s,a)=(S_0,A_0)\}}\gamma^i\left(\prod_{j=1}^{i-1}c_\pi(S_j,A_j)\right)\times\\
		&\left(\sum_{s',a'}P_{A_{i-1}}(S_{i-1},s')C_\pi(s',a')\pi_{\bar{c}}(a'|s')[R+\gamma P_{\pi_{\bar{\rho}}}D_\pi Q-Q](s',a')\right)\Bigg] \\
		=\;&\mathbb{E}_{S_0\sim \mu_b}\left[\mathbb{I}_{\{(s,a)=(S_0,A_0)\}}\gamma^i\left(\prod_{j=1}^{i-1}c_\pi(S_j,A_j)\right)[P_{\pi_{\bar{c}}}C_\pi (R+\gamma P_{\pi_{\bar{\rho}}}D_\pi Q-Q)](S_{i-1},A_{i-1})\right]\\
		=\;&\cdots\\
		=\;&\mathbb{E}_{S_0\sim \mu_b}\left[\mathbb{I}_{\{(s,a)=(S_0,A_0)\}}\gamma^i[(P_{\pi_{\bar{c}}}C_\pi)^i (R+\gamma P_{\pi_{\bar{\rho}}}D_\pi Q-Q)](S_{0},A_{0})\right]\\
		=\;&\gamma^i\mu_b(s)\pi_b(a|s)\left[(P_{\pi_{\bar{c}}}C_\pi)^i (R+\gamma P_{\pi_{\bar{\rho}}}D_\pi Q-Q)\right](s,a)\\
		=\;&\gamma^i\left[M(P_{\pi_{\bar{c}}}C_\pi)^i (R+\gamma P_{\pi_{\bar{\rho}}}D_\pi Q-Q)\right](s,a).
	\end{align*}
	It follows that
	\begin{align}
		\mathcal{T}_e(Q,\pi)
		&=\sum_{i=0}^{n-1}M(\gamma P_{\pi_{\bar{c}}}C_\pi)^i (R+\gamma P_{\pi_{\bar{\rho}}}D_\pi Q-Q)+Q\label{eq:operator1}\\
		&=\underbrace{\left(I-\sum_{i=0}^{n-1}M(\gamma P_{\pi_{\bar{c}}}C_\pi)^{i}(I-\gamma P_{\pi_{\bar{\rho}}}D_\pi)\right)}_{A}Q+\underbrace{\sum_{i=0}^{n-1}M(\gamma P_{\pi_{\bar{c}}}C_\pi)^{i}R}_{b}.\label{eq:operator2}
    \end{align}
    \item
    We now show the desired contraction property. For any $Q_1,Q_2$ and $\pi$, we have
    \begin{align*}
        \|\mathcal{T}_e(Q_1,\pi)-\mathcal{T}_e(Q_2,\pi)\|_\infty\leq \|A\|_\infty\|Q_1-Q_2\|_\infty.
    \end{align*}
Consider the matrix $A$, we can rewrite it by
	\begin{align}
		A
		&=\sum_{i=1}^{n}\gamma^{i}M(P_{\pi_{\bar{c}}}C_\pi)^{i-1}(P_{\pi_{\bar{\rho}}}D_\pi)-\sum_{i=0}^{n-1}\gamma^{i}M(P_{\pi_{\bar{c}}}C_\pi)^{i}+I\nonumber\\
		&=\gamma^{n}M(P_{\pi_{\bar{c}}}C_\pi)^{n-1}(P_{\pi_{\bar{\rho}}}D_\pi) +\sum_{i=1}^{n-1}\gamma^{i}M(P_{\pi_{\bar{c}}}C_\pi)^{i-1}(P_{\pi_{\bar{\rho}}}D_\pi-P_{\pi_{\bar{c}}}C_\pi)+(I-M)\label{eq:operator3}.
	\end{align}
	Since
	\begin{align*}
		\left[P_{\pi_{\bar{\rho}}}D_\pi-P_{\pi_{\bar{c}}}C_\pi\right]((s,a),(s',a'))=P_a(s,s')\left(\min(\bar{\rho}\pi_b(a'|s'),\pi(a'|s'))-\min(\bar{c}\pi_b(a'|s'),\pi(a'|s'))\right)\geq 0
	\end{align*}
	for any $(s,a)$ and $(s',a')$,
	the matrix $A$ has non-negative entries. Therefore, we have 
	\begin{align*}
		\|A\|_\infty&=\|A\bm{1}\|_\infty\tag{$\bm{1}=(1,1,...,1)^\top$}\\
		&=\left\|\bm{1}-\sum_{i=0}^{n-1}M(\gamma P_{\pi_{\bar{c}}}C_\pi)^{i}(I-\gamma P_{\pi_{\bar{\rho}}}D_\pi)\bm{1}\right\|_{\infty}\\
		&\leq 1-M_{\min}\sum_{i=0}^{n-1}(\gamma C_{\min})^{i}(1-\gamma)\\
		&= 1-\frac{M_{\min}(1-\gamma)(1-(\gamma C_{\min})^{n})}{1-\gamma C_{\min}},
	\end{align*}
	where in the first inequality we used $C_{\min} \bm{1}\leq C_\pi \bm{1}\leq D_\pi \bm{1}\leq \bm{1}$ (component-wise). It follows that $\mathcal{T}_e(\cdot,\pi)$ is a contraction mapping with respect to $\|\cdot\|_\infty$, with contraction factor
	\begin{align*}
		\gamma_c=1-\frac{M_{\min}(1-\gamma)(1-(\gamma C_{\min})^{n})}{1-\gamma C_{\min}}.
	\end{align*}
	\item 
	The existence and uniqueness of the fixed-point of $\mathcal{T}_e(\cdot,\pi)$ follows from Banach fixed-point theorem \cite{banach1922operations}. To characterize the fixed-point, it is enough to show the modified Bellman's equation
	\begin{align*}
	    R+\gamma P_{\pi_{\bar{\rho}}} D_\pi Q-Q=0
	\end{align*}
	has a unique solution, i.e., the matrix $I-\gamma P_{\pi_{\bar{\rho}}} D_\pi$ is invertible. This is followed from
	\begin{align*}
	    \|P_{\pi_{\bar{\rho}}} D_\pi\|_\infty=\|P_{\pi_{\bar{\rho}}} D_\pi\bm{1}\|_\infty\leq \|P_{\pi_{\bar{\rho}}} \bm{1}\|_\infty=1.
	\end{align*}
\end{enumerate}
\end{enumerate}

\subsection{Proof of Lemma \ref{le:bias}}\label{pf:le:bias}
\begin{enumerate}[(1)]
    \item We begin with the Bellman's equation for $Q^\pi$: $Q^\pi=R+\gamma P_\pi Q^\pi$, and the modified Bellman's equation for $Q^{\bar{\rho},\pi}$: $Q^{\bar{\rho},\pi}=R+\gamma P_{\pi_{\bar{\rho}}} D_\pi Q^{\bar{\rho},\pi}$. Take the difference between these two equations and we obtain:
\begin{align*}
	Q^{\bar{\rho},\pi}-Q^\pi&=\gamma P_{\pi_{\bar{\rho}}} D_\pi Q^{\bar{\rho},\pi}-\gamma P_\pi Q^\pi\\
	&=\gamma P_{\pi_{\bar{\rho}}} D_\pi Q^{\bar{\rho},\pi}-\gamma P_{\pi_{\bar{\rho}}}D_\pi Q^\pi+\gamma P_{\pi_{\bar{\rho}}}D_\pi Q^\pi-\gamma P_\pi Q^\pi\\
	&=\gamma P_{\pi_{\bar{\rho}}} D_\pi(Q^{\bar{\rho},\pi}-Q^\pi)+\gamma(P_{\pi_{\bar{\rho}}}D_\pi-P_\pi)Q^\pi.
\end{align*}
Therefore, we have
\begin{align*}
	\left\|Q^{\bar{\rho},\pi}-Q^\pi\right\|_\infty&=\left\|(I-\gamma P_{\pi_{\bar{\rho}}} D_\pi)^{-1}\gamma(P_{\pi_{\bar{\rho}}}D_\pi-P_\pi)Q^\pi\right\|_\infty\\
	&\leq \gamma\left\|(I-\gamma P_{\pi_{\bar{\rho}}} D_\pi)^{-1}\right\|_\infty\left\|P_{\pi_{\bar{\rho}}}D_\pi-P_\pi\right\|_\infty\left\|Q^\pi\right\|_\infty\\
	&\leq \frac{1}{1-\gamma}\left\|(I-\gamma P_{\pi_{\bar{\rho}}} D_\pi)^{-1}\right\|_\infty\left\|P_{\pi_{\bar{\rho}}}D_\pi-P_\pi\right\|_\infty.
\end{align*}
Since 
\begin{align*}
	[P_{\pi_{\bar{\rho}}}D_\pi-P_\pi]((s,a),(s',a'))&=P_a(s,s')(\min(\bar{\rho}\pi_b(a'|s'),\pi(a'|s'))-\pi(a'|s')) \\
	&=P_a(s,s')\min(\bar{\rho}\pi_b(a'|s')-\pi(a'|s'),0)\\
	&=-P_a(s,s')\max(\pi(a'|s')-\bar{\rho}\pi_b(a'|s'),0)\\
	&\leq 0,
\end{align*}
we have 
\begin{align*}
	\left\|P_{\pi_{\bar{\rho}}}D_\pi-P_\pi\right\|_\infty=\left\|(P_\pi-P_{\pi_{\bar{\rho}}}D_\pi)\bm{1}\right\|_\infty\leq \max_{(s,a)}\max(\pi(a|s)-\bar{\rho}\pi_b(a|s),0).
\end{align*}
As for the term $\left\|(I-\gamma P_{\pi_{\bar{\rho}}} D_\pi)^{-1}\right\|_\infty$, note that for any invertible matrix $G$ we have
\begin{align*}
    \left\|G^{-1}\right\|_\infty&=\max_{x\neq 0}\frac{\left\|G^{-1}x\right\|_\infty}{\|x\|_\infty}\\
    &=\max_{y\neq 0}\frac{\|y\|_\infty}{\|Gy\|_\infty}\tag{Change of variable}\\
    &=\max_{y:\|y\|_\infty=1}\frac{1}{\|Gy\|_\infty}\\
    &=\frac{1}{\min_{y:\|y\|_\infty=1}\|Gy\|_\infty}.
\end{align*}
Therefore, we obtain
\begin{align*}
    \|(I-\gamma P_{\pi_{\bar{\rho}}} D_\pi)^{-1}\|_\infty&=\frac{1}{\min_{y:\|y\|_\infty=1}\|(I-\gamma P_{\pi_{\bar{\rho}}} D_\pi)y\|_\infty}\\
    &\leq \frac{1}{1-\gamma\max_{y:\|y\|_\infty=1}\| P_{\pi_{\bar{\rho}}} D_\pi y\|_\infty}\\
    &\leq \frac{1}{1-\gamma}.
\end{align*}

It follows that 
\begin{align*}
	\|Q^{\bar{\rho},\pi}-Q^\pi\|_\infty&\leq\frac{1}{1-\gamma}\|(I-\gamma P_{\pi_{\bar{\rho}}} D_\pi)^{-1}\|_\infty\|P_{\pi_{\bar{\rho}}}D_\pi-P_\pi\|_\infty\\
	&\leq  \frac{1}{(1-\gamma)^2}\max_{(s,a)}\max(\pi(a|s)-\bar{\rho}\pi_b(a|s),0).
\end{align*}
\item
Similarly, we have
\begin{align*}
    \|Q^{\bar{\rho},\pi}\|_\infty=\|(I-\gamma P_{\pi_{\bar{\rho}}}D_\pi)^{-1}R\|_\infty\leq \|(I-\gamma P_{\pi_{\bar{\rho}}}D_\pi)^{-1}\|_\infty\|R\|_\infty\leq \frac{1}{1-\gamma}.
\end{align*}
\end{enumerate}

\subsection{Proof of Theorem \ref{thm:Q-trace}}\label{pf:thm:Q-trace}
We begin by restating Theorem \ref{thm:Q-trace} in full details:
\begin{theorem}\label{thm:Q-trace-detail}
Consider $Q_k$ of Algorithm \ref{alg:Q-trace}. Suppose Assumption \ref{as:MC} is satisfied and the constant stepsize $\alpha$ is chosen such that $\alpha(\tau_\alpha+n+1)\leq \min\left(\frac{1}{12(\bar{\rho}+1)f(\bar{c},\gamma)},\frac{(1-\gamma_c)^2}{8208(\bar{\rho}+1)^2f(\bar{c},\gamma)^2\log(|\mathcal{S}||\mathcal{A}|)}\right)$. Then we have for all $k\geq \tau_\alpha+n+1$:
\begin{align*}
     \mathbb{E}[\|Q_k-Q^{\bar{\rho},\pi}\|_\infty^2]
     \leq \;&3\left(\|Q_0-Q^{\bar{\rho},\pi}\|_\infty+\|Q^{\bar{\rho},\pi}\|_\infty+1\right)^2\left(1-\frac{1-\gamma_c}{2}\alpha\right)^{k-(\tau_\alpha+n+1)}\\
     &+\frac{8208e\log (|\mathcal{S}||\mathcal{A}|)}{(1-\gamma_c)^2}(\bar{\rho}+1)^2f(\bar{c},\gamma)^2 (\|Q^{\bar{\rho},\pi}\|_\infty+1)^2\alpha (\tau_\alpha+n+1).
\end{align*}
\end{theorem}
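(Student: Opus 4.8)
The plan is to recognize the $Q$-trace recursion \eqref{alg:Q-trace-remodel} as a contractive stochastic approximation (SA) scheme driven by Markovian noise, and then to invoke the general finite-sample bound of \cite{chen2021finite} after verifying its hypotheses using the constants supplied by Proposition \ref{prop:Q-trace-property}. The recursion has the form $Q_{k+1} = Q_k + \alpha(\mathcal{T}(Q_k,\pi,X_k) - Q_k)$, whose expected operator $\mathcal{T}_e(\cdot,\pi)$ is, by Proposition \ref{prop:Q-trace-property}(3)(b), an $\|\cdot\|_\infty$-contraction with factor $\gamma_c$ and unique fixed point $Q^{\bar{\rho},\pi}$. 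Here $\pi$ is fixed throughout (this is the critic sub-problem), so no time-varying-policy complication arises yet.

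To apply the generic bound I would verify three ingredients. First, contraction of the mean operator: directly from Proposition \ref{prop:Q-trace-property}(3)(b), with the important feature that $\gamma_c$ does not depend on $\pi$. Second, an affine-growth and Lipschitz bound on the random operator: Proposition \ref{prop:Q-trace-property}(1) gives $\|\mathcal{T}(Q_1,\pi,x) - \mathcal{T}(Q_2,\pi,x)\|_\infty \leq 2(\bar{\rho}+1)f(\bar{c},\gamma)\|Q_1-Q_2\|_\infty$ and, via the triangle inequality, $\|\mathcal{T}(Q,\pi,x)\|_\infty \leq 2f(\bar{c},\gamma)((\bar{\rho}+1)\|Q\|_\infty+1)$, so the effective Lipschitz constant is $L := 2(\bar{\rho}+1)f(\bar{c},\gamma)$. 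Third, geometric mixing: Proposition \ref{prop:Q-trace-property}(2) shows $\{X_k\}$ mixes geometrically with mixing time controlled by $\tau_\alpha$, which is precisely the Markovian-noise hypothesis the theorem requires.

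With these three ingredients, the contractive-SA theorem of \cite{chen2021finite} yields a two-term bound: a geometrically decaying transient term of the shape $(1-\tfrac{1-\gamma_c}{2}\alpha)^{k-(\tau_\alpha+n+1)}$ multiplying the squared initial gap $(\|Q_0-Q^{\bar{\rho},\pi}\|_\infty + \|Q^{\bar{\rho},\pi}\|_\infty + 1)^2$, plus a steady-state variance term scaling as $\frac{L^2}{(1-\gamma_c)^2}(\|Q^{\bar{\rho},\pi}\|_\infty+1)^2\,\alpha(\tau_\alpha+n+1)$. Because the bound is measured in the non-Euclidean $\|\cdot\|_\infty$ norm, the generalized Moreau-envelope smoothing used in \cite{chen2021finite} contributes a factor equal to the logarithm of the ambient dimension $|\mathcal{S}||\mathcal{A}|$; this is the origin of the $\log(|\mathcal{S}||\mathcal{A}|)$ in the variance term. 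The two prescribed upper bounds on $\alpha(\tau_\alpha+n+1)$ are exactly the smallness conditions demanded by that theorem: the first, $\tfrac{1}{12(\bar{\rho}+1)f(\bar{c},\gamma)}=\tfrac{1}{6L}$, keeps $\alpha L$ small enough that the linear drift from the contraction dominates the higher-order noise terms, while the second (carrying the $\log(|\mathcal{S}||\mathcal{A}|)$ and $(1-\gamma_c)^2$) is the compatibility condition for the smoothing parameter of the Moreau envelope.

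To reach the displayed form of Theorem \ref{thm:Q-trace}, I would substitute $L = 2(\bar{\rho}+1)f(\bar{c},\gamma)$ into the variance term, and for the clean statement specialize $Q_0 = \bm{0}$ and invoke Lemma \ref{le:bias}(2), $\|Q^{\bar{\rho},\pi}\|_\infty \leq 1/(1-\gamma)$, to replace both the initial-gap factor and $(\|Q^{\bar{\rho},\pi}\|_\infty+1)^2$ by $(1-\gamma)^{-2}$ up to numerical constants, producing the terms $T_1$ and $T_2$. I expect the main obstacle to be bookkeeping rather than conceptual: one must propagate the numerical constants through the generic bound so that the hypotheses on $\alpha(\tau_\alpha+n+1)$ reproduce the stated constants $12$ and $8208$, and one must verify at each step that $\gamma_c$, $L$, and the mixing time are genuinely independent of the target policy $\pi$, since this policy-independence is exactly what later permits the bound to be used uniformly inside the actor-critic loop.
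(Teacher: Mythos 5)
Your proposal matches the paper's proof essentially step for step: the paper also casts the $Q$-trace recursion as a contractive stochastic approximation with Markovian noise, verifies the three hypotheses (mixing, Lipschitz/affine growth of the random operator with constant $A_1=2(\bar{\rho}+1)f(\bar{c},\gamma)$ and $B_1=f(\bar{c},\gamma)$, and $\gamma_c$-contraction of the expected operator) via Proposition \ref{prop:Q-trace-property}, and then invokes Theorem 2.1 of \cite{chen2021finite}, with the constants $12$ and $8208$ arising from $A_1+1\leq 3(\bar{\rho}+1)f(\bar{c},\gamma)$ exactly as you anticipate. The subsequent specialization $Q_0=\bm{0}$ and $\|Q^{\bar{\rho},\pi}\|_\infty\leq 1/(1-\gamma)$ to obtain the clean form is likewise identical.
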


\begin{proof}[Proof of Theorem \ref{thm:Q-trace-detail}]
To prove Theorem \ref{thm:Q-trace-detail}, we will apply the results in \cite{chen2021finite}. For self-containedness, we here restate Theorem 2.1 of \cite{chen2021finite} in the following. 

\begin{theorem}[Theorem 2.1 in \cite{chen2021finite}]\label{thm:chen2021}
Consider $\{x_k\}$ generated by the following stochastic approximation algorithm: $x_{k+1}=x_k+\epsilon (F(x_k,Y_k)-x_k)$. Suppose that
\begin{enumerate}[(1)]
    \item The random process $\{Y_k\}$ is a Markov chain with finite state-space $\mathcal{Y}$. Moreover, $\{Y_k\}$ has a unique stationary distribution $\mu_Y$ and there exist $C_1>0$ and $u_1\in (0,1)$ such that $\max_{y\in\mathcal{Y}}\|P^k(y,\cdot)-\mu_Y(\cdot)\|_{\text{TV}}\leq C_1 u_1^k$ for all $k\geq 0$.
    \item The operator $F:\mathbb{R}^d\times \mathcal{Y}\mapsto\mathbb{R}^d$ satisfies $\|F(x_1,y)-F(x_2,y)\|_\infty\leq A_1\|x_1-x_2\|_\infty$ and $\|F(\bm{0},y)\|_\infty\leq B_1$ for any $x_1,x_2\in\mathbb{R}^d$ and $y\in\mathcal{Y}$.
    \item The expected operator $\bar{F}:\mathbb{R}^d\mapsto\mathbb{R}^d$ defined by $\bar{F}(x)=\mathbb{E}_{Y\sim \mu_Y}[F(x,Y)]$ is a $\gamma_c'$ -- contraction mapping with respect to  $\|\cdot\|_\infty$. Denote the unique fixed-point of $\bar{F}(\cdot)$ by $x^*$. 
    \item The constant stepsize $\epsilon$ is chosen such that $\epsilon t_\epsilon\leq \min \left(\frac{1}{4(A_1+1)},\frac{(1-\gamma_c')^2}{912 (A_1+1)^2\log(d)}\right)$, where $t_\epsilon=\min\{k\geq 0\;:\;\max_{y\in\mathcal{Y}}\|P^k(y,\cdot)-\mu_Y(\cdot)\|_{\text{TV}}\leq \epsilon\}$.
\end{enumerate}
Then the following inequality holds for all $k\geq t_\epsilon$:
\begin{align*}
    \mathbb{E}[\|x_k-x^*\|_\infty^2]\leq& 3\left(\|x_0-x^*\|_\infty+\|x_0\|_\infty+\frac{B_1}{A_1+1}\right)^2\left(1-\frac{1-\gamma_c'}{2}\epsilon\right)^{k-t_\epsilon}\\
    &+\frac{912e\log (d)}{(1-\gamma_c')^2} ((A_1+1)\|x^*\|_\infty+B_1)^2\epsilon t_\epsilon.
\end{align*}
\end{theorem}

Proposition \ref{prop:Q-trace-property} enables us to apply Theorem \ref{thm:chen2021} to the $Q$-trace algorithm. Therefore, when the constant stepsize $\alpha$ is chosen such that $\alpha (\tau_\alpha+n+1)\leq \min\left(\frac{1}{12(\bar{\rho}+1)f(\bar{c},\gamma)},\frac{(1-\gamma_c)^2}{8208(\bar{\rho}+1)^2f(\bar{c},\gamma)^2\log(|\mathcal{S}||\mathcal{A}|)}\right)$ (which is always possible since $\alpha (\tau_\alpha+n+1)=\mathcal{O}(\alpha \log(1/\alpha))\rightarrow 0$ as $\alpha \rightarrow 0$), we have for all $k\geq \tau_\alpha+n+1$:
\begin{align*}
     \mathbb{E}[\|Q_k-Q^{\bar{\rho},\pi}\|_\infty^2]
     \leq \;&3\left(\|Q_0-Q^{\bar{\rho},\pi}\|_\infty+\|Q^{\bar{\rho},\pi}\|_\infty+1\right)^2\left(1-\frac{1-\gamma_c}{2}\alpha\right)^{k-(\tau_\alpha+n+1)}\\
     &+\frac{8208e\log (|\mathcal{S}||\mathcal{A}|)}{(1-\gamma_c)^2}(\bar{\rho}+1)^2f(\bar{c},\gamma)^2 (\|Q^{\bar{\rho},\pi}\|_\infty+1)^2\alpha (\tau_\alpha+n+1).
\end{align*}
\end{proof}

To go from Theorem \ref{thm:Q-trace-detail} to Theorem \ref{thm:Q-trace}, notice that $Q_0=\bm{0}$ in the $Q$-trace algorithm \ref{alg:Q-trace}, and $\|Q^{\bar{\rho},\pi}\|_\infty\leq \frac{1}{1-\gamma}$ for any $\bar{\rho}\geq 1$ and $\pi\in \Pi$ (Lemma \ref{le:bias}). Therefore, we have from Theorem \ref{thm:Q-trace-detail} that for all $k\geq \tau_\alpha+n+1$:
\begin{align*}
   \mathbb{E}[\|Q_k-Q^{\bar{\rho},\pi}\|_\infty^2]
     \leq \;&3\left(\|Q_0-Q^{\bar{\rho},\pi}\|_\infty+\|Q^{\bar{\rho},\pi}\|_\infty+1\right)^2\left(1-\frac{1-\gamma_c}{2}\alpha\right)^{k-(\tau_\alpha+n+1)}\\
     &+\frac{8208e\log (|\mathcal{S}||\mathcal{A}|)}{(1-\gamma_c)^2}(\bar{\rho}+1)^2f(\bar{c},\gamma)^2 (\|Q^{\bar{\rho},\pi}\|_\infty+1)^2\alpha (\tau_\alpha+n+1)\\
     =\;&3\left(2\|Q^{\bar{\rho},\pi}\|_\infty+1\right)^2\left(1-\frac{1-\gamma_c}{2}\alpha\right)^{k-(\tau_\alpha+n+1)}\\
     &+\frac{8208e\log (|\mathcal{S}||\mathcal{A}|)}{(1-\gamma_c)^2}(\bar{\rho}+1)^2f(\bar{c},\gamma)^2 (\|Q^{\bar{\rho},\pi}\|_\infty+1)^2\alpha (\tau_\alpha+n+1)\tag{$Q_0=\bm{0}$}\\
     \leq \;&\frac{27}{(1-\gamma)^2}\left(1-\frac{1-\gamma_c}{2}\alpha\right)^{k-(\tau_\alpha+n+1)}\\
     &+\frac{32832e\log (|\mathcal{S}||\mathcal{A}|)}{(1-\gamma_c)^2(1-\gamma)^2}(\bar{\rho}+1)^2f(\bar{c},\gamma)^2 \alpha (\tau_\alpha+n+1)\tag{$\|Q^{\bar{\rho},\pi}\|_\infty\leq \frac{1}{1-\gamma}$}\\
     =\;&\frac{c_1}{(1-\gamma)^2}\left(1-\frac{1-\gamma_c}{2}\alpha\right)^{k-(\tau_\alpha+n+1)}\\
     &+\frac{c_2\log (|\mathcal{S}||\mathcal{A}|)}{(1-\gamma_c)^2(1-\gamma)^2}(\bar{\rho}+1)^2f(\bar{c},\gamma)^2 \alpha (\tau_\alpha+n+1),
\end{align*}
where $c_1=27$ and $c_2=32832e$ are numerical constants.

\subsection{Proof of Corollary \ref{co:critic}}\label{pf:co:critic}
Under the same condition of Theorem  \ref{thm:Q-trace}, we have for all $k\geq \tau_\alpha+n+1$:
\begin{align*}
    \mathbb{E}[\|Q_k-Q^{\pi}\|_\infty]&\leq \mathbb{E}[\|Q_k-Q^{\bar{\rho},\pi}\|_\infty]+\|Q^{\bar{\rho},\pi}-Q^{\pi}\|_\infty\tag{Triangle inequality}\\
    &\leq \mathbb{E}[\|Q_k-Q^{\bar{\rho},\pi}\|_\infty]+\frac{\max(0,1-\bar{\rho}\min_{s,a}\pi_b(a|s))}{(1-\gamma)^2}\tag{Lemma \ref{le:bias}}\\
    &\leq  \left(\mathbb{E}[\|Q_k-Q^{\bar{\rho},\pi}\|_\infty^2]\right)^{1/2}+\frac{\max(0,1-\bar{\rho}\min_{s,a}\pi_b(a|s))}{(1-\gamma)^2}\tag{Jensen's inequality}\\
    &\leq (T_1+T_2)^{1/2}+\frac{\max(0,1-\bar{\rho}\min_{s,a}\pi_b(a|s))}{(1-\gamma)^2}\tag{Theorem \ref{thm:Q-trace}}\\
    &\leq \sqrt{T_1}+\sqrt{T_2}+\frac{\max(0,1-\bar{\rho}\min_{s,a}\pi_b(a|s))}{(1-\gamma)^2},\tag{$a^2+b^2\leq (a+b)^2$ for any $a,b\geq 0$}
\end{align*}
where $T_1$ and $T_2$ are given in Theorem \ref{thm:Q-trace}.

\section{Off-Policy Natural Actor-Critic Algorithm}

\subsection{Proof of Proposition \ref{prop:actor}}\label{pf:prop:actor}
To prove this proposition, it is more convenient to write the update equation for $\pi_{t}$ in Algorithm \ref{alg:off_policy_NPG} as
\begin{align*}
	\pi_{t+1}(a|s)=\pi_t(a|s)\frac{\exp(\beta( Q_{t+1}(s,a)-V^{\pi_t}(s)))}{\sum_{a'\in\mathcal{A}}\pi_t(a'|s)\exp(\beta(Q_{t+1}(s,a')-V^{\pi_t}(s)))}
\end{align*}
for all $(s,a)$. Denote $Z_{t}(s)=\sum_{a\in\mathcal{A}}\pi_t(a|s)\exp(\beta(Q_{t+1}(s,a)-V^{\pi_t}(s)))$. We first present a sequence of lemmas. The proofs are presented in Appendices \ref{pf:le:logzt}, \ref{pf:le:Vmu}, and \ref{pf:le:Vnu}. Throughout the paper, given an initial distribution $\mu$, we denote $d^t\equiv d_\mu^{\pi_t}$ and $d^*\equiv d_\mu^{\pi^*}$, where we omit $\mu$ for the ease of notation. 

\begin{lemma}\label{le:logzt}
The following inequality holds for all $t\geq 0$ and $s\in\mathcal{S}$:
\begin{align*}
	\log(Z_{t}(s))\geq\beta\sum_{a\in\mathcal{A}}\pi_t(a|s)(Q_{t+1}(s,a)-Q^{\pi_t}(s,a)).
\end{align*}
\end{lemma}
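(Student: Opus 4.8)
The plan is to prove this inequality by a single application of Jensen's inequality, exploiting the fact that the weights $\{\pi_t(a|s)\}_{a\in\mathcal{A}}$ form a probability distribution over $\mathcal{A}$ (for each fixed $s$) and that $\log$ is concave. Writing out the definition,
\begin{align*}
    \log(Z_t(s))=\log\left(\sum_{a\in\mathcal{A}}\pi_t(a|s)\exp\big(\beta(Q_{t+1}(s,a)-V^{\pi_t}(s))\big)\right),
\end{align*}
I would view the quantity inside the logarithm as the expectation $\mathbb{E}_{a\sim\pi_t(\cdot|s)}[\exp(\beta(Q_{t+1}(s,a)-V^{\pi_t}(s)))]$. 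Since $\log$ is concave, Jensen's inequality gives $\log(\mathbb{E}[X])\geq \mathbb{E}[\log(X)]$, so that
\begin{align*}
    \log(Z_t(s))\geq \sum_{a\in\mathcal{A}}\pi_t(a|s)\,\log\Big(\exp\big(\beta(Q_{t+1}(s,a)-V^{\pi_t}(s))\big)\Big)=\beta\sum_{a\in\mathcal{A}}\pi_t(a|s)\big(Q_{t+1}(s,a)-V^{\pi_t}(s)\big).
\end{align*}

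To finish, I would use the standard identity relating the value function and the $Q$-function, namely $V^{\pi_t}(s)=\sum_{a\in\mathcal{A}}\pi_t(a|s)Q^{\pi_t}(s,a)$, together with the normalization $\sum_{a\in\mathcal{A}}\pi_t(a|s)=1$. Substituting these into the right-hand side above yields
\begin{align*}
    \beta\sum_{a\in\mathcal{A}}\pi_t(a|s)Q_{t+1}(s,a)-\beta V^{\pi_t}(s)=\beta\sum_{a\in\mathcal{A}}\pi_t(a|s)\big(Q_{t+1}(s,a)-Q^{\pi_t}(s,a)\big),
\end{align*}
which is exactly the claimed bound.

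This lemma is essentially immediate and I do not anticipate any real obstacle; the only point requiring a moment of care is the direction of Jensen's inequality (concavity of $\log$ rather than convexity), and ensuring the normalization $\sum_a\pi_t(a|s)=1$ is invoked to cancel the $V^{\pi_t}(s)$ term cleanly. The purpose of this bound is structural: it lower-bounds the log-normalizer of the actor update by the inner product of the current policy with the critic error $Q_{t+1}-Q^{\pi_t}$, which is precisely the error term that will later be controlled via Corollary \ref{co:critic} when establishing the performance bound in Proposition \ref{prop:actor}.
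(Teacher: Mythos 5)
Your proof is correct and is essentially identical to the paper's own argument: both apply Jensen's inequality to the concave logarithm with weights $\pi_t(\cdot|s)$ and then use the identity $V^{\pi_t}(s)=\sum_{a}\pi_t(a|s)Q^{\pi_t}(s,a)$ to rewrite the resulting lower bound. No gaps.
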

\begin{lemma}\label{le:Vmu}
Consider the iterates $\{\pi_t\}$ in Algorithm \ref{alg:off_policy_NPG}. The following inequality holds for any starting distribution $\mu$:
\begin{align*}
	V^{\pi_{t+1}}(\mu)-V^{\pi_t}(\mu)\geq\;&\frac{1}{1-\gamma}\mathbb{E}_{s\sim d^{t+1}}\sum_{a\in\mathcal{A}}(\pi_t(a|s)-\pi_{t+1}(a|s))(Q_{t+1}(s,a)-Q^{\pi_t}(s,a))\\
	&-\mathbb{E}_{s\sim \mu}\sum_{a\in\mathcal{A}}\pi_t(a|s)(Q_{t+1}(s,a)-Q^{\pi_t}(s,a))+\frac{1}{\beta}\mathbb{E}_{s\sim \mu}\log Z_{t}(s).
\end{align*}
\end{lemma}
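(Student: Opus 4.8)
The plan is to start from the performance difference lemma (Kakade--Langford), which states that for any initial distribution $\mu$,
\[
V^{\pi_{t+1}}(\mu) - V^{\pi_t}(\mu) = \frac{1}{1-\gamma}\mathbb{E}_{s\sim d^{t+1}}\sum_{a\in\mathcal{A}}\pi_{t+1}(a|s)\,A^{\pi_t}(s,a),
\]
where $A^{\pi_t}(s,a) = Q^{\pi_t}(s,a) - V^{\pi_t}(s)$ is the advantage function. This identity is exact, so no approximation enters yet; the remainder of the argument is purely a matter of rewriting its right-hand side into the estimated quantities the actor actually uses and tracking the resulting critic-error terms.

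Next I would split the advantage into the part the algorithm observes and the critic error. Writing $\delta_t(s,a):=Q_{t+1}(s,a)-Q^{\pi_t}(s,a)$, I have $A^{\pi_t}(s,a) = (Q_{t+1}(s,a)-V^{\pi_t}(s)) - \delta_t(s,a)$. For the first piece I use the exponential form of the actor update to write $Q_{t+1}(s,a)-V^{\pi_t}(s) = \frac{1}{\beta}(\log\frac{\pi_{t+1}(a|s)}{\pi_t(a|s)} + \log Z_t(s))$; summing against $\pi_{t+1}(\cdot|s)$ produces a $\frac{1}{\beta}\mathcal{KL}(\pi_{t+1}(\cdot|s)|\pi_t(\cdot|s))$ term, which is non-negative and may be dropped, giving $\sum_a \pi_{t+1}(a|s)(Q_{t+1}(s,a)-V^{\pi_t}(s)) \geq \frac{1}{\beta}\log Z_t(s)$. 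At this stage the bound reads
\[
V^{\pi_{t+1}}(\mu) - V^{\pi_t}(\mu) \geq \frac{1}{(1-\gamma)\beta}\mathbb{E}_{s\sim d^{t+1}}\log Z_t(s) - \frac{1}{1-\gamma}\mathbb{E}_{s\sim d^{t+1}}\sum_{a}\pi_{t+1}(a|s)\delta_t(s,a),
\]
which already supplies the $-\frac{1}{1-\gamma}\mathbb{E}_{s\sim d^{t+1}}\sum_a\pi_{t+1}\delta_t$ contribution appearing in the target.

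The key remaining step---and the one I expect to be the main obstacle---is converting the $\log Z_t$ term, currently averaged under $d^{t+1}$, into the stated average under $\mu$. Here I would exploit the elementary inequality $d^{t+1}(s) \geq (1-\gamma)\mu(s)$ for every $s$, obtained by keeping only the $j=0$ term in the series defining the discounted visitation distribution. Splitting $d^{t+1}(s) = (1-\gamma)\mu(s) + [d^{t+1}(s)-(1-\gamma)\mu(s)]$, the first piece contributes exactly $\frac{1}{\beta}\mathbb{E}_{s\sim\mu}\log Z_t(s)$, while on the second piece the coefficient $d^{t+1}(s)-(1-\gamma)\mu(s)$ is non-negative, so I may replace $\log Z_t(s)$ by its lower bound $\beta\sum_a\pi_t(a|s)\delta_t(s,a)$ from Lemma \ref{le:logzt} without reversing the inequality. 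This turns the residual $d^{t+1}$-mass of $\log Z_t$ into $\frac{1}{1-\gamma}\mathbb{E}_{s\sim d^{t+1}}\sum_a\pi_t\delta_t - \mathbb{E}_{s\sim\mu}\sum_a\pi_t\delta_t$.

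Finally I would collect terms: the two $d^{t+1}$-expectations combine into $\frac{1}{1-\gamma}\mathbb{E}_{s\sim d^{t+1}}\sum_a(\pi_t(a|s)-\pi_{t+1}(a|s))\delta_t(s,a)$, and together with the leftover $-\mathbb{E}_{s\sim\mu}\sum_a\pi_t\delta_t$ and $\frac{1}{\beta}\mathbb{E}_{s\sim\mu}\log Z_t$ terms this is precisely the claimed right-hand side. The delicate point throughout is sign-bookkeeping: each invocation of Lemma \ref{le:logzt} and of KL non-negativity must push the estimate in the lower-bound direction, and it is exactly the non-negativity of $d^{t+1}-(1-\gamma)\mu$ that licenses the one-sided substitution in the obstacle step. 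Everything else reduces to rearranging finite sums, so I would not grind through those manipulations in detail.
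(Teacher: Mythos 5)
Your proposal is correct and follows essentially the same route as the paper's proof: the performance difference lemma, the split of the advantage into the observed part $Q_{t+1}-V^{\pi_t}$ plus the critic error, dropping the non-negative KL term, and then using $d^{t+1}\geq(1-\gamma)\mu$ together with Lemma \ref{le:logzt} to move the $\log Z_t$ average from $d^{t+1}$ to $\mu$. Your measure-splitting presentation of that last step is arithmetically identical to the paper's device of subtracting $\beta\sum_a\pi_t(a|s)(Q_{t+1}(s,a)-Q^{\pi_t}(s,a))$ to make the integrand pointwise non-negative before switching distributions.
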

\begin{lemma}\label{le:Vnu}
For any starting distribution $\mu$, we have for any $t\geq 0$:
\begin{align*}
	V^{\pi^*}(\mu)-V^{\pi_t}(\mu)=\;&\frac{1}{1-\gamma}\mathbb{E}_{s\sim d^*}\sum_{a\in\mathcal{A}}\pi^*(a|s)(Q^{\pi_t}(s,a)-Q_{t+1}(s,a))+\frac{1}{(1-\gamma)\beta}\mathbb{E}_{s\sim d^*}\log(Z_{t}(s))\\
	&+\frac{1}{(1-\gamma)\beta}\mathbb{E}_{s\sim d^*}\left[\mathcal{KL}(\pi^*(\cdot|s)\mid \pi_t(\cdot|s))-\mathcal{KL}(\pi^*(\cdot|s)\mid \pi_{t+1}(\cdot|s))\right].
\end{align*}
\end{lemma}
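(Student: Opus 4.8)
The plan is to combine the performance difference lemma of Kakade and Langford with the logarithmic form of the actor update. First I would invoke the performance difference lemma, which gives the exact identity
\[
V^{\pi^*}(\mu)-V^{\pi_t}(\mu)=\frac{1}{1-\gamma}\mathbb{E}_{s\sim d^*}\sum_{a\in\mathcal{A}}\pi^*(a|s)A^{\pi_t}(s,a),
\]
where $A^{\pi_t}(s,a)=Q^{\pi_t}(s,a)-V^{\pi_t}(s)$ is the advantage function. This immediately reduces the problem to rewriting the advantage in terms of the quantities appearing in the target identity.

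Next I would exploit the actor update in Algorithm \ref{alg:off_policy_NPG}, rewritten (as in the proof of Proposition \ref{prop:actor}) with $V^{\pi_t}(s)$ subtracted inside the exponential, so that taking logarithms yields the exact relation
\[
Q_{t+1}(s,a)-V^{\pi_t}(s)=\frac{1}{\beta}\log\frac{\pi_{t+1}(a|s)}{\pi_t(a|s)}+\frac{1}{\beta}\log Z_t(s).
\]
Adding and subtracting $Q_{t+1}(s,a)$ then decomposes the advantage as $A^{\pi_t}(s,a)=(Q^{\pi_t}(s,a)-Q_{t+1}(s,a))+\frac{1}{\beta}\log\frac{\pi_{t+1}(a|s)}{\pi_t(a|s)}+\frac{1}{\beta}\log Z_t(s)$, which is an exact equality requiring no approximation.

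Substituting this decomposition back and splitting the $a$-sum into three pieces, the first piece is already the target critic-error term $\frac{1}{1-\gamma}\mathbb{E}_{s\sim d^*}\sum_a\pi^*(a|s)(Q^{\pi_t}(s,a)-Q_{t+1}(s,a))$. Since $\log Z_t(s)$ is independent of $a$ and $\sum_a\pi^*(a|s)=1$, the third piece collapses to $\frac{1}{(1-\gamma)\beta}\mathbb{E}_{s\sim d^*}\log Z_t(s)$, matching the second term of the claim. For the remaining piece I would recognize $\sum_a\pi^*(a|s)\log\frac{\pi_{t+1}(a|s)}{\pi_t(a|s)}=\mathcal{KL}(\pi^*(\cdot|s)\mid\pi_t(\cdot|s))-\mathcal{KL}(\pi^*(\cdot|s)\mid\pi_{t+1}(\cdot|s))$, where the $\log\pi^*(a|s)$ terms cancel between the two KL divergences; this produces the third term and completes the identity.

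Because every step is an exact equality, there is no estimation or bounding to control, so the proof is essentially a bookkeeping exercise. The only real care needed is in the very first step: correctly applying the performance difference lemma against the discounted visitation distribution $d^*=d_\mu^{\pi^*}$ (rather than $\mu$ itself), and ensuring the full advantage $A^{\pi_t}$ — not merely $Q^{\pi_t}$ — is carried through, so that the $V^{\pi_t}(s)$ contributions are consistently absorbed when the update relation is substituted.
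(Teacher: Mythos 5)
Your proposal is correct and follows essentially the same route as the paper's proof: the performance difference lemma against $d^*$, the logarithmic rewriting of the actor update to express $Q_{t+1}(s,a)-V^{\pi_t}(s)$ via $\log\frac{\pi_{t+1}(a|s)}{\pi_t(a|s)}+\log Z_t(s)$, and the telescoping of the log-ratio term into the difference of KL divergences. All steps are exact identities, as you note, so there is nothing further to check.
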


Now we proceed to prove Proposition \ref{prop:actor}. Letting $\mu=d^*$, we have by Lemma \ref{le:Vmu} that
\begin{align*}
    V^{\pi_{t+1}}(d^*)-V^{\pi_t}(d^*)\geq\;&\frac{1}{1-\gamma}\mathbb{E}_{s\sim d^{t+1}_{d^*}}\sum_{a\in\mathcal{A}}(\pi_t(a|s)-\pi_{t+1}(a|s))(Q_{t+1}(s,a)-Q^{\pi_t}(s,a))\\
	&-\mathbb{E}_{s\sim d^*}\sum_{a\in\mathcal{A}}\pi_t(a|s)(Q_{t+1}(s,a)-Q^{\pi_t}(s,a))+\frac{1}{\beta}\mathbb{E}_{s\sim d^*}\log Z_{t}(s).
\end{align*}
It follows that
\begin{align}\label{eq:10}
    \frac{1}{\beta}\mathbb{E}_{s\sim d^*}\log Z_{t}(s)\leq V^{\pi_{t+1}}(d^*)-V^{\pi_t}(d^*)+\frac{3}{1-\gamma}\|Q_{t+1}-Q^{\pi_t}\|_\infty.
\end{align}
Now for any $T\geq 1$, we have
\begin{align*}
	&\sum_{t=0}^{T-1}(V^{\pi^*}(\mu)-V^{\pi_t}(\mu))\\
	= \;&\frac{1}{1-\gamma}\sum_{t=0}^{T-1}\mathbb{E}_{s\sim d^*}\sum_{a\in\mathcal{A}}\pi^*(a|s)(Q^{\pi_t}(s,a)-Q_{t+1}(s,a))+\frac{1}{(1-\gamma)\beta}\sum_{t=0}^{T-1}\mathbb{E}_{s\sim d^*}\log(Z_{t}(s))\\
	&+\frac{1}{(1-\gamma)\beta}\sum_{t=0}^{T-1}\mathbb{E}_{s\sim d^*}\left[\mathcal{KL}(\pi^*(\cdot|s)\mid \pi_t(\cdot|s))-\mathcal{KL}(\pi^*(\cdot|s)\mid \pi_{t+1}(\cdot|s))\right]\\
	\leq \;&\frac{1}{1-\gamma}\sum_{t=0}^{T-1}\|Q^{\pi_t}-Q_{t+1}\|_\infty+\frac{1}{1-\gamma}\sum_{t=0}^{T-1}\left[V^{\pi_{t+1}}(d^*)-V^{\pi_t}(d^*) + \frac{3}{1-\gamma}\|Q^{\pi_t}-Q_{t+1}\|_\infty \right]\tag{Eq. (\ref{eq:10})}\\
	&+\frac{1}{(1-\gamma)\beta}\sum_{t=0}^{T-1}\mathbb{E}_{s\sim d^*}\left[\mathcal{KL}(\pi^*(\cdot|s)\mid \pi_t(\cdot|s))-\mathcal{KL}(\pi^*(\cdot|s)\mid \pi_{t+1}(\cdot|s))\right]\\
	\leq \;&\frac{1}{1-\gamma}\sum_{t=0}^{T-1}\|Q^{\pi_t}-Q_{t+1}\|_\infty+\frac{1}{1-\gamma}(V^{\pi_T}(d^*)-V^{\pi_0}(d^*))+\frac{3}{(1-\gamma)^2}\sum_{t=0}^{T-1}\|Q^{\pi_t}-Q_{t+1}\|_\infty\\
	&+\frac{1}{(1-\gamma)\beta}\mathbb{E}_{s\sim d^*}\left[\mathcal{KL}(\pi^*(\cdot|s)\mid \pi_0(\cdot|s))-\mathcal{KL}(\pi^*(\cdot|s)\mid \pi_{T}(\cdot|s))\right]\\
	\leq \;&\frac{4}{(1-\gamma)^2}\sum_{t=0}^{T-1}\|Q^{\pi_t}-Q_{t+1}\|_\infty+\frac{1}{(1-\gamma)^2}+\frac{\log(\mathcal{A})}{(1-\gamma)\beta}.
\end{align*}
Therefore, we have from the previous inequality:
\begin{align*}
	V^{\pi^*}(\mu)-\max_{0\leq t\leq T-1}\mathbb{E}\left[V^{\pi_t}(\mu)\right]&\leq V^{\pi^*}(\mu)-\frac{1}{T}\sum_{t=0}^{T-1}\mathbb{E}\left[V^{\pi_t}(\mu)\right]\\
	&\leq \frac{1}{(1-\gamma)^2T}+\frac{\log(|\mathcal{A}|)}{(1-\gamma)\beta T}+\frac{4}{(1-\gamma)^2T}\sum_{t=0}^{T-1}\mathbb{E}[\|Q^{\pi_t}-Q_{t+1}\|_\infty]\\
	&\leq \frac{\log(e|\mathcal{A}|)}{(1-\gamma)^2\beta T}+\frac{4}{(1-\gamma)^2T}\sum_{t=0}^{T-1}\mathbb{E}[\|Q^{\pi_t}-Q_{t+1}\|_\infty]
\end{align*}

\subsection{Proof of Theorem \ref{thm:main}}\label{pf:thm:main}

Our goal is to combine Proposition \ref{prop:actor} with  Corollary \ref{co:critic}. The only challenge remains is that Corollary \ref{co:critic} is stated for a fixed target policy $\pi$ while $\pi_t$ is stochastic. To overcome this difficulty, observe that $\pi_t$ is determined by $\{(S_k,A_k)\}_{0\leq k\leq t(K+n)}$ while $Q_{t+1}$ is determined by $\pi_t$ and $\{(S_k,A_k)\}_{t(K+n)\leq k\leq (t+1)(K+n)}$. Therefore, by the Markov property and the tower property of conditional expectation, we have for any $0\leq t\leq T-1$:
\begin{align}
    &\mathbb{E}[\|Q_{t+1}-Q^{\bar{\rho},\pi_t}\|_\infty]\nonumber\\
    =\;&\mathbb{E}\left[\mathbb{E}[\|Q_{t+1}-Q^{\bar{\rho},\pi_t}\|_\infty\mid S_0,A_0,...,S_{t(K+n)}.A_{t(K+n)}]\right]\nonumber\\
    \leq \;& \sqrt{T_1}+\sqrt{T_2}+\frac{\max(0,1-\bar{\rho}\min_{s,a}\pi_b(a|s))}{(1-\gamma)^2}\nonumber\\
    \leq \;& \frac{\sqrt{c_1}}{1-\gamma}\left(1-\frac{1-\gamma_c}{2}\alpha\right)^{\frac{1}{2}[K-(\tau_\alpha+n+1)]}+\frac{\sqrt{c_2}\log^{1/2} (|\mathcal{S}||\mathcal{A}|)}{(1-\gamma_c)(1-\gamma)}(\bar{\rho}+1)f(\bar{c},\gamma) [\alpha (\tau_\alpha+n+1)]^{1/2}\nonumber\\
    &+\frac{\max(0,1-\bar{\rho}\min_{s,a}\pi_b(a|s))}{(1-\gamma)^2}\nonumber\\
    \leq \;&\frac{6}{1-\gamma}\left(1-\frac{1-\gamma_c}{2}\alpha\right)^{\frac{1}{2}[K-(\tau_\alpha+n+1)]}+\frac{300\log^{1/2} (|\mathcal{S}||\mathcal{A}|)}{(1-\gamma_c)(1-\gamma)}(\bar{\rho}+1)f(\bar{c},\gamma) [\alpha (\tau_\alpha+n+1)]^{1/2}\nonumber\\
    &+\frac{\max(0,1-\bar{\rho}\min_{s,a}\pi_b(a|s))}{(1-\gamma)^2}\label{eq:71}
\end{align}
where in the last line we used $c_1=27$ and $c_2=32832e$.

Using Eq. (\ref{eq:71}) in Proposition \ref{prop:actor}, we have for all $T\geq 1$:
\begin{align*}
	&V^{\pi^*}(\mu)-\max_{0\leq t\leq T-1}\mathbb{E}\left[V^{\pi_t}(\mu)\right]\\
	\leq\;& V^{\pi^*}(\mu)-\frac{1}{T}\sum_{t=0}^{T-1}\mathbb{E}\left[V^{\pi_t}(\mu)\right]\\
	\leq\;& \frac{\log(e|\mathcal{A}|)}{(1-\gamma)^2\beta T}+\frac{4}{(1-\gamma)^2T}\sum_{t=0}^{T-1}\mathbb{E}[\|Q^{\pi_t}-Q_{t+1}\|_\infty]\\
	\leq \;&\frac{\log(e|\mathcal{A}|)}{(1-\gamma)^2\beta T}+\frac{4\max(0,(1-\bar{\rho}\min_{s,a}\pi_b(a|s)))}{(1-\gamma)^4}+\frac{24}{(1-\gamma)^3}\left(1-\frac{1-\gamma_c}{2}\alpha\right)^{\frac{1}{2}[K-(\tau_\alpha+n+1)]}\\
	&+\frac{1200\log^{1/2}(|\mathcal{S}||\mathcal{A}|)}{(1-\gamma)^3(1-\gamma_c)}(\bar{\rho}+1)f(\bar{c},\gamma)[\alpha (\tau_\alpha+n+1)]^{1/2}.
\end{align*}
This proves Theorem \ref{thm:main}.

\subsection{Proof of Corollary \ref{co:sample_complexity}}\label{pf:co:sample_complexity}

We begin with the result of Theorem \ref{thm:main} when $\bar{\rho}=\frac{1}{\min_{s,a}\pi_b(a|s)}$ (which ensures $E_3=0$):
\begin{align*}
    V^{\pi^*}(\mu)-\max_{0\leq t\leq T-1}\mathbb{E}\left[V^{\pi_{t}}(\mu)\right]
    \leq \;&\underbrace{\frac{24}{(1-\gamma)^3}\left(1-\frac{M_{\min}(1-\gamma)(1-(\gamma C_{\min})^n)}{2(1-\gamma C_{\min})}\alpha\right)^{\frac{1}{2}(K- (\tau_\alpha+n+1))}}_{E_1}\\
    &+\underbrace{\frac{1200(1-\gamma C_{\min})\log^{1/2}(|\mathcal{S}||\mathcal{A}|)}{M_{\min}(1-\gamma)^4(1-(\gamma C_{\min})^n)}(\bar{\rho}+1)f(\bar{c},\gamma)[\alpha (\tau_\alpha+n+1)]^{1/2}}_{E_2}\\
    &+\underbrace{\frac{\log(e|\mathcal{A}|)}{(1-\gamma)^2\beta T}}_{E_4},
\end{align*}
where we used the explicit expression of $\gamma_c$ in Proposition \ref{prop:Q-trace-property} (3) (b).
Our goal is to obtain an $\epsilon$-optimal policy, i.e., $V^{\pi^*}(\mu)-\max_{0\leq t\leq T-1}\mathbb{E}\left[V^{\pi_{t}}(\mu)\right]\leq \epsilon$.

We begin with the term $E_4$. It is clear that in order for $E_4\leq \epsilon$, we need to have $T=\mathcal{O}(\epsilon^{-1}(1-\gamma)^{-2})$. Now consider the term $E_2$. Since $\tau_\alpha \leq L(\log(1/\alpha)+1)$ for some $L>0$, the inequality $E_2\leq \epsilon$ implies 
\begin{align*}
    \alpha\sim \Tilde{\mathcal{O}}\left(\frac{(1-\gamma)^8 M_{\min}^2}{\bar{\rho}^2}\right)\mathcal{O}\left(\frac{\epsilon^2}{\log(1/\epsilon)}\right)
\end{align*}
Finally, using $\alpha$ in the term $E_1$ and the inequality that $e^x\geq 1+x$ for all $x\in\mathbb{R}$, then we have $E_1\leq \epsilon$ when 
\begin{align*}
    K=\mathcal{O}(\epsilon^{-2}\log^2(1/\epsilon)) \Tilde{\mathcal{O}}((1-\gamma)^{-9}M_{\min}^{-3}\bar{\rho}^2)
\end{align*}
It follows that the sample complexity is
\begin{align*}
    TK=\mathcal{O}(\epsilon^{-3}\log^2(1/\epsilon)) \Tilde{\mathcal{O}}((1-\gamma)^{-11}M_{\min}^{-3}\bar{\rho}^2)
\end{align*}
To determine the dependence on the size of the state-action space, observe that
\begin{align*}
    M_{\min}=\min_{s,a}\mu_b(s)\pi_b(a|s)\leq  \frac{1}{|\mathcal{S}||\mathcal{A}|} \quad \text{and} \quad  \bar{\rho}=\frac{1}{\min_{s,a}\pi_b(a|s)}\geq |\mathcal{A}|,
\end{align*}
where the equalities are attained when $\mu_b(s)=\frac{1}{|\mathcal{S}|}$ for all $s$ and $\pi_b(a|s)=\frac{1}{|\mathcal{A}|}$ for all $a$. Therefore, we have at least $\mathcal{O}(|\mathcal{S}|^3|\mathcal{A}|^5)$ dependence on the state and action space.

\subsection{Proof of All Technical Lemmas}

\subsubsection{Proof of Lemma \ref{le:logzt}}\label{pf:le:logzt}
Using the update rule of $\pi_{t}$ in Algorithm \ref{alg:off_policy_NPG} and we have for any $t\geq 0$ and $s\in\mathcal{S}$:
\begin{align*}
	\log(Z_{t}(s))&=\log\left[\sum_{a\in\mathcal{A}}\pi_t(a|s)\exp(\beta (Q_{t+1}(s,a)-V^{\pi_t}(s))\right]\\
	&\geq \beta\sum_{a\in\mathcal{A}}\pi_t(a|s)(Q_{t+1}(s,a)-V^{\pi_t}(s))\tag{Jensen's inequality}\\
	&=\beta\sum_{a\in\mathcal{A}}\pi_t(a|s)(Q_{t+1}(s,a)-Q^{\pi_t}(s,a)).
\end{align*}

\subsubsection{Proof of Lemma \ref{le:Vmu}}\label{pf:le:Vmu}
For any starting distribution $\mu$, we have
\begin{align*}
	V^{\pi_{t+1}}(\mu)-V^{\pi_t}(\mu)=\;&\frac{1}{1-\gamma}\mathbb{E}_{s\sim d^{t+1}}\sum_{a\in\mathcal{A}}\pi_{t+1}(a|s)A^{\pi_t}(s,a)\tag{Performance Difference Lemma}\\
	=\;&\frac{1}{1-\gamma}\mathbb{E}_{s\sim d^{t+1}}\sum_{a\in\mathcal{A}}\pi_{t+1}(a|s)(Q^{\pi_t}(s,a)-Q_{t+1}(s,a)+Q_{t+1}(s,a)-V^{\pi_t}(s))\\
	=\;&\frac{1}{1-\gamma}\mathbb{E}_{s\sim d^{t+1}}\sum_{a\in\mathcal{A}}\pi_{t+1}(a|s)(Q^{\pi_t}(s,a)-Q_{t+1}(s,a))\\
	&+\frac{1}{(1-\gamma)\beta}\mathbb{E}_{s\sim d^{t+1}}\sum_{a\in\mathcal{A}}\pi_{t+1}(a|s)\log\left(\frac{\pi_{t+1}(a|s)}{\pi_t(a|s)}Z_{t}(s)\right)\tag{Algorithm \ref{alg:off_policy_NPG}}\\
	=\;&\frac{1}{1-\gamma}\mathbb{E}_{s\sim d^{t+1}}\sum_{a\in\mathcal{A}}\pi_{t+1}(a|s)(Q^{\pi_t}(s,a)-Q_{t+1}(s,a))\\
	&+\frac{1}{(1-\gamma)\beta}\mathbb{E}_{s\sim d^{t+1}}\mathcal{KL}(\pi_{t+1}(\cdot|s)\mid \pi_t(\cdot|s))+\frac{1}{(1-\gamma)\beta}\mathbb{E}_{s\sim d^{t+1}}\log Z_{t}(s)\\
	\geq \;&\frac{1}{1-\gamma}\mathbb{E}_{s\sim d^{t+1}}\sum_{a\in\mathcal{A}}\pi_{t+1}(a|s)(Q^{\pi_t}(s,a)-Q_{t+1}(s,a))+\frac{1}{(1-\gamma)\beta}\mathbb{E}_{s\sim d^{t+1}}\log Z_{t}(s)\\
	=\;&\frac{1}{1-\gamma}\mathbb{E}_{s\sim d^{t+1}}\sum_{a\in\mathcal{A}}\pi_{t+1}(a|s)(Q^{\pi_t}(s,a)-Q_{t+1}(s,a))\\
	&+\frac{1}{(1-\gamma)\beta}\mathbb{E}_{s\sim d^{t+1}}\left[\log Z_{t}(s)-\beta\sum_{a\in\mathcal{A}}\pi_t(a|s)(Q_{t+1}(s,a)-Q^{\pi_t}(s,a))\right]\\
	&+\frac{1}{1-\gamma}\mathbb{E}_{s\sim d^{t+1}}\sum_{a\in\mathcal{A}}\pi_t(a|s)(Q_{t+1}(s,a)-Q^{\pi_t}(s,a))\\
	\geq \;&\frac{1}{1-\gamma}\mathbb{E}_{s\sim d^{t+1}}\sum_{a\in\mathcal{A}}(\pi_t(a|s)-\pi_{t+1}(a|s))(Q_{t+1}(s,a)-Q^{\pi_t}(s,a))\\
	&+\frac{1}{\beta}\mathbb{E}_{s\sim \mu}\left[\log Z_{t}(s)-\beta\sum_{a\in\mathcal{A}}\pi_t(a|s)(Q_{t+1}(s,a)-Q^{\pi_t}(s,a))\right]\tag{$d^{t+1}\geq (1-\gamma)\mu$ and Lemma \ref{le:logzt}}\\
	= \;&\frac{1}{1-\gamma}\mathbb{E}_{s\sim d^{t+1}}\sum_{a\in\mathcal{A}}(\pi_t(a|s)-\pi_{t+1}(a|s))(Q_{t+1}(s,a)-Q^{\pi_t}(s,a))\\
	&-\mathbb{E}_{s\sim \mu}\sum_{a\in\mathcal{A}}\pi_t(a|s)(Q_{t+1}(s,a)-Q^{\pi_t}(s,a))+\frac{1}{\beta}\mathbb{E}_{s\sim \mu}\log Z_{t}(s).
\end{align*}

\subsubsection{Proof of Lemma \ref{le:Vnu}}\label{pf:le:Vnu}
Using the update rule of $\pi_{t}$ in Algorithm \ref{alg:off_policy_NPG} and we have for any $t\geq 0$ and $s\in\mathcal{S}$:
\begin{align*}
	V^{\pi^*}(\mu)-V^{\pi_t}(\mu)=\;&\frac{1}{1-\gamma}\mathbb{E}_{s\sim d^*}\sum_{a\in\mathcal{A}}\pi^*(a|s)A^{\pi_t}(s,a)\tag{Performance Difference Lemma}\\
	=\;&\frac{1}{1-\gamma}\mathbb{E}_{s\sim d^*}\sum_{a\in\mathcal{A}}\pi^*(a|s)(Q^{\pi_t}(s,a)-Q_{t+1}(s,a)+Q_{t+1}(s,a)-V^{\pi_t}(s))\\
	=\;&\frac{1}{1-\gamma}\mathbb{E}_{s\sim d^*}\sum_{a\in\mathcal{A}}\pi^*(a|s)(Q^{\pi_t}(s,a)-Q_{t+1}(s,a))\\
	&+\frac{1}{(1-\gamma)\beta}\mathbb{E}_{s\sim d^*}\sum_{a\in\mathcal{A}}\pi^*(a|s)\log\left(\frac{\pi_{t+1}(a|s)}{\pi_t(a|s)}Z_{t}(s)\right)\tag{Algorithm \ref{alg:off_policy_NPG}}\\
	=\;&\frac{1}{1-\gamma}\mathbb{E}_{s\sim d^*}\sum_{a\in\mathcal{A}}\pi^*(a|s)(Q^{\pi_t}(s,a)-Q_{t+1}(s,a))+\frac{1}{(1-\gamma)\beta}\mathbb{E}_{s\sim d^*}\log(Z_{t}(s))\\
	&+\frac{1}{(1-\gamma)\beta}\mathbb{E}_{s\sim d^*}\left[\mathcal{KL}(\pi^*(\cdot|s)\mid \pi_t(\cdot|s))-\mathcal{KL}(\pi^*(\cdot|s)\mid \pi_{t+1}(\cdot|s))\right]
\end{align*}

\section{Related Literature}

\subsection{Interpretation of Convergence Rate in terms of Sample Complexity}\label{ap:literature:convergence_bound}
Suppose we have a stochastic approximation algorithm that arises in RL, which has the following convergence bound:
\begin{align}\label{eq:100}
    \text{Error}\leq \frac{1}{T}+E_0,
\end{align}
where $T$ is the number of iterations,  and $E_0$ represents certain error that cannot be eliminated asymptotically. For example, when studying TD-learning with function approximation, $E_0$ represents the approximation error, i.e., the gap between the true value function and the best value function offered by the approximating function space.

\subsubsection{Global Convergence}

Consider the case where $E_0=0$. 
In this case, sample complexity is well-defined, and it stands for the number of samples required to make the appropriately  defined  error $\epsilon$.
In the TD-learning example, this corresponds to using a tabular representation. Specifically, in view of Eq. (\ref{eq:100}), the convergence rate is $\mathcal{O}(1/T)$. Moreover, suppose every iteration requires one sample. Then to obtain $\epsilon$ accuracy, the amount of sample required is  $\mathcal{O}(\epsilon^{-1})$.

\subsubsection{Convergence in the Presence of a Bias}\label{subsubsec:c_bias}

Now consider the case where $E_0\neq 0$. 
We argue that the definition of sample complexity in unclear, and needs careful consideration. 
In the TD-learning example, this corresponds to using function approximation, which induces an unbeatable error due to the limitation of the approximating function space. A similar situation arises in off-policy NAC algorithm studied in this paper if the IS ratios are truncated to a certain level.

Suppose we apply the AM-GM inequality $\frac{1}{N}\sum_{i=1}^N x_i\geq (\prod_{i=1}^Nx_i)^{1/N}$ ($x_i\geq 0$ for all $i$ and $N\in\mathbb{Z}^+$) to the RHS of Eq. (\ref{eq:100}). Then we obtain for any $a>0$ and $N\geq 1$:
\begin{align}
    \text{Error}&\leq \frac{1}{T}+E_0\nonumber\\
    &=\left(\frac{1}{T^Na^{N-1}}\times a^{N-1}\right)^{1/N}+E_0\nonumber\\
    &\leq \frac{1}{N}\left(\frac{1}{T^N a^{N-1}}+\underbrace{a+a+\cdots+a}_{N-1}\right)+E_0\nonumber\\
    &=\frac{1}{N a^{N-1}}\frac{1}{T^N}+\left(1-\frac{1}{N}\right)a+E_0.\label{eq:101}
\end{align}
Now if we choose $a=E_0$, then the previous inequality can be written as
\begin{align}
    \text{Error}&\leq \frac{1}{N E_0^{N-1}}\frac{1}{T^n}+\left(2-\frac{1}{N}\right)E_0\label{eq:103}\\ &=\mathcal{O}\left(\frac{1}{T^N}\right)+\mathcal{O}(E_0).\label{eq:102}
\end{align}
While the derivation in \eqref{eq:103} is correct, it leads to 
the following \textit{misleading} interpretation:

\textit{We have a sample complexity of $\mathcal{O}(\epsilon^{-1/N})$ with an asymptotic error of size $\mathcal{O}(E_0)$ for any $N\geq 1$}.

Clearly, this interpretation is incorrect.
By using the AM-GM trick, we obtained a better rate of convergence, but with a worse asymptotic error. Therefore, as long as one does not have global convergence (i.e., $E_0\neq 0$), it is not entirely clear how to define sample complexity. 
One possible way out of this confusion is to define sample complexity only when the asymptotic error is exactly $E_0$ (instead of the weakened $\mathcal{O}(E_0)$). An alternate way is to define sample complexity in terms of convergence to the exact solution of a \textit{modified} problem, and then separately characterize the error between the solution of the original problem and the modified problem. This was the approach taken in the classic paper on TD with linear function approximation \cite{tsitsiklis1997analysis}.

\subsubsection{The results in \cite{xu2020improving} and \cite{xu2020non}}

Convergence of AC type algorithms was studied in \cite{xu2020improving} and \cite{xu2020non} under linear function approximation. A special case of linear function approximation is the tabular setting, where the feature vectors are chosen to be the canonical basis vectors. However, in this case, the results in \cite{xu2020improving} and \cite{xu2020non} do not guarantee global convergence because of the presence of additive constants in the error (for any choice of $\lambda$ as defined in  \cite{xu2020improving,xu2020non}).

Now, consider the case of linear function approximation. We believe that the sample complexity of $\Tilde{\mathcal{O}}(\epsilon^{-2.5})$ for AC  and $\Tilde{\mathcal{O}}(\epsilon^{-4})$ sample complexity for NAC claimed in \cite{xu2020non}  and $\Tilde{\mathcal{O}}(\epsilon^{-2})$ sample complexity for NAC claimed in  \cite{xu2020improving} are misleading because they were essentially obtained in the manner described in Section \ref{subsubsec:c_bias}. We present more details below. 

However, if one agrees with the interpretation of the 
sample complexity results in \cite{xu2020improving,xu2020non}, then our sample complexity results
can also be ``improved'' in the same sense as those in \cite{xu2020improving,xu2020non} to 
$\mathcal{O}(\epsilon^{-1/N})$ for any $N>0$ by doubling the asymptotic error from  $E_3$  to $2E_3$ (where the term $E_3$ is the truncation error given in Theorem \ref{thm:main}). This can be done by applying the AM-GM inequality described in Section \ref{subsubsec:c_bias} to the result of Theorem \ref{thm:main}. In fact, for any convergence bounds of the form (\ref{eq:100}) in the literature, one can use the same technique to obtain arbitrarily good convergence rate and sample complexity.

\paragraph{The paper \cite{xu2020improving}:}
Now we illustrate how \cite{xu2020improving} uses the AM-GM trick described above implicitly.
In Algorithm 1 (Actor-critic (AC) and natural actor-critic (NAC) online algorithms), the parameter $\lambda$ (line 19 of Algorithm 1) is introduced. In the statement of their main result (Eq. (31) of Theorem 6 in Appendix G), the parameter $\lambda$ appears both in the denominator of the $1/T$ terms (which is \textit{not} revealed in Theorem 3 of their main paper) and the numerator of the constant terms. We see that the role of the parameter $\lambda$ is essentially equivalent to the tunable constant $a$ introduced in Eq. (\ref{eq:101}) of our above derivation. Later the parameter $\lambda$ is set to be equal to $\sqrt{\zeta^{critic}_{approx}}$ (where $\zeta^{critic}_{approx}$ is the unbeatable error due to function approximation in TD-learning) so that the additive constant terms are absorbed into the $\mathcal{O}(\sqrt{\zeta^{critic}_{approx}})$ term, which eventually leads to their claim of obtaining $\Tilde{\mathcal{O}}(\epsilon^{-2})$ sample complexity of NAC. This is analogous to going from Eq. (\ref{eq:101}) to Eq. (\ref{eq:102}) of our derivation in Section \ref{subsubsec:c_bias} by setting $a=E_0$.

\paragraph{The paper \cite{xu2020non}:}
Now we illustrate how the proof in  \cite{xu2020non} is essentially equivalent to using the  AM-GM trick described above.
In Algorithm 1 (Two Time-scale AC and NAC), the parameter $\lambda$ is introduced to perform the critic update. Later in the third bullet point on the same page of Algorithm 1, the parameter $R_\theta$ is set to be $\mathcal{O}(\lambda^{-1})$.

Consider the resulting bounds in all 5 cases in step 3 of the proof of Theorem 2 (Appendix C). The parameter $\lambda$ appears in the numerator of the constant term while the parameter $R_\theta=\mathcal{O}(\lambda^{-1})$ appears quadratically in the $1/t^{1-\sigma}$ term (which is \textit{not} revealed in the statement of Theorem 2 in the main paper). This leads to the claim of $\mathcal{O}(\epsilon^{-2.5})$ sample complexity for AC with asymptotic error $\mathcal{O}(\lambda)$. We believe this is analogous to Eq. (\ref{eq:101}) of our above derivation. 

Consider the resulting bounds in all 5 cases in step 2 of the proof of Theorem 3 (Appendix D). The parameter $\lambda$ appears in the numerator of the constant term while the parameter $R_\theta=\mathcal{O}(\lambda^{-1})$ appears quadratically in the $1/t^{1-\sigma}$ term (which is \textit{not} revealed in the statement of Theorem 3 in the main paper). In Theorem 3, the parameter $\lambda$ is set to be $\mathcal{O}(\sqrt{\xi_{approx}'})$, which eventually leads the claim of $\mathcal{O}(\epsilon^{-4})$ sample complexity for NAC with asymptotic accuracy $\mathcal{O}(\sqrt{\xi_{approx}'})$. We believe this is analogous to Eqs. (\ref{eq:103}) and (\ref{eq:102}) of our above derivation.

\subsection{Single Trajectory}\label{ap:literature:trajectory}
The AC and NAC algorithms presented in \cite{xu2020non} and \cite{xu2020improving}  appear to be based on a single trajectory, at first glance. The single sample path is not from the original transition matrix $P$, but is from a modified transition matrix, $\Tilde{P}(\cdot\mid s,a)=\gamma P(\cdot\mid s,a)+(1-\gamma) \xi(\cdot)$, where $\xi(\cdot)$ is the initial distribution. Now, in order to sample from the modified transition matrix $\Tilde{P}$, one has to sample from the original matrix $P$ with probability $\gamma$ and  reset to a state sampled from $\xi(\cdot)$  with probability $(1-\gamma)$. Thus, in reality, the algorithms in  \cite{xu2020non,xu2020improving} are \textit{not} based on a single trajectory. This is made explicitly clear in \cite{wang2019neural} (Section 3.2.1. Actor Update: Sampling From Visitation Measure), where the same modified transition matrix $\Tilde{P}$ was used.

\subsection{The Issue of Exploration}\label{ap:literature:exploration}

A major issue with on-policy AC and NAC is exploration. In related literature, to establish convergence bounds, usually it requires either hard-to-satisfy assumptions to ensure exploration, or additional exploration steps which slow down the convergence rate.

\subsubsection{Hard-to-Satisfy Assumptions}
The convergence of AC type methods have been established in several previous work. Each of these results require some regularity assumptions on the underlying system. However, in the simple tabular setting, one can show that these assumptions fail to hold. In particular, Assumption 4.1 in \cite{wu2020finite}, Assumption 1 in \cite{xu2020improving, xu2020non, kumar2019sample}, and Assumption 3.3 in \cite{qiu2019finite} in the tabular setting imply the sequence of policies $\{\pi_t\}$ generated by the algorithm satisfy $\pi_t(a|s) \geq \delta > 0$, for all $s,a$ and $t$. This assumption in conjunction with the irreducibility assumption of the underlying Markov chain under all the policies, one can show that all the state and actions will be visited infinitely often as the AC algorithm proceeds. 

The above mentioned assumption means that all the elements of the policy table must attain at least positive value $\delta$ uniformly over time. However, a well known result shows that, for every MDP there always exist an optimal deterministic policy \cite{puterman1995markov}. 
In particular, one can construct MDPs with a unique deterministic optimal policy. In such examples, some of the elements of $\pi_t$ should converge to zero as the AC algorithm proceeds, and this violates the aforementioned assumptions. For more information, look at Section 4 in \cite{khodadadian2021finite} where an experimental implementation of NAC shows that $\pi_t$ indeed converges to a deterministic policy.

\subsubsection{Additional Exploration Steps}
One way of avoiding the assumption mentioned in the previous subsection is to artificially introduce additional exploration. This was done in \cite{khodadadian2021finite} where  $\epsilon$-greedy NAC was proposed under which, at each time, actions are sampled from $\epsilon$-greedy policy $\hat{\pi}_t = (1-\epsilon_t)\pi_t + \frac{\epsilon_t}{|\mathcal{A}|}$. Sampling from this policy ensures that all actions will be visited with probability at least $\frac{\epsilon_t}{|\mathcal{A}|}$, which ensures exploration of all state-action pairs. However, this sampling policy will result in a slower rate of convergence as stated in \cite{khodadadian2021finite}.

\subsection{Sample Complexity Calculation in Related Literature}\label{ap:compute-sample-complexity}

In this section, we compute the sample complexity of each related work listed in Table \ref{table: results}, based on the convergence bounds provided in the corresponding paper. We will use the same notation as was used in the corresponding paper.
\subsubsection{\cite{wang2019neural}}

\paragraph{AC, Theorem 4.7:}

In order the obtain an $\epsilon$-optimal stationary point, we need $T=\mathcal{O}(\epsilon^{-2/3})$, which implies $m=\mathcal{O}(\epsilon^{-16/3}
)$. Since $T_{TD}=\Omega(m)$, the total sample complexity $T\times T_{TD}$ is at least $\mathcal{O}(\epsilon^{-6})$.

\paragraph{NAC, Corollary 4.14:}

In order to obtain an $\epsilon$-optimal policy, we need $T=\mathcal{O}(\epsilon^{-2})$, which implies $m=\Omega(\epsilon^{-14})$. Since $T_{TD}=\Omega(m)$, the total sample complexity $T\times T_{TD}$ is at least $\mathcal{O}(\epsilon^{-14})$.

\subsubsection{\cite{kumar2019sample}}
\textbf{AC, Theorem 1:} 
The result in this paper assumes a convergence rate of $\mathcal{O}(1/k^b)$ for the critic. It was shown in \cite{srikant2019finite} that a rate of $\mathcal{O}(1/\sqrt{k})$ is achievable, and so we use this to evaluate sample complexity. 
In order to obtain $\epsilon$-close stationary point, we need $\mathcal{O}(1+2+\dots+\epsilon^{-2}) = \mathcal{O}(\epsilon^{-4})$ number of samples, which implies $\mathcal{O}(\epsilon^{-4})$ sample complexity.

\subsubsection{\cite{agarwal2019theory}}
\textbf{NAC, Corollary 6.2:} 
In order to obtain an $\epsilon$-optimal policy, we need to have $\frac{1}{1-\gamma}\frac{1}{\sqrt{T}}\leq \mathcal{O}(\epsilon)$ and $\frac{1}{(1-\gamma)^2}\frac{1}{N^{1/4}}\leq \mathcal{O}(\epsilon)$. This is equivalent to $T\geq \mathcal{O}((1-\gamma)^{-2}\epsilon^{-2})$ and $N\geq \mathcal{O}((1-\gamma)^{-8}\epsilon^{-4})$. Hence, the total sample complexity is $\frac{2TN}{1-\gamma}=\mathcal{O}((1-\gamma)^{-11}\epsilon^{-6})$. Note that although \citep[Corollary 6.2]{agarwal2019theory} is stated for the function approximation setting, the result would be the same even in the tabular setting.

\subsubsection{\cite{khodadadian2021finite}}
\textbf{NAC, Corollary 1.1:} As stated in Corollary 1.1 of this paper, to obtain an $\epsilon$ optimal policy, we have $T\geq \mathcal{O}(\epsilon^{-4})$ sample complexity. 

\subsubsection{\cite{qiu2019finite}}
\textbf{AC, Theorem 4.6:} As stated in Theorem 4.6 of this paper, in order to an $\epsilon$-optimal stationary point, we need $T \geq  \epsilon^{-2}$ number of outer loops, and in each outer loop we need $\mathcal{O}(T)$ inner loops. Hence, the total sample complexity is $T\times T = \mathcal{O}(\epsilon^{-4})$. 

\section{Experimental Results} \label{sec:exp_result_details}
\subsection{Details of the Experimental Results} \label{sec:D.1}
Figure \ref{fig:1} shows the convergence behavior of off-policy NAC \ref{alg:off_policy_NPG}. The underlying process is a MDP with 5 states and 3 actions $\{a_1, a_2, a_3\}$ and $\gamma = 0.9$. The state transition probabilities over the states are
\begin{align*}
    P_{a_1}=\begin{bmatrix}
0 & 1 & 0 & 0 & 0\\
0 & 0 & 1 & 0 & 0\\
0 & 0 & 0 & 1 & 0\\
0 & 0 & 0 & 0 & 1\\
1 & 0 & 0 & 0 & 0
\end{bmatrix}, ~~~~~~ P_{a_2}=\begin{bmatrix}
1 & 0 & 0 & 0 & 0\\
0 & 1 & 0 & 0 & 0\\
0 & 0 & 1 & 0 & 0\\
0 & 0 & 0 & 1 & 0\\
0 & 0 & 0 & 0 & 1
\end{bmatrix}, ~~~~~~ P_{a_3}=\begin{bmatrix}
0 & 0 & 0 & 0 & 1\\
1 & 0 & 0 & 0 & 0\\
0 & 1 & 0 & 0 & 0\\
0 & 0 & 1 & 0 & 0\\
0 & 0 & 0 & 1 & 0
\end{bmatrix},
\end{align*}
and the reward functions are $\mathcal{R}(s,a_1) = 1$, $\mathcal{R}(s,a_2) = 0.5$, and $\mathcal{R}(s,a_3) = 0$ for all $s\in\mathcal{S}$. In this setting, clearly the optimal policy is to take action $a_1$ in all states. In addition, the behavior policy has uniform distribution, i.e. $\pi_b(a|s) = 1/3~ \forall a,s$. The parametters of the algorithm are chosen as follows: $n=6, T=100, K=1000, \alpha=0.05, \beta = 0.1, \bar{\rho}=3, \bar{c}=1, \pi_0(a|s)=\frac{1}{3} \forall a,s$. In addition, for $Q_0$ input of the $Q$-trace, we use the previously learned $Q$ table as the input to enhance the convergence. An implementation of the code is available at \href{https://github.com/gt-coar/off_policy-NAC/blob/main/off-policy_NAC.py}{https://github.com/gt-coar/off\_policy-NAC/blob/main/off-policy\_NAC.py}. It is clear that algorithm converges in Figure \ref{fig:1}.

\subsection{Using Repeated Samples in the Critic}
Using the same setup as above, we executed Algorithm \ref{alg:off_policy_NPG} with repeated samples used for each iteration of the $Q$-trace. Figure \ref{fig:2} shows the result. It is clear that we do not have convergence in this case.

\begin{figure}[ht]
    \centering
    \includegraphics[width=0.5\linewidth]{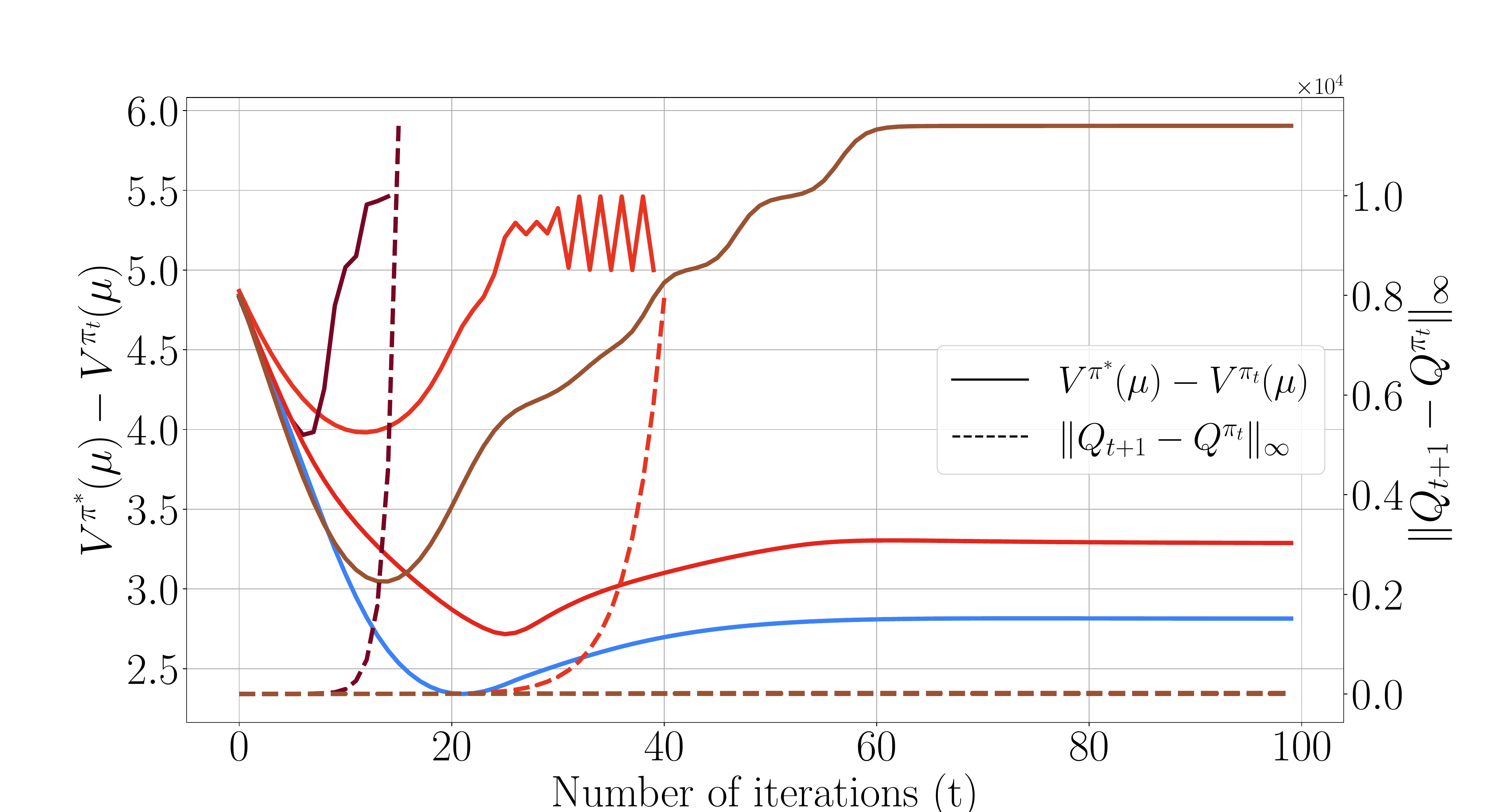}
    \caption{behavior of off-policy NAC when the critic updates are performed using a fixed number of samples. The straight lines are  $V^{\pi^*}(\mu)-V^{\pi_t}(\mu)$ values for 5 different sample paths, and the dashed lines are the corresponding critic errors $\|Q^{t+1} - Q^{\pi_t}\|_\infty$ of each sample path. It is clear that the algorithm does not converge.}
    \label{fig:2}
\end{figure}

\subsection{The Effect of the Truncation Levels}
In order to evaluate the effect of the truncation of the importance sampling in the behavior of the off-policy NAC, we run Algorithm \ref{alg:off_policy_NPG} for different levels of $\bar{\rho}$ and $\bar{c}$ for an MDP with the same setting as in section \ref{sec:D.1}. The result is shown in Figure \ref{fig:3}. In this figure, for each choice of the $\bar{\rho}$ and $\bar{c}$ we run the Algorithm \ref{alg:off_policy_NPG} for 6 number of times. The dashed lines represent the average of these 6 sample paths, and the area around the dashed lines represent the standard deviation of these 6 trajectories. It is clear that the choice of $\bar{\rho}=3, \bar{c}=1$ results in the best convergence with the lowest standard deviation. Reducing $\bar{\rho}=3$ to $\bar{\rho}=2.5$ is worsening the convergence bahaviour, and further increasing $\bar{c}=1$ to $\bar{c}=1.5$ increases the standard deviation.

\begin{figure}[h]
    \centering
    \includegraphics[width=0.4\linewidth]{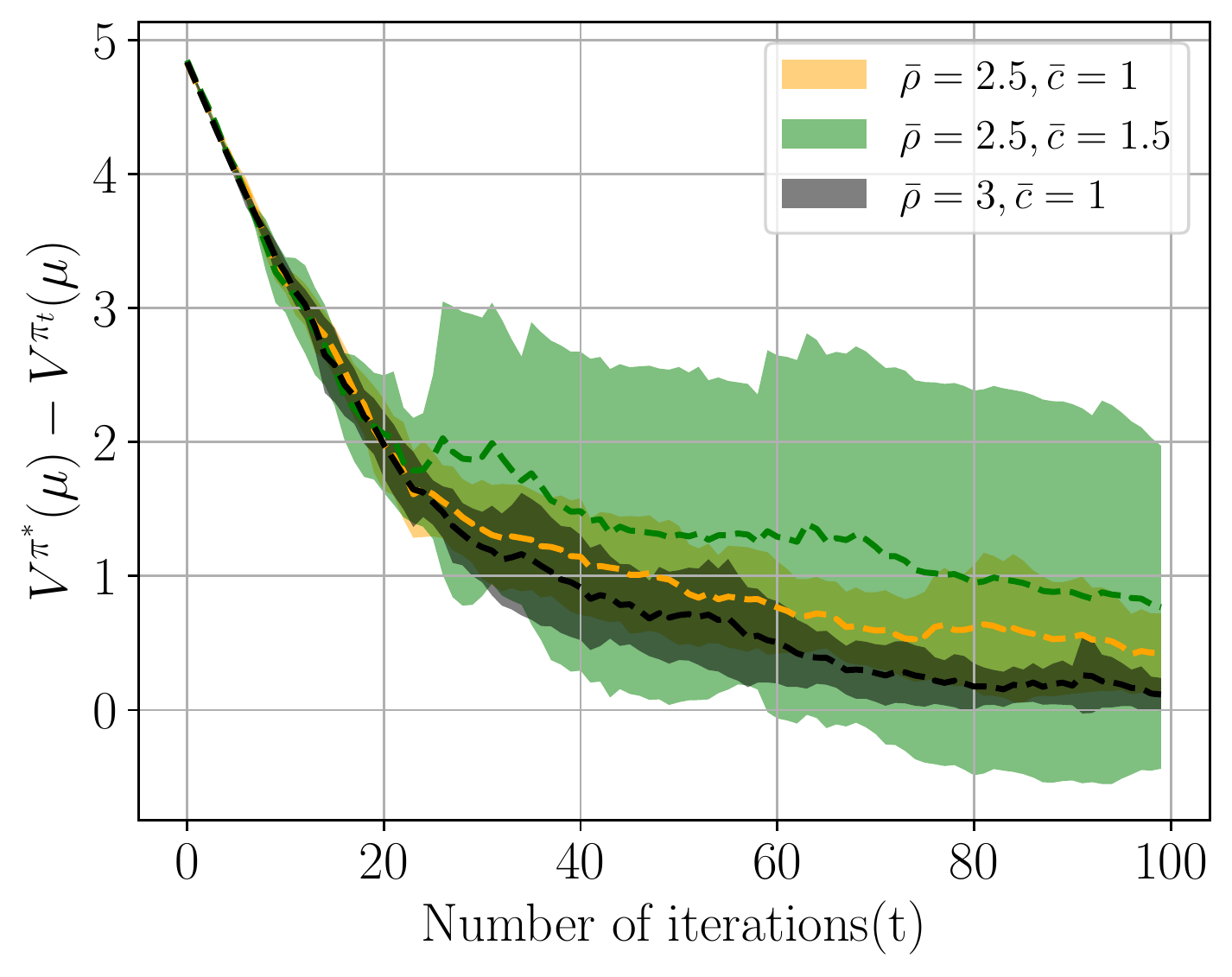}
    \caption{The convergence behavior of off-policy NAC with different levels of truncation. For each choice of the truncation level, we run Algorithm \ref{alg:off_policy_NPG} 6 times, and we plot the mean with the dashed line, and the standard deviation with the colored area. }
    \label{fig:3}
\end{figure}
\end{appendix}

\end{document}